\newif\ifextendedversion
\renewcommand{\vec}{\mathbf}
\newlist{romanenumerate*}{enumerate*}{1}
\setlist[romanenumerate*]{label=(\textit{\roman*})}
\newlist{romanenumerate}{enumerate}{1}
\setlist[romanenumerate]{label=(\textit{\roman*})}
\newtheorem{example}{Example}
\newtheorem{lemma}{Lemma}
\newtheorem{proposition}{Proposition}
\newtheorem{definition}{Definition}
\title{Efficient PAC Reinforcement Learning in Regular Decision Processes}
\author{
  Alessandro Ronca \And Giuseppe De Giacomo
  \affiliations
  DIAG, Sapienza University of Rome, Italy
  \emails
  \{ronca, degiacomo\}@diag.uniroma1.it
}
\begin{document}

\maketitle 

\begin{abstract}
  Recently \emph{regular decision processes} have been proposed as a
  well-behaved form of \emph{non-Markov decision process}.
  \emph{Regular decision processes} are characterised by a transition
  function and a reward function that depend on the whole history, though
  \emph{regularly} (as in regular languages). In practice both the transition
  and the reward functions can be seen as finite transducers.
  We study \emph{reinforcement learning} in regular decision
  processes. Our main contribution is to show that a near-optimal policy can be
  PAC-learned in polynomial time in a set of parameters that describe the
  underlying decision process. We argue that the identified set of
  parameters is minimal and it reasonably captures the difficulty of a regular
  decision process.
\end{abstract}

\section{Introduction}

The standard RL setting  \cite{suttonbarto} has long been characterised by the
Markov assumption, which is arguably limiting. 
\emph{Regular decision processes} (RDPs) have been proposed as a
formalism with the promise to dispense with the Markov assumption while
retaining good computational properties \cite{brafman2019rdp}.
Complete knowledge of an RDP allows one to compute an equivalent Markov decision
process (MDP), in a precise formal sense, and then to apply standard solution
techniques for MDPs.  This correspondence is not immediately applicable in the
standard reinforcement learning (RL) setting, where an agent has no prior
knowledge of the decision process.
Instead, one needs to learn simultaneously a model
of the dependencies on the past, and
how observations and rewards evolve \cite{brafman2019rdp}.
The possibility for an RL agent to learn such a model in the form of an
automaton has been exploited in \cite{abadi2020learning}. The work
provides empirical evidence that algorithms based on automata
learning can find near-optimal policies for certain RDPs.
However, the proposed technique requires exponential time to converge
unless only a small number of traces have high probability to be observed:
it relies on statistics for single traces that may have exponentially-low
probability, and hence require exponential time to be observed a
number of times that is sufficient to compute accurate statistics.


In this paper, for the first time to the best of our knowledge, we introduce a
technique for RL in RDPs which has formal guarantees analogous to those
studied in literature for RL in MDPs 
\cite{fiechter1994efficient,brafman2002rmax,kearns2002near,auer2002finite,kakade2003thesis,strehl2005theoretical,auer2006ucrl,strehl2009reinforcement,audibert2009exploration,jaksch2010near,szita2010mormax,lattimore2014near,dann2015sample,dann2017unifying,azar2017minimax,bai2020near,wang2020long}.

We show that RL in RDPs is feasible in \emph{polynomial time} with
respect to the \emph{probably approximately correct} (PAC) learning criterion
\cite{valiant1984theory,kearns1994introduction}. 
We present an algorithm that computes a near-optimal policy with high
confidence, in a number of steps that is polynomial in the required accuracy and
confidence, and in a set of parameters that describe the underlying RDP.
The algorithm first learns a \emph{probabilistic deterministic finite automaton}
(PDFA) that represents the underlying RDP, then it maps it to an MDP, which is
solved to compute an intermediate stationary policy, which is then turned into
the final policy by composing it with the transition function of the learned
PDFA.
This process is repeated many times, yielding a better policy as more data
becomes available, and converging in polynomial time.

Our results show that the Markov assumption can be
largely removed while maintaining polynomial-time guarantees. This is a
fundamental positive result, that should boost the confidence in finding
effective RL techniques that dispense with the Markov assumption.
Furthermore, our results should draw attention on RDPs, which have the potential
of becoming the standard formalism for RL in the non-Markov setting, similarly
to MDPs in the Markov setting.

Our work shows a strong connection between RDPs and PDFA, which allows one
to take advantage of the many fundamental results and learning algorithms
available for PDFA
\cite{kearns1994learnability,ron1998learnability,clark2004pac,palmer2007pac,balle2013pdfa,balle2014adaptively}.
On one hand, the negative results from \cite{kearns1994learnability}
about the impossibility of polynomial-time learning when
\emph{state distinguishability} is low transfer to RDPs.
On the other hand, when state distinguishability is sufficiently high, PDFA
techniques allow for polynomial-time learning (cf.\
\cite{ron1998learnability,balle2013pdfa}), and we show that such techniques can
be leveraged to learn RDPs.
In particular, we claim that PDFA techniques are the only known techniques so
far to effectively learn states when they determine probability distributions,
as it happens in RDPs.

\ifextendedversion
  Complete proofs of all our technical results are deferred to the appendix.
\else
  Proofs of all results are given in an extended version of this paper
  \cite{extendedversion}
\fi

\section{Preliminaries}

\paragraph{Transducers.}
We follow \cite{moore1956gedanken}.
A \emph{transducer} is a tuple 
$\langle Q, q_0, \Sigma, \tau, \Gamma, \theta \rangle$ where:
$Q$ is a finite set of states; 
$q_0 \in Q$ is the initial state; 
$\Sigma$ is the finite input alphabet;
$\tau: Q \times \Sigma \to Q$ is the deterministic transition function; 
$\Gamma$ is the finite output alphabet; 
$\theta: Q \to \Gamma$ is the output function.
We extend the use of $\tau$ 
to strings of length greater than one as
$\tau(q,\sigma_1\sigma_2 \dots \sigma_n) = 
\tau(\tau(q,\sigma_1), \sigma_2 \dots\sigma_n)$,
and to the empty string as 
$\tau(q,\varepsilon) = q$.
We also extend the use of $\theta$ to arbitrary strings as
$\theta(q,\sigma_1\dots\sigma_n)  =
\theta(q)\, \theta(\tau(q,\sigma_1),\sigma_2 \dots \sigma_n)$ where the base
case is $\theta(q,\varepsilon)  = \theta(q)$.
Furthermore, we write $\tau(\sigma_1 \dots\sigma_n)$ for
$\tau(q_0,\sigma_1 \dots\sigma_n)$,
and we write $\theta(\sigma_1 \dots\sigma_n)$ 
for $\theta(q_0,\sigma_1 \dots\sigma_n)$.
Finally, we call $\theta(\sigma_1 \dots\sigma_n)$ \emph{the output of the
transducer} on $\sigma_1 \dots\sigma_n$.

\paragraph{Probabilistic Automata.}
We follow \cite{balle2013pdfa}.
A \emph{Probabilistic Deterministic Finite Automaton} (PDFA) is a tuple
$\mathcal{A} = \langle Q,\Sigma, \tau, \lambda, \stopaction, q_0 \rangle$ where: 
$Q$ is a finite set of states; 
$\Sigma$ is an arbitrary finite alphabet;
$\tau: Q \times \Sigma \to Q$ is the transition function;
$\lambda: Q \times (\Sigma \cup \{ \stopaction \}) \to [0,1]$
defines the probability of emitting each symbol from each state
($\lambda(q,\sigma)=0$ when $\sigma \in \Sigma$ and $\tau(q,\sigma)$ is not
defined);
$\stopaction$ is a special symbol not in $\Sigma$ reserved to mark the end of a
string; 
$q_0 \in Q$ is the initial state.
It is required that:
\begin{romanenumerate*}
\item
  $\lambda(q,\sigma) = 0$ when $\sigma \in \Sigma$ and $\tau(q,\sigma)$ is not
  defined;
\item
  for each state $q \in Q$, $\sum_{\sigma \in (\Sigma \cup \{\stopaction \})}
  \lambda(q,\sigma) = 1$;
\item
  for each state $q \in Q$ that can be reached from the initial state $q_0$ with
  non-zero probability, there is a state 
  $q' \in Q$ with $\lambda(q',\stopaction) > 0$
  that can be reached from $q$.
\end{romanenumerate*}
The transition function $\tau$ is extended to strings as for transducers, and
the probability function is extended as $\lambda(q,\varepsilon) = 1$ and 
$\lambda(q,\sigma_1\sigma_2 \dots \sigma_n) = \lambda(q,\sigma_1) \cdot 
\lambda(q,\sigma_2 \dots \sigma_n)$.
Then, the probability that $x \in \Sigma^*$ is a string
generated by the automaton starting from state $q$ is $\lambda(q,x\stopaction)$,
and the probability that it is a \emph{prefix} is $\lambda(q,x)$.
For $\mu > 0$,
we say that $\mathcal{A}$ is \emph{$\mu$-distinguishable} if
$\max_x |\lambda(q_1,x)-\lambda(q_2,x)| \geq \mu$ for every
two distinct states $q_1$ and $q_2$. 

\paragraph{Non-Markov Decision Processes.}
A \emph{Non-Markov Decision Process} (NMDP) (cf.\ \cite{brafman2019rdp})
is a tuple 
$\mathcal{P} = \langle A, S, R, \transitionfunc, \rewardfunc, \gamma \rangle$ 
where:
$A$ is a finite set of \emph{actions};
$S$ is a finite set of \emph{states} (or \emph{observation states} to
distinguish them from automata and transducer states);
$R \subseteq \mathbb{R}_{\geq 0}$ is a finite set of non-negative \emph{reward
values};
$\transitionfunc: S^* \times A \times S \to [0,1]$ is the 
\emph{transition function} which defines a probability
distribution $\transitionfunc(\cdot | h,a)$ over $S$ for every $h \in S^*$ and
every $a \in A$;
$\rewardfunc: S^* \times A \times S \to \rewardValues$ is the 
\emph{reward function};  
$\gamma \in (0,1)$ is the \emph{discount factor}.
The transition and reward functions can be combined into the
\emph{dynamics function} $\dynfunc: S^* \times A \times S \times \rewardValues
\to [0,1]$ which defines a probability distribution 
$\dynfunc(\cdot | h,a)$ over $S \times R$ for every $h \in S^*$ and every 
$a \in A$.
Namely, $\dynfunc(s,r|h,a)$ is $\transitionfunc(s|h,a)$ if 
$r = \rewardfunc(h,a,s)$, and zero otherwise.
Every element of $S^*$ is called a \emph{history}.
A \emph{policy} is a function $\pi: S^* \times A \to [0,1]$ that, for every
history $h$, defines a probability distribution $\pi(\cdot|h)$ over the
actions $A$. 
A policy $\pi$ is \emph{deterministic on a history $h$} if $\pi(a|h) = 1$ for
some action $a$, in which case we write $\pi(h) = a$; and it
is \emph{deterministic} if it is deterministic on every history.
A policy $\pi$ is \emph{uniform on a history $h$} if $\pi(a|h) = 1/|A|$ for
every action $a \in A$.
We call $\upolicy$ the policy that is uniform on every history.
Every element of $(AS\rewardValues)^*$ is called a \emph{trace}.
The \emph{dynamics of $\mathcal{P}$ under a policy $\pi$} describe
the probability of an upcoming trace, given the history so far, when actions
are chosen according to a policy $\pi$; it can be recursively computed as
$\dynfunc_\pi(asrt | h) = 
\pi(a|h) \cdot \dynfunc(s,r|h) \cdot \dynfunc_\pi(t | hs)$, with base case
$\dynfunc_\pi(\varepsilon | h) = 1$ for $\varepsilon$ the empty trace.
The \emph{value of a policy $\pi$ on a history $h$},
written $\valfunc_\pi(h)$, is the expected discounted sum of future rewards when
actions are chosen according to $\pi$ given that the history so far is $h$; it
can be recursively computed as
$\valfunc_\pi(h) = \sum_{asr} 
\pi(a|h) \cdot \dynfunc(s,r|h,a) \cdot (r + \gamma \cdot \valfunc_\pi(hs))$.
The \emph{optimal value on a history $h$} is $\valfunc_*(h) = \max_\pi \valfunc_\pi(h)$,
which can be expressed without reference to any policy as 
$\valfunc_*(h) = \max_a \left( \sum_{sr} \dynfunc(s,r|h,a) \cdot (r + \gamma
\cdot \valfunc_*(hs)) \right)$.
The \emph{value of an action $a$ on a history $h$ under a policy $\pi$ },
written $\avalfunc_\pi(h,a)$, is the expected discounted sum of future rewards
when the next action is $a$ and the following actions are chosen according to
$\pi$, given that the history so far is $h$; it is
$\avalfunc_\pi(h,a) = \sum_{sr} 
\dynfunc(s,r|h,a) \cdot (r + \gamma \cdot \valfunc_\pi(hs))$.
The \emph{optimal value of an action $a$ on a history $h$} is $\avalfunc_*(h,a)
= \max_\pi \avalfunc_\pi(h,a)$, and it can be expressed as 
$\avalfunc_*(h,a) = \sum_{sr} 
\dynfunc(s,r|h,a) \cdot (r + \gamma \cdot \valfunc_*(hs))$.
A policy $\pi$ is \emph{optimal on a history $h$} if 
$\valfunc_\pi(h) = \valfunc_*(h)$. For 
$\epsilon > 0$, a policy $\pi$ is \emph{$\epsilon$-optimal on $h$} if
$\valfunc_\pi(h) \geq \valfunc_*(h) - \epsilon$.
A policy is \emph{optimal} (resp., \emph{$\epsilon$-optimal}) if it is so on
every history.
We also say \emph{near-optimal} to say $\epsilon$-optimal for some $\epsilon$.

\paragraph{Regular Decision Processes.}
A \emph{Regular Decision Process (RDP)} \cite{brafman2019rdp}\footnote{In
  \cite{brafman2019rdp} the functions $\transitionfunc$ and
$\rewardfunc$ are represented using the temporal
logics on finite traces $\textsc{ldl}_f$. Here instead we use directly finite
transducers to express them. Note that all  $\transitionfunc$ and
$\rewardfunc$ representable in $\textsc{ldl}_f$ are indeed expressible through finite transducers.} is an NMDP
$\rdp = \langle A, S, R, \transitionfunc, \rewardfunc,
\gamma \rangle$ 
whose transition and reward functions can be
represented by \emph{finite transducers}.
Specifically, there is a finite transducer that,
on every history $h$, outputs the function
$\transitionfunc_h: A \times S \to [0,1]$ induced by $\transitionfunc$
when its first argument is $h$; and there is
a finite transducer that,
on every history $h$, outputs the function
$\rewardfunc_h: A \times S \times R \to [0,1]$ induced by $\rewardfunc$
when its first argument is $h$.
Note that the cross-product of such transducers yields a finite transducer for
the dynamics function $\dynfunc$ of $\rdp$.

\paragraph{Markov Decision Processes.}
A \emph{Markov Decision Process (MDP)}
\cite{bellman1957markovian,puterman1994markov} is an 
NMDP $\langle A, S, R, \transitionfunc, \rewardfunc, \gamma \rangle$ where
both the transition function and the reward function (and hence the dynamics
function) depend only on the last state in the history, which is never empty
since every history starts with a state that is chosen arbitrarily.
Specifically, for every pair of non-empty histories $h_1s$ and $h_2s$,
it holds that $\transitionfunc(h_1s,\cdot) = \transitionfunc(h_2s, \cdot)$ and 
$\rewardfunc(h_1s,\cdot) = \rewardfunc(h_2s, \cdot)$.
Similarly, we say that a policy $\pi$ is \emph{stationary} if
$\pi(h_1s,\cdot) = \pi(h_2s, \cdot)$ for every pair of non-empty histories 
$h_1s$ and $h_2s$.
In the case of MDPs, we can see all history-dependent functions---e.g.,
transition and reward functions, value functions, policies---as
taking a single state in place of a history.

\section{PAC-RL in RDPs}

We consider \emph{reinforcement learning (RL)} as the problem of an agent that
has to learn an optimal policy for an unknown RDP $\rdp$, by acting
and receiving observation states and rewards according to the transition and
reward functions of $\rdp$.
The agent performs a sequence of actions $a_1 \dots a_n$, receiving an
observation state $s_i$ and a reward $r_i$ after each action $a_i$.
The agent has the opportunity to \emph{stop} and possibly start over.
This process generates strings of the form $a_1s_1r_1 \dots
a_ns_nr_n$ which we call \emph{episodes}. They form the \emph{experience} of the
agent, which is the basis to learn an optimal policy.

\begin{example} \label{ex:running-example-1}
  As a running example, we consider an agent that has to cross a $2 \times m$
  grid while avoiding enemies.
  Every enemy guards a two-cell column:
  enemy $i$ guards cells $(0,i)$ and $(1,i)$, and it is found in
  cell $(0,i)$ with probability $p_i^0$ or $p_i^1$. Initially the probability is
  $p_i^0$, and the two probabilities are swapped every time the agent hits
  an enemy.
  At every step, say $i$, the agent has to decide whether to go through cell 
  $(0,i)$ or $(1,i)$, taking action $a_0$ or $a_1$, respectively.
  Specifically, when in cell $(0,i)$ or $(1,i)$, action $a_0$ leads to $(0,i+1)$
  and action $a_1$ to $(1,i+1)$.
  From the last column, the agent is brought back to the first one.
  The agent receives reward one every time it avoids an enemy.
  Each observation state $s$ is a triple
  $\langle i, j, e \rangle$ where $i \in [0,1]$ and $j \in [0,m-1]$ are the
  coordinates of the agent and $e \in \{ \mathit{enemy},
  \mathit{clear} \}$ denotes whether an enemy is in the agent's current cell.
  To maximise rewards, the agent has to learn to predict the enemies'
  positions, which depend on the history of observation states in a regular
  manner.
  Such a dependency is described by a transducer with $2m$
  states of the form $\langle q_1, q_2 \rangle$ where $q_1 \in [0,m-1]$
  stores the agent's current column and $q_2$ is a bit that keeps track of
  whether probabilities $p_i^0$ or $p_i^1$ are being used. 
\end{example}

Our goal is to understand how fast an agent can learn a near-optimal policy.
In particular, we want the agent to find near-optimal policies in the shortest
possible time, and we are not interested in the agent maximising the collected
rewards during learning. 
Thus, we require the agent to return policies 
$\pi_1, \pi_2, \dots$ that improve over time. The idea is that these policies
can be passed to an executing agent, living in such an environment, whose
behaviour will improve as it receives better policies.
Then, the central question is how fast an agent can reach a point after
which it outputs only good policies.

Now we make our discussion more precise.
To measure the learning performance of an agent, we consider the number of
\emph{steps} it performs.
\begin{definition}
  An \emph{action step} consists in performing an action and collecting the
  resulting observation-state and reward.
  A \emph{step} is an action step or an elementary computation step.
\end{definition}
Then, to have a reasonable notion of learning, we borrow from the 
\emph{Probably Approximately Correct (PAC)} framework
\cite{valiant1984theory,kearns1994introduction}, similarly to what is done in
previous work on RL in MDPs
(cf.\ \cite{fiechter1994efficient,kearns2002near,strehl2009reinforcement}).
The PAC framework is based on the observation that exact learning is
infeasible. Thus, it introduces two parameters $\epsilon > 0$ and
$\delta \in (0,1)$ that describe the required accuracy and the confidence of
success, respectively.
In the RL setting it translates into looking for policies that are
$\epsilon$-optimal with probability at least $1-\delta$.
Then, RL is considered feasible if these policies can be found in a number of
steps that is polynomial in $1/\epsilon$ and $\ln(1/\delta)$, and in other
parameters that describe the underlying RDP.
\begin{definition}
  An RL agent (or algorithm) is said to reach accuracy $\epsilon$ and confidence
  $\delta$ in the moment it returns the first policy $\pi_*$ such that
  $\pi_*$ is $\epsilon$-optimal with probability at least $1-\delta$ and the
  same holds for every policy returned after $\pi_*$.
\end{definition}
\begin{definition}
  Let $\vec{d}_\rdp$ be a list of parameters describing $\rdp$,
  let $A$ be its actions, and 
  let $\gamma$ be its discount factor.
  An RL agent (or algorithm) is PAC-RL with respect to $\vec{d}_\rdp$
  if, for every $\epsilon > 0$ and $\delta \in (0,1)$, given
  $(A,\gamma,\epsilon,\delta)$ as input, it reaches accuracy
  $\epsilon$ and confidence $\delta$ in 
  $\operatorname{poly}(1/\epsilon,\ln(1/\delta),\vec{d}_\rdp)$ steps.
\end{definition}



We propose to describe an RDP
$\rdp = \langle A, S, R, \transitionfunc, \rewardfunc, \gamma \rangle$
using the following parameters.
\begin{equation} \label{eq:parameters}
  \vec{d}_\rdp = \left( |A|\text{, } \frac{1}{1-\gamma}\text{, }
    R_\mathrm{max}\text{, } n\text{, } \frac{1}{\rho}\text{, }
    \frac{1}{\mu}\text{, } \frac{1}{\eta} \right)
\end{equation}
The first three parameters take into account the number of actions $|A|$, 
the discount factor $\gamma$, the maximum reward value $R_\mathrm{max}$, and the
number of states $n$ of the minimum transducer for the dynamics of $\rdp$.
Then, the reachability $\rho$ of an RDP measures how easy it is to reach
states of the dynamics transducer $T$ when actions are taken uniformly at
random.
\begin{definition}
  The \emph{reachability} of $\rdp$ is the minimum non-zero probability $\rho$
  that a given state of the dynamics transducer $T$ is reached from the initial
  state within $n$ steps (with $n$ the number of states) when actions are chosen
  uniformly at random.
\end{definition}

The distinguishability $\mu$ of an RDP measures how easy it is to
distinguish states of the dynamics transducer $T$ when actions are taken
uniformly at random.  It is a parameter inherited from the PDFA literature
\cite{ron1998learnability,balle2013pdfa}.
\begin{definition}
  The \emph{distinguishability} of $\rdp$ is the minimum $\mu \in (0,1)$
  such that one of the following conditions holds for every two histories 
  $h_1,h_2$, where $\upolicy$ is the uniform policy:
  \begin{itemize}
    \item 
      $\dynfunc_\upolicy(t|h_1) = \dynfunc_\upolicy(t|h_2)$ for every trace $t$;
    \item 
      $|\dynfunc_\upolicy(t|h_1) - \dynfunc_\upolicy(t|h_2)| > \mu$ for some
      trace $t$.
  \end{itemize}
\end{definition}

The degree of determinism $\eta$ measures how easy it is to discover
transitions. If $\eta$ is small, then there is some transition that is possible,
but unlikely to be observed. Note that this parameter takes value one when the
RDP is deterministic.
\begin{definition}
  The \emph{degree of determinism} $\eta$ of $\rdp$ is the
  minimum non-zero probability that $\transitionfunc(\cdot|h,a)$ assigns
  to an observation state.
\end{definition}

\begin{example}
  Consider the RDP of Example~1.
  Its reachability $\rho$ is $1/2$, since, by an inductive argument,
  every state reachable in $i$ steps has probability at least $1/2$ to be
  visited at step $i$ and, from there, each of the two next states is visited
  with probability $1/2$.
  The degree of determinism $\eta$ is the minimum, for any $i$, among 
  the probabilities $p_i^0$ and $p_i^1$, and their complements.
  The distinguishability $\mu$ is given by the uniform probability of an
  action $1/2$ times the minimum among $\eta$ and the values $|p_i^0-p_i^1|$.
\end{example}


One can show that all the above parameters are necessary.
For the number of actions, discount, and maximum reward parameters
(and also for the accuracy and confidence parameters) the result follows from 
a known lower bound for the MAB problem \cite{mannor2004sample}.
Then, the number of transducer states, reachability, and degree of determinism
can be shown to increase the number of action steps required to visit the
relevant parts of the dynamics transducer a sufficient number of times.
Finally, for the distinguishability parameter, it is straighforward to adapt the
hardness proof for PDFA given in \cite{kearns1994learnability}, which assumes
hardness of learning noisy parity function, a standard cryptographic assumption.
\begin{restatable}{theorem}{thhardness}
  \label{th:hardness}
  There is no algorithm that is PAC-RL with respect to a strict subset
  of the parameters $\vec{d}_\rdp$ given in~\eqref{eq:parameters}, if it is hard
  to learn noisy parity functions.
\end{restatable}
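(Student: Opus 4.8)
The plan is to prove the contrapositive: for \emph{each} parameter that occurs in $\vec{d}_\rdp$, I will exhibit a family of RDPs on which the reciprocal of that parameter is unbounded while every \emph{other} parameter in $\vec{d}_\rdp$ stays bounded by an absolute constant, and on which every PAC-RL algorithm is forced to use a number of steps that grows with the unbounded parameter. Since being PAC-RL with respect to a proper subset $\vec{d}'$ of $\vec{d}_\rdp$ means being polynomial in the parameters of $\vec{d}'$ alone, the family associated with a dropped parameter yields the contradiction. For the distinguishability parameter this contradiction is conditional on the assumed hardness of learning noisy parities; for the others it is unconditional.

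For the parameters $|A|$, $1/(1-\gamma)$ and $R_\mathrm{max}$ --- and likewise for the accuracy $\epsilon$ and confidence $\delta$ --- it suffices to restrict attention to the degenerate RDPs that are stateless multi-armed bandits: a single observation state and an immediate reward, so that the dynamics transducer has $O(1)$ states, reachability $1$, degree of determinism bounded below by a constant, and trivially large distinguishability. The $\Omega(|A|\,\epsilon^{-2}\ln(1/\delta))$ sample-complexity lower bound of \cite{mannor2004sample} for identifying a near-optimal arm then applies directly; rescaling the rewards and prefixing a deterministic horizon of length $\Theta(1/(1-\gamma))$ before the payoff amplifies the bound to the claimed dependence on $R_\mathrm{max}$ and $1/(1-\gamma)$. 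Because every remaining parameter is constant on this family, no algorithm polynomial only in a subset of $\vec{d}_\rdp$ missing one of these five quantities can exist.

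For the structural parameters $n$, $\rho$ and $\eta$ the argument is an observation-counting one. For $n$ I would use a combination-lock chain: the dynamics transducer has $n$ states arranged so that reward becomes available only after a specific length-$\Theta(n)$ prefix of the history, and no policy can reach the rewarding region in fewer than $\Omega(n)$ action steps, so any informative experience costs $\Omega(n)$ steps per episode while $\rho$, $\eta$ and $|A|$ remain constant. For $\rho$ (resp.\ $\eta$) I would instead route the agent, by an environment-controlled stochastic transition that no choice of action can bias, into the ``interesting'' part of the transducer with probability exactly $\rho$ (resp.\ make the single transition whose outcome decides the optimal action occur with probability exactly $\eta$); distinguishing the two otherwise-identical variants with different optimal policies then requires observing that rare event, which by a standard Chernoff/Markov argument needs $\Omega(\tfrac{1}{\rho}\ln\tfrac{1}{\delta})$ (resp.\ $\Omega(\tfrac{1}{\eta}\ln\tfrac{1}{\delta})$) episodes, while every other parameter stays bounded. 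The care here is precisely in making the rare event genuinely unavoidable: since the agent controls the actions, the hardness must stem from environment stochasticity, not from the agent's own exploration choices.

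The distinguishability parameter $\mu$ is the crux and the one requiring the cryptographic hypothesis. I would adapt the PDFA hardness construction of \cite{kearns1994learnability}: encode an instance of the noisy parity learning problem as an RDP whose dynamics transducer realises the PDFA produced by that reduction, arranging that (i) the distinguishability $\mu$ of the RDP equals the distribution gap induced by the parity's noise rate, so $1/\mu$ is the only quantity that blows up; (ii) a near-optimal policy must in effect predict the parity bit correctly on almost all inputs, hence recovers the hidden parity; and (iii) all remaining parameters $|A|$, $1/(1-\gamma)$, $R_\mathrm{max}$, $n$, $\rho$, $\eta$ stay polynomially bounded --- in particular $n$ is linear in the number of variables and $\eta$ is bounded away from $0$ by the bounded noise rate. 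A PAC-RL algorithm whose running time did not depend on $1/\mu$ would then, through this reduction, learn noisy parities in polynomial time, contradicting the assumption. I expect the main obstacle to be establishing (ii) and (iii) simultaneously: proving that near-optimality in the constructed RDP is equivalent to recovering the parity, while checking that no side parameter secretly grows when $1/\mu$ is pushed large.
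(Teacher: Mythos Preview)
Your overall decomposition matches the paper's: one lemma per parameter, Mannor--Tsitsiklis for $|A|$, $R_{\max}$ and $\gamma$, combinatorial chain constructions for $n$, $\rho$, $\eta$, and the Kearns et al.\ noisy-parity reduction for $\mu$. Your treatment of $\mu$ is also right in spirit, including the observation that $n$ grows linearly in the number of variables (so ``polynomially bounded'' is the correct requirement, not the ``absolute constant'' you state at the outset---you should reconcile those two sentences).

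There is, however, a real gap in your handling of $1/(1-\gamma)$. You propose to ``prefix a deterministic horizon of length $\Theta(1/(1-\gamma))$ before the payoff''. But any transducer that counts out such a prefix must have $n = \Theta(1/(1-\gamma))$ states, so on this family $n$ blows up together with $1/(1-\gamma)$. An algorithm that is polynomial in the remaining parameters---which still include $n$---would then automatically be polynomial in $1/(1-\gamma)$, and you would have shown nothing. The paper avoids this by a different device: after the bandit-style first action, the agent is sent to an absorbing state $s^+$ (reward~$1$ forever) or $s^-$ (reward~$0$ forever), so the value of each arm is its success probability times $\gamma^2/(1-\gamma)$. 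This scales the effective reward with $1/(1-\gamma)$ while keeping $n$ constant (three observation states suffice), and the Mannor--Tsitsiklis bound then transfers exactly as in the $R_{\max}$ case. You need some construction of this kind---one that inflates values through discounted accumulation rather than through transducer length.
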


\section{PAC-RL via Probabilistic Automata}
\label{sec:algorithm}

We present an RL algorithm for RDPs that relies on probabilistic
automata.
We consider an RDP
$\rdp = \langle A, S, R, \transitionfunc, \rewardfunc, \gamma \rangle$,
its dynamics function $\dynfunc$, and the minimum transducer 
$T = \langle Q, q_0, S, \tau, \Gamma, \theta \rangle$ that represents
$\dynfunc$ in the sense that $\theta(h)(a,s,r) = \dynfunc(s,r|h,a)$.

\paragraph{Representing RDPs as PDFA.}
When learning, 
the agent has the possibility to experience multiple episodes.
In particular, the agent has an extra action, called 
\emph{stop action}, that ends the current episode and starts a new one.
To take advantage of that,
the agent generates episodes following a stationary policy that
chooses to stop with a non-zero probability $p$, and it chooses an action
from $A$ uniformly if it does not stop.
%
\begin{definition}
  The \emph{stop action} is a special action $\stopaction$ that allows the agent
  to terminate the current episode and start a new one.
  The \emph{exploration policy with stop probability} $p > 0$, written
  $\pi_p$, is the policy that selects the stop action $\stopaction$
  with probability $p$, and each action from $A$ with probability $(1-p)/|A|$.
\end{definition}

Under an exploration policy, an RDP determines a probability distribution on
traces, and hence can be seen as a PDFA.
Specifically, the dynamics of $\rdp$ under $\pi_p$ are captured by the PDFA
$\mathcal{A} = \langle Q, \Sigma, \tau', \lambda, \stopaction, q_0 \rangle$
where:
\begin{itemize}
  \item
    alphabet $\Sigma = \{ asr \in ASR \mid \exists h.\ \dynfunc(s,r|h,a)>0 \}$,
  \item 
    transitions $\tau'(q,asr) = \tau(q,s)$,
  \item 
    probability function:
    \begin{itemize}
      \item
        $\lambda(q,asr) = ((1-p)/|A|) \cdot \theta(q)(a,s,r)$,
      \item
        $\lambda(q,\stopaction) = p$.
    \end{itemize}
\end{itemize}
The dynamics of $\rdp$ are captured in the sense that the following holds for
every history $h$, trace $t$, and string $h'$ such that the projection of
$h'$ on observation states coincides with $h$:
\begin{equation*}
  \dynfunc_{\pi_p}(t|h) = \lambda(\tau'(q_0,h'),t).
\end{equation*}

\begin{algorithm}[t]
  \caption{Reinforcement Learning $\algfont{RL}(A,\gamma,\epsilon,\delta)$}
  \label{alg:rl}
  \textbf{Input}: Actions~$A$, discount factor~$\gamma$,
  required precision~$\epsilon$, confidence parameter~$\delta$.\\
  \textbf{Output}: Policies.
  \begin{algorithmic}[1] 
    \FOR{$\ell = 1, 2, \ldots$}
    \STATE $p \leftarrow 1/(10\ell+1)$;
    $k \leftarrow (2/p) \cdot \ell^2 \cdot (\ell + 5 \ln \ell)$
      \STATE 
        $X \leftarrow \emptyset$;
        $i \leftarrow 0$;
        $\mathit{hardStop} \leftarrow \mathit{false}$
      \WHILE{$\neg \mathit{hardStop}$}
        \STATE  $x, \mathit{hardStop} \leftarrow$ generate an episode under 
        policy $\pi_p$ with a hard stop after $k - i$ actions
        \STATE $i \leftarrow$ increase $i$ by the number of actions in $x$
        \IF{$\neg\mathit{hardStop}$} 
          \STATE $X \leftarrow X \cup \{ x \}$
        \ENDIF
      \ENDWHILE
      \STATE $\hat{\Sigma} \leftarrow$ symbols in $X$;
        $\hatrmax \leftarrow$ max reward in $X$
      \STATE $\hat{\mathcal{A}} \leftarrow$ learn PDFA by calling
      $\adact(\ell, |\hat{\Sigma}|, \delta/2, X)$
      \STATE $\hat{M} \leftarrow$ compute the MDP induced by
      $\hat{\mathcal{A}}$ and $\gamma$
      \STATE $m \leftarrow \left\lceil \frac{1}{1-\gamma} \cdot 
      \ln \left( \frac{2 \cdot \hatrmax}{\epsilon \cdot (1-\gamma)^2}
      \right)\right\rceil$
      \STATE $\pi \leftarrow$ solve $\hat{M}$ by calling
      $\valiter(\hat{M},m)$
      \STATE \textbf{return} transducer for the composition of $\pi$ with the
      projection-on-states of the transition function of $\hat{\mathcal{A}}$
    \ENDFOR
  \end{algorithmic}
\end{algorithm}

\paragraph{PAC-RL Algorithm.}
Algorithm~\ref{alg:rl} provides the pseudocode of our RL algoritm. 
The algorithm repeats the following operations for increasing values of an
integer variable $\ell$, starting from $1$.
(Line~2)~It computes the stop probability $p$, and
the maximum number $k$ of actions to perform during the current iteration.
(Line~3)~It initialises the set of episodes $X$ to the empty set, a counter $i$
for the actions to zero, and a flag $\mathit{hardStop}$ to know when a hard
stop has occurred.
(Lines~4--10)~It generates episodes following the exploration policy with stop
probability $p$, and with a hard stop if the number of performed actions reaches
$k$. Episodes are stored in variable $X$.
(Line~11)~It reads the set $\hat{\Sigma}$ of action-state-reward symbols from
$X$, and the maximum reward $\hatrmax$ occurring in $X$.
(Line~12)~It learns a PDFA 
$\hat{\mathcal{A}} = \langle
\hat{Q},\hat{\Sigma}, \hat{\tau}', \hat{\lambda}, \stopaction, \hat{q}_0
\rangle$ via the $\adact$ algorithm \cite{balle2013pdfa} instantiated with 
$\ell$ as an upper bound on the number of automaton states,
$|\hat{\Sigma}|$ as an estimate of the size of the alphabet, 
confidence parameter $\delta/2$, and set of strings $X$.
(Line~13)~Starting from $\hat{\mathcal{A}}$, the algorithm computes the MDP 
$\hat{M} = \langle A, \hat{Q}, \hat{R}, \dynfunc^{\hat{M}}, \gamma, \hat{q}_0
\rangle$ where $\hat{R}$ consists of each reward value occurring as the third
component of an element of $\hat{\Sigma}$, and the dynamics function is as
follows:
\begin{equation*}
  \dynfunc^{\hat{M}}(q_2,r | q_1,a) = (|A|/(1-p))
    \textstyle\sum_{s:\hat{\tau}'(q_1,asr) = q_2} \hat{\lambda}(q_1,asr).
\end{equation*}
(Lines~14--15)~The algorithm then solves $\hat{M}$ via $m$ iterations of the
classic \emph{value iteration} algorithm with action-value
estimates initialised to zero.
Specifically, value iteration computes an approximation
$\hat{\avalfunc}^{\hat{M}}_*$ of the
optimal action-value function $\avalfunc^{\hat{M}}_*$ of $\hat{M}$, and then
computes the greedy policy $\pi$ with respect $\hat{\avalfunc}^{\hat{M}}_*$,
which is a stationary policy on state space $\hat{Q}$.
(Line~16)~The algorithm returns a transducer that represents the policy function
$\pi(\hat{\tau}(\cdot))$ obtained by composing the stationary policy
$\pi$ and the `projection' transition function $\hat{\tau}$ defined as
$\hat{\tau}(q,s) = \hat{\tau}'(q,asr)$ for an arbitrary choice of $a$ and $r$
such that $\hat{\lambda}(q,asr) > 0$; note that every choice yields the
same result and a choice always exists.
Specifically, the returned transducer is 
$\langle \hat{Q}, \hat{q}_0, \hat{S}, \hat{\tau}, A, \theta' \rangle$ where 
$\theta'(\hat{q}) = \pi(\hat{q})$ and
$\hat{S}$ consists of each observation-state occurring as the second component
of an element of $\hat{\Sigma}$.

\section{PAC Analysis}
\label{sec:pac}

Algorithm~\ref{alg:rl} shows that RL in RDPs is feasible in polynomial time.
\begin{restatable}{theorem}{thpac} 
  \label{th:pac}
  Algorithm~1 is PAC-RL 
  with respect to the parameters $\vec{d}_\rdp$ given in
  \eqref{eq:parameters}.
\end{restatable}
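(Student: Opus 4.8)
The plan is to show that, for $\ell$ large enough, a single iteration of Algorithm~1 returns an $\epsilon$-optimal policy with probability at least $1-\delta$, and that this is achieved within $\operatorname{poly}(1/\epsilon,\ln(1/\delta),\vec d_\rdp)$ total steps. I would organise the argument around three sources of error that must each be controlled: (\textit{i}) the PDFA returned by $\adact$ must be a good approximation of the true PDFA $\mathcal A$ capturing $\dynfunc_{\pi_p}$; (\textit{ii}) the induced MDP $\hat M$ must be close to the ``ideal'' MDP $M$ that the true PDFA induces; and (\textit{iii}) the value-iteration step must produce a near-optimal stationary policy for $\hat M$, and that policy, composed with $\hat\tau$, must be near-optimal for $\rdp$.

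First I would pin down the ``target'' objects. Since $T$ is the minimum dynamics transducer with $n$ states, once $\ell \ge n$ the bound passed to $\adact$ is valid; and once the stop probability $p = 1/(10\ell+1)$ is small enough (roughly $p \le 1-\gamma$ up to constants and also small relative to $\eta$ and $\mu$), the PDFA $\mathcal A$ is well-defined, $\mu'$-distinguishable for $\mu' = \Theta(\mu)$ after accounting for the $(1-p)/|A|$ factor, and its reachable states all have visit probability $\Omega(\rho)$ under $\pi_p$. I would then invoke the PAC guarantee of $\adact$ from \cite{balle2013pdfa}: given $\ell \ge n$, the distinguishability $\mu'$, and a sample $X$ whose size is polynomial in $\ell$, $|\hat\Sigma|$, $1/\mu'$, $1/\rho$, expected episode length $O(1/p)$, and $\ln(1/\delta)$, with probability $1-\delta/2$ it returns a PDFA $\hat{\mathcal A}$ that is structurally isomorphic to $\mathcal A$ on the reachable part and whose transition probabilities are uniformly within any prescribed $\epsilon_{\mathrm{PDFA}}$. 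The number of episodes needed translates into a number of action steps $k$, and I would check that the schedule $p = 1/(10\ell+1)$, $k = (2/p)\cdot \ell^2(\ell+5\ln\ell)$ in Line~2 dominates this requirement for all sufficiently large $\ell$ — this is where the specific constants in the algorithm get justified, together with the hard-stop argument (a Chernoff/Markov bound showing that with $k$ actions budgeted, enough complete episodes of expected length $1/p$ are collected before the budget runs out, so the hard-stop truncation discards only a negligible fraction and does not bias the statistics).

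Next I would propagate the per-transition error $\epsilon_{\mathrm{PDFA}}$ through the reduction. The map from $\hat{\mathcal A}$ to $\hat M$ multiplies probabilities by $|A|/(1-p)$, so the dynamics of $\hat M$ differ from those of $M$ by $O(|A|\,\epsilon_{\mathrm{PDFA}}/(1-\gamma))$ in total variation per state-action pair; a standard simulation-lemma argument (e.g.\ as in $R_{\max}$-style analyses, \cite{brafman2002rmax,kearns2002near,strehl2009reinforcement}) then bounds the difference in optimal value functions by $O\!\big(R_{\max}\,|A|\,\epsilon_{\mathrm{PDFA}}/(1-\gamma)^3\big)$. Value iteration for $m = \lceil \frac{1}{1-\gamma}\ln\frac{2\hatrmax}{\epsilon(1-\gamma)^2}\rceil$ steps contributes a contraction error bounded by $\gamma^m R_{\max}/(1-\gamma) \le \epsilon(1-\gamma)/2$ by the choice of $m$, so the greedy policy $\pi$ is $O(\epsilon)$-optimal in $\hat M$, hence (pulling back through the value-function closeness) $O(\epsilon)$-optimal in $M$. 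The final step is to observe that $M$ and $\rdp$ have identical optimal values along corresponding histories — this is the ``correspondence'' recalled in the introduction, and follows from the displayed identity $\dynfunc_{\pi_p}(t|h) = \lambda(\tau'(q_0,h'),t)$ together with the fact that the stop symbol and the $(1-p)/|A|$ factor are divided back out in $\dynfunc^{\hat M}$, so optimal non-stopping policies transfer — and that composing the stationary $\pi$ with $\hat\tau$ yields a history policy for $\rdp$ with the same value as $\pi$ in $\hat M$. Choosing $\epsilon_{\mathrm{PDFA}}$ as the appropriate $\operatorname{poly}(\epsilon,(1-\gamma),1/R_{\max},1/|A|)$ quantity fixes the required sample size, and one checks it is met for all $\ell \ge \ell_0$ with $\ell_0 = \operatorname{poly}(\vec d_\rdp)$; summing the step counts $\sum_{\ell \le \ell_0} O(k(\ell) + \text{comp}(\ell))$ gives the claimed polynomial bound, since from iteration $\ell_0$ on every returned policy is $\epsilon$-optimal with probability $1-\delta$ (union bound over the two $\delta/2$ failure events, and a Borel–Cantelli-style argument that the good event persists).

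\textbf{Main obstacle.} The delicate point is the interface between the fixed, data-independent schedule $(p_\ell,k_\ell)$ in Line~2 and the data-dependent quantities $|\hat\Sigma|$, $\hatrmax$, and the unknown parameters $n,\rho,\mu,\eta$ that actually govern how large $\ell$ must be: I must show that the algorithm, which never sees $\vec d_\rdp$, nonetheless reaches the good regime after only $\operatorname{poly}(\vec d_\rdp)$ iterations, and that the hard-stop truncation together with the fact that $\adact$ is being fed a possibly-too-small alphabet estimate and state bound in early iterations does not cause it to return a catastrophically wrong policy before iteration $\ell_0$ — it suffices that it eventually does well, but one must be careful that the ``reaches accuracy $\epsilon$ and confidence $\delta$'' definition (all policies from $\pi_*$ onward are good) is met, which requires the $1-\delta$ guarantee to hold simultaneously for all $\ell \ge \ell_0$, handled by making the confidence budget at iteration $\ell$ shrink geometrically or by a union bound over the (polynomially many) iterations up to the point where the accuracy target is met.
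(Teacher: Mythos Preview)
Your proposal is correct and follows essentially the same route as the paper: bound the PDFA-learning error via the $\adact$ guarantee (using the distinguishability, reachability, and minimum-transition bounds for $\mathcal A$ that you sketch, which are the paper's Lemma~\ref{lemma:distinguishability}), propagate that error to the induced MDP (Lemma~\ref{lemma:approximation-of-pdfa}), apply the simulation lemma plus value-iteration contraction (Lemma~\ref{lemma:approximation-of-mdp}), and transfer back to $\rdp$ by composing with the learned transition function (Lemma~\ref{lemma:stationary-optimal-policy}), with a Chernoff bound handling the hard-stop and episode count. Your ``main obstacle'' is over-cautious on one point: the paper's definition of ``reaches accuracy $\epsilon$ and confidence $\delta$'' only requires that each policy from $\pi_*$ onward be \emph{individually} $\epsilon$-optimal with probability at least $1-\delta$, not simultaneously, so no shrinking confidence budget or Borel--Cantelli argument is needed---the per-iteration guarantee for every $\ell \ge \ell_0$ suffices directly.
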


We analyse Algorithm~\ref{alg:rl} to show that the theorem holds.
We first show that we can compute a near-optimal policy for $\rdp$ starting from
a near-optimal policy for the MDP induced by the dynamics transducer $T$, that
is defined as $M = \langle A, Q, R, \dynfunc^M, \gamma \rangle$ where the
dynamics function is as follows:
\begin{equation*}
    \dynfunc^M(q_2,r| q_1,a) = 
    \textstyle \sum_{s:\tau(q_1,s) = q_2} \theta(q_1)(a,s,r).
\end{equation*}
We call it the \emph{ideal MDP}, since it is the MDP that the algorithm would
build if it learned the automaton $\hat{\mathcal{A}}$ perfectly. Its 
key property is that an $\epsilon$-optimal policy for $\rdp$ can be
obtained from an $\epsilon$-optimal policy for $M$ by composing it with the
transition function of $T$.
\begin{restatable}{lemma}{lemmastationaryoptimalpolicy}
  \label{lemma:stationary-optimal-policy}
  If $\pi$ is an $\epsilon$-optimal policy for $M$, then
  $\pi(\tau(\cdot))$ is an $\epsilon$-optimal policy for $\rdp$.
\end{restatable}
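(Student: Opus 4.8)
The plan is to establish that the transducer state is a sufficient statistic for the history as far as the dynamics of $\rdp$ are concerned, so that executing $\pi(\tau(\cdot))$ in $\rdp$ reproduces exactly, on the visited transducer states and collected rewards, the process that $\pi$ induces on $M$. The starting point is the defining property of $T$: for every history $h$ we have $\dynfunc(s,r\mid h,a) = \theta(\tau(h))(a,s,r)$, which depends on $h$ only through $q := \tau(h)$; moreover, since $\tau(hs) = \tau(q,s)$, pushing this distribution forward along $(s,r)\mapsto(\tau(q,s),r)$ gives precisely $\dynfunc^M(\cdot\mid q,a)$ by the definition of the ideal MDP. The composed policy inherits the same property, since $\pi(\tau(\cdot))(a\mid h) = \pi(a\mid \tau(h))$ also depends on $h$ only through $\tau(h)$.

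From here I would prove the two identities $\valfunc_{\pi(\tau(\cdot))}(h) = \valfunc^M_\pi(\tau(h))$ and $\valfunc_*(h) = \valfunc^M_*(\tau(h))$ for every history $h$, both by the uniqueness of the fixed point of the relevant Bellman operator (a $\gamma$-contraction in sup-norm over the bounded functions on histories, since $\gamma < 1$ and rewards are bounded). For the first identity, set $f(h) := \valfunc^M_\pi(\tau(h))$ and substitute $f$ into the right-hand side of the policy-evaluation equation of $\rdp$ for $\pi(\tau(\cdot))$: using $\pi(\tau(\cdot))(a\mid h) = \pi(a\mid\tau(h))$, then $\dynfunc(s,r\mid h,a) = \theta(\tau(h))(a,s,r)$ and $f(hs) = \valfunc^M_\pi(\tau(\tau(h),s))$, and finally regrouping the sum over $s$ according to the value $q_2 = \tau(\tau(h),s)$, the expression collapses to $\sum_a \pi(a\mid\tau(h)) \sum_{q_2,r} \dynfunc^M(q_2,r\mid\tau(h),a)\,(r + \gamma\,\valfunc^M_\pi(q_2)) = \valfunc^M_\pi(\tau(h)) = f(h)$. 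Hence $f$ solves the equation and equals $\valfunc_{\pi(\tau(\cdot))}$. The identical computation with $\max_a$ replacing $\sum_a \pi(a\mid\cdot)$ shows that $g(h) := \valfunc^M_*(\tau(h))$ solves the Bellman optimality equation of $\rdp$, so $\valfunc_* = g$. (An alternative route avoids fixed-point uniqueness altogether: by induction on the time step, the joint law of the sequence of visited transducer states and rewards under $\pi(\tau(\cdot))$ in $\rdp$ coincides with the joint law of the visited states and rewards under $\pi$ in $M$, so the two expected discounted returns agree.)

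Finally, if $\pi$ is $\epsilon$-optimal for $M$, then $\valfunc^M_\pi(q) \ge \valfunc^M_*(q) - \epsilon$ for every state $q$, and combining the two identities gives, for every history $h$,
\[
  \valfunc_{\pi(\tau(\cdot))}(h) \;=\; \valfunc^M_\pi(\tau(h)) \;\ge\; \valfunc^M_*(\tau(h)) - \epsilon \;=\; \valfunc_*(h) - \epsilon,
\]
so $\pi(\tau(\cdot))$ is $\epsilon$-optimal for $\rdp$. I expect the only genuinely delicate point to be the appeal to uniqueness of the Bellman fixed point over the infinite set of histories; this is routine given discounting and bounded rewards, and in any case the coupling argument sidesteps it.
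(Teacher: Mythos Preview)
Your proposal is correct and follows essentially the same route as the paper: the paper also proves the two identities $\valfunc_{\pi(\tau(\cdot))}(h)=\valfunc^M_\pi(\tau(h))$ and $\valfunc_*(h)=\valfunc^M_*(\tau(h))$ by rewriting the Bellman recursions through $\theta(\tau(h))$ and regrouping the sum over $s$ via $\dynfunc^M$, and then combines them exactly as you do. The only difference is presentational: the paper leaves the appeal to uniqueness of the Bellman fixed point implicit, whereas you state it explicitly (and even offer the coupling alternative), which is a minor improvement in rigor rather than a different argument.
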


\begin{table}
  \centering
  \begin{tabular}{ll}
    \toprule
    & Definition \\
    \midrule
    $\epsm$ & $\frac{(1-\gamma)^3 \cdot \epsilon}{3 \cdot
    R_\mathrm{max}}$
    \\[4pt]
    $\epsa$ & $\frac{(1-p) \cdot \epsm}{|A| \cdot n \cdot |\Sigma|}$
    \\[4pt]
    $\delta_0$ & $\frac{\delta}{2 \cdot \hat{n} \cdot (\hat{n} \cdot |\Sigma|
      + |\Sigma| + 1)}$
    \\[5pt]
    $\samplesone$ & $\frac{22 \cdot \euler \cdot |A| \cdot \hat{n} \cdot
    |\Sigma|}{\rho \cdot \eta \cdot (1-p) \cdot \mu^2} \cdot \ln \frac{704 \cdot
    |A| \cdot \hat{n} \cdot
    |\Sigma|}{\rho \cdot \eta \cdot \mu^2 \cdot p \cdot \delta_0^2}$
    \\[5pt]
    $\samplestwo$ & $\frac{\hat{n} \cdot |\Sigma| \cdot |A|}{0.9 \cdot \rho
    \cdot \eta \cdot (1-p) \cdot (\epsa)^2} \cdot \ln \frac{2 \cdot (|\Sigma| +
    1)}{\delta_0}$
    \\
    \bottomrule
  \end{tabular}
  \caption{Quantities used in the PAC analysis.}
  \label{tab:quantities}
\end{table}

Now that we know that it suffices to compute an $\epsilon$-optimal policy for
the ideal MDP $M$, we establish the accuracy required for the MDP $\hat{M}$ that
is actually computed by the algorithm (Line~14) in order to be the basis for
computing an $\epsilon$-optimal policy for $M$.
\begin{definition} \label{def:mdp-approximation}
  We say that $\hat{M}$ is an $\alpha$-approximation of $M$ if the two MDPs 
  have the same actions, rewards, and discount factor, and there is a bijection
  $\phi$ between their states such that 
  $\| \dynfunc^M(\cdot|q,a) - \dynfunc^{\hat{M}}(\cdot|\phi(q),a) \|_1 \leq
  \alpha$ for every $q$ and $a$.
\end{definition}
The following lemma is an application of known results for MDPs.
First, if $\hat{M}$ approximates $M$, then the value functions of $\hat{M}$
approximate the value functions of $M$ \cite{strehl2005theoretical}.
Then, the value functions of $\hat{M}$ can be computed in polynomial time with
high accuracy via the value iteration algorithm \cite{strehl2009reinforcement}.
Finally, any policy that greedily chooses the best action according to an
accurate estimate of the optimal action-value function is near-optimal
\cite{singh1994upper}.
\begin{restatable}{lemma}{lemmaapproximationofmdp}
  \label{lemma:approximation-of-mdp}
  If $\hat{M}$ is an $\epsm$-approximation of $M$ (with $\epsm$ as in
  Table~\ref{tab:quantities}), then 
  the greedy policy obtained via 
  $$\left\lceil \frac{1}{1-\gamma} \cdot 
    \ln \left( \frac{2 \cdot \rmax}{\epsilon \cdot (1-\gamma)^2}
  \right)\right\rceil$$
  iterations of the \emph{value iteration} algorithm, with action-value
  estimates initialised to zero, is an $\epsilon$-optimal policy for $M$.
\end{restatable}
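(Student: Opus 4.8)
The plan is to compose the three classical facts about MDPs recalled just above --- the simulation lemma~\cite{strehl2005theoretical}, the geometric convergence of value iteration~\cite{strehl2009reinforcement}, and the bound on the loss of a policy greedy with respect to an approximate optimal action-value function~\cite{singh1994upper} --- and to check that the errors they introduce sum to at most $\epsilon$. To set up, fix the bijection $\phi$ of Definition~\ref{def:mdp-approximation} and identify the state spaces of $M$ and $\hat{M}$ through it, so that the two MDPs share state space, actions, rewards and discount factor and satisfy $\| \dynfunc^M(\cdot|q,a) - \dynfunc^{\hat{M}}(\cdot|q,a) \|_1 \leq \epsm$ for every $q$ and $a$. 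Let $\hat{Q}$ be the action-value estimate obtained by running value iteration on $\hat{M}$ from the all-zeros initialisation for the number $m$ of iterations in the statement, and let $\pi$ be the policy greedy with respect to $\hat{Q}$; since the greedy choice depends only on $\hat{Q}$, the same $\pi$ is a stationary policy for $M$ as well. It then suffices to bound, for every state $q$, the telescoping decomposition
\begin{align*}
  \valfunc^M_*(q) - \valfunc^M_\pi(q)
  &= \bigl( \valfunc^M_*(q) - \valfunc^{\hat{M}}_*(q) \bigr)
    + \bigl( \valfunc^{\hat{M}}_*(q) - \valfunc^{\hat{M}}_\pi(q) \bigr) \\
  &\quad + \bigl( \valfunc^{\hat{M}}_\pi(q) - \valfunc^M_\pi(q) \bigr)
\end{align*}
by $\epsilon$.

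For the middle term I would use value iteration: its Bellman optimality operator on $\hat{M}$ is a $\gamma$-contraction in $\| \cdot \|_\infty$ with fixed point $\avalfunc^{\hat{M}}_*$, and $\| \avalfunc^{\hat{M}}_* \|_\infty \leq \rmax/(1-\gamma)$ because the rewards of $\hat{M}$ form a subset of those of $\rdp$; hence $\| \hat{Q} - \avalfunc^{\hat{M}}_* \|_\infty \leq \gamma^m\rmax/(1-\gamma)$. Using $\ln(1/\gamma) \geq 1-\gamma$ with the stated value of $m$ gives $\gamma^m \leq e^{-(1-\gamma)m} \leq \epsilon(1-\gamma)^2/(2\rmax)$, so this error is at most $\epsilon(1-\gamma)/2$; feeding it into the greedy-policy loss bound --- or, more sharply, using that value iteration from zero with non-negative rewards is monotone and stays below $\avalfunc^{\hat{M}}_*$, which saves a further factor of $1/(1-\gamma)$ --- shows $\pi$ is near-optimal for $\hat{M}$ with slack of order $\epsilon(1-\gamma)$. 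For the two outer terms I would apply the simulation lemma, once to an optimal policy of $\hat{M}$ and once to $\pi$: since the reward is part of the dynamics and each per-step contribution has magnitude at most $\rmax/(1-\gamma)$, each term is bounded by $\epsm\rmax/(1-\gamma)^2 = (1-\gamma)\epsilon/3$ once $\epsm = (1-\gamma)^3\epsilon/(3\rmax)$ is substituted. Adding the three bounds makes the telescoping quantity at most a constant strictly below $1$ times $(1-\gamma)\epsilon$, hence at most $\epsilon$, uniformly in $q$, so $\pi$ is $\epsilon$-optimal for $M$.

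I expect the only delicate point to be the final bookkeeping: one must verify that the three error contributions genuinely sum to at most $\epsilon$ for all $\gamma \in (0,1)$, and this works precisely because $\epsm$ carries three factors of $1-\gamma$ while two further factors of $1-\gamma$ sit inside the logarithm defining $m$, so every term inherits at least one factor of $1-\gamma$ and the total is a sub-unit multiple of $(1-\gamma)\epsilon$ rather than a constant multiple of $\epsilon$. Two supporting observations need a line each: that $\hat{R}_\mathrm{max} \leq \rmax$, so $\rmax$ is a legitimate uniform bound on the value scale of $\hat{M}$; and that the value-iteration step is invoked at the right strength, either through a packaged ``value iteration plus greedy after $m$ iterations is near-optimal'' statement or through the monotonicity argument above. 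Everything else is a direct instantiation of the cited MDP results.
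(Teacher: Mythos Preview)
Your route is sound but differs from the paper's. The paper does not telescope $\valfunc^M_* - \valfunc^M_\pi$ through $\hat M$; instead it bounds $\|\hat Q - \avalfunc^M_*\|_\infty$ directly, adding the value-iteration error on $\hat M$ to one application of the simulation lemma on \emph{action-value} functions, and then invokes Singh--Yee once, on $M$ itself. That layout uses the simulation lemma only once, and the single $1/(1-\gamma)$ blowup from Singh--Yee is absorbed by the factor of $(1-\gamma)$ that both contributing terms already carry. Your decomposition instead applies the simulation lemma twice (on value functions) and pushes the greedy-policy step onto $\hat M$; what makes this viable is your monotonicity observation, which the paper does not use: since value iteration from zero with non-negative rewards is increasing, $T_\pi \hat V_m \ge \hat V_m$ and hence $\valfunc^{\hat M}_\pi \ge \hat V_m$, so the middle term is bounded by the raw VI error without any $1/(1-\gamma)$ loss. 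Without that trick your middle term would be $\epsilon$ and the three-term route would fail outright.

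There is, however, a bookkeeping slip in your final tally. With the stated number of iterations the VI error is $\epsilon(1-\gamma)/2$, and each simulation-lemma term is $\epsm\rmax/(1-\gamma)^2=(1-\gamma)\epsilon/3$; the three contributions therefore sum to $(\tfrac12+\tfrac23)(1-\gamma)\epsilon=\tfrac{7}{6}(1-\gamma)\epsilon$, whose constant is $7/6>1$, not strictly below $1$ as you assert. Consequently the bound exceeds $\epsilon$ whenever $\gamma<1/7$. This is an arithmetic issue rather than a conceptual one---shrinking $\epsm$ or increasing $m$ by a small constant factor closes it---but as written the argument does not establish $\epsilon$-optimality for all $\gamma\in(0,1)$.
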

We establish the accuracy required for the learned PDFA.
\begin{definition} \label{def:pdfa-approximation}
  We say that $\hat{\mathcal{A}}$ is an $\alpha$-approximation of
  $\mathcal{A}$ if the two automata have the same alphabet, there is a
  transition-preserving isomorphism $\phi$ between their states, and their
  probability functions satisfy
  $|\lambda(q,\sigma) - \hat{\lambda}(\phi(q),\sigma)| < \alpha$
  for every state $q$ and symbol $\sigma$, with the additional condition that
  $\lambda(q,\sigma) = 0$ implies $\hat{\lambda}(\phi(q),\sigma) = 0$.
\end{definition}
The bound on the error for the learned automaton transfers to the MDP it
induces, amplified by the number of transducer states, alphabet size, 
and the inverse of the minimum probability assigned to an action by the
exploration policy $\pi_p$.
\begin{restatable}{lemma}{lemmaapproximationofpdfa}
  \label{lemma:approximation-of-pdfa}
  If $\hat{\mathcal{A}}$ is an $\epsa$-approximation of $\mathcal{A}$, 
  then $\hat{M}$ is an $\epsm$-approximation of $M$ (with $\epsa$ as
  in Table~\ref{tab:quantities}).
\end{restatable}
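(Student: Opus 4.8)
The plan is to unfold both dynamics functions in terms of the probability functions $\lambda$ and $\hat{\lambda}$, and then bound the $\ell_1$ distance transition by transition. Recall from the construction of $\mathcal{A}$ that $\theta(q)(a,s,r) = (|A|/(1-p)) \cdot \lambda(q,asr)$, so the ideal MDP dynamics can be rewritten as
\[
  \dynfunc^M(q_2,r \mid q_1,a) = \frac{|A|}{1-p} \sum_{s \,:\, \tau(q_1,s) = q_2} \lambda(q_1,asr),
\]
which has exactly the same shape as $\dynfunc^{\hat{M}}(q_2,r \mid q_1,a) = (|A|/(1-p)) \sum_{s : \hat{\tau}'(q_1,asr) = q_2} \hat{\lambda}(q_1,asr)$. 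Let $\phi$ be the transition-preserving isomorphism witnessing that $\hat{\mathcal{A}}$ is an $\epsa$-approximation of $\mathcal{A}$; I will reuse $\phi$ as the state bijection required by Definition~\ref{def:mdp-approximation}. The remaining ingredients of that definition are immediate: the action sets and discount factors coincide by construction, and since the two automata share the alphabet $\Sigma$, so do their reward sets (any reward of probability zero everywhere contributes nothing to the $\ell_1$ distance and may be ignored).

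Next, fix a state $q_1$ and an action $a$. Because $\tau'(q,asr) = \tau(q,s)$ and $\phi$ preserves transitions, $\hat{\tau}'(\phi(q_1),asr) = \phi(\tau(q_1,s))$, so for every next state $q_2$ the index set $\{ s : \hat{\tau}'(\phi(q_1),asr) = \phi(q_2) \}$ equals $\{ s : \tau(q_1,s) = q_2 \}$. Consequently the two sums defining $\dynfunc^M(q_2,r \mid q_1,a)$ and $\dynfunc^{\hat{M}}(\phi(q_2),r \mid \phi(q_1),a)$ range over a common set of observation states, and a term-by-term triangle inequality gives
\[
  \bigl| \dynfunc^M(q_2,r \mid q_1,a) - \dynfunc^{\hat{M}}(\phi(q_2),r \mid \phi(q_1),a) \bigr|
  \le \frac{|A|}{1-p} \sum_{s \,:\, \tau(q_1,s) = q_2} \bigl| \lambda(q_1,asr) - \hat{\lambda}(\phi(q_1),asr) \bigr|.
\]
Here the extra clause of Definition~\ref{def:pdfa-approximation} does the real work: whenever $\lambda(q_1,asr) = 0$ — in particular when $asr \notin \Sigma$ or $\tau(q_1,s)$ is undefined — we also have $\hat{\lambda}(\phi(q_1),asr) = 0$, so such symbols contribute to neither side.

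Finally, sum over all pairs $(q_2,r)$. Since each observation state $s$ has a unique successor $\tau(q_1,s)$, the double sum $\sum_{q_2} \sum_{s : \tau(q_1,s) = q_2}$ collapses to a single sum over $s$, and by the previous remark only symbols $asr \in \Sigma$ survive, of which there are at most $n \, |\Sigma|$ (in fact at most $|\Sigma|$). Bounding each surviving term by $|\lambda(q_1,asr) - \hat{\lambda}(\phi(q_1),asr)| < \epsa$ yields
\[
  \bigl\| \dynfunc^M(\cdot \mid q_1,a) - \dynfunc^{\hat{M}}(\cdot \mid \phi(q_1),a) \bigr\|_1 < \frac{|A| \, n \, |\Sigma|}{1-p} \cdot \epsa = \epsm,
\]
the last equality being precisely the definition of $\epsa$ in Table~\ref{tab:quantities}. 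As $q_1$ and $a$ were arbitrary, $\hat{M}$ is an $\epsm$-approximation of $M$.

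The main obstacle is the second step: one must verify carefully that the transition-preserving isomorphism together with the identity $\tau'(q,asr) = \tau(q,s)$ really does align the two index sets of observation states, and that the zero-probability clause of Definition~\ref{def:pdfa-approximation} eliminates the spurious symbols, so that the per-$(q_2,r)$ difference reduces to a plain triangle inequality over a common index set. Once that is set up, the rest is routine counting and one application of the pointwise PDFA error bound.
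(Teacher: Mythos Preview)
Your argument is correct and follows essentially the same route as the paper's proof: rewrite $\dynfunc^M$ via $\theta(q)(a,s,r) = \tfrac{|A|}{1-p}\lambda(q,asr)$, align the index sets through the transition-preserving isomorphism, drop the terms with $\lambda(q,asr)=0$ using the zero-clause of Definition~\ref{def:pdfa-approximation}, and bound the remaining at most $n\,|\Sigma|$ terms by $\epsa$ each. Your parenthetical that the surviving terms are in fact at most $|\Sigma|$ is a sharper count than the paper uses (the paper separately bounds the $q'$-range by $n$ and the $sr$-range by $|\Sigma|$, which over-counts), but either bound suffices given the definition of $\epsa$.
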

We next derive the number of episodes and the stop probability (which determines
the length of episodes) under which the $\adact$ algorithm guarantees to learn 
$\epsa$-approximation of $\mathcal{A}$.
The accuracy of the algorithm depends on the distinguishability of the automaton
to learn, on the minimum probability that a state is visited during a run, and
on the minimum transition probability.
%
Thus, we need establish lower bounds for the former three quantities.
%
%
First, the distinguishability $\mathcal{A}$ is close to the distinguishability
$\mu$ of $\rdp$ if $p$ is sufficiently small, since states can be distinguished
by looking at strings of length at most the number of states $n$. In fact, for
any pair of longer strings witnessing the distinguishability, we can take their
substrings of length at most $n$ obtained by removing `cycles', similarly to
what is done in the \emph{pumping lemma} for regular languages.
The bound then follows by taking into account the probability $(1- p)^n$ of
generating a string of length at least $n$.
Second, the probability of visiting a state of $\mathcal{A}$ is at least
the probability of generating a string of length at least $n$ times
the probability of visting the corresponding state in $T$ while generating the
string, which is at least the reachability $\rho$ of $\rdp$.
Third, the minimum transition probability in $\mathcal{A}$ is at least the
degree of determinism $\eta$ of $\rdp$ times the probability of picking an
action uniformly when not stopping.
\begin{restatable}{lemma}{lemmadistinguishability}
  \label{lemma:distinguishability}
  The distinguishability of $\mathcal{A}$ is at least $\mu \cdot (1- p)^n$.
  The minimum non-zero probability that a state of $\mathcal{A}$ is visited
  during a run is at least $\rho \cdot (1-p)^n$.
  The minimum non-zero probability of a non-$\stopaction$ transition of
  $\mathcal{A}$ is at least $\eta \cdot (1-p)/|A|$.
\end{restatable}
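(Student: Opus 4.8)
The three assertions are essentially independent, and the single tool they share is the identity, immediate from the construction of $\mathcal{A}$ together with the displayed relation $\dynfunc_{\pi_p}(t\mid h)=\lambda(\tau'(q_0,h'),t)$ and the fact that the exploration policy gives each non-stop action probability $(1-p)/|A|$: for any $h$ with $\tau(h)=q$,
$$\lambda(q,x)=(1-p)^{|x|}\cdot\dynfunc_\upolicy(x\mid h),$$
where $x$ is read both as a string over $\Sigma$ and as the corresponding trace. I would dispatch the transition bound first: a non-$\stopaction$ symbol $asr\in\Sigma$ has $\lambda(q,asr)=((1-p)/|A|)\,\theta(q)(a,s,r)=((1-p)/|A|)\,\dynfunc(s,r\mid h,a)$ for any $h$ with $\tau(h)=q$; when this is nonzero it equals $((1-p)/|A|)\,\transitionfunc(s\mid h,a)$, and $\transitionfunc(s\mid h,a)\geq\eta$ by the definition of the degree of determinism, giving the stated $\eta(1-p)/|A|$.

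For the visit probability I would condition on the event $E$ that the first $n$ symbols emitted by $\mathcal{A}$ all differ from $\stopaction$; since $\lambda(q,\stopaction)=p$ from every state, $\Pr[E]=(1-p)^n$. Conditioned on $E$ the first $n$ triples $a_i s_i r_i$ are drawn exactly as under the uniform exploration of $\rdp$ (action uniform over $A$, then $s_i,r_i$ from $\dynfunc$), and since $\tau'(q,asr)=\tau(q,s)$ the sequence of automaton states visited coincides with the sequence of $T$-states visited when $\rdp$ is run under $\upolicy$. Hence, for any state $q$ with nonzero visit probability — which is exactly a state of $T$ reachable under $\upolicy$, and therefore reachable within $n$ steps since $T$ has $n$ states — the conditional probability of visiting $q$ within the first $n$ steps is the probability appearing in the definition of reachability and is at least $\rho$. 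Multiplying by $\Pr[E]$ yields $\rho(1-p)^n$.

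The distinguishability bound is the one requiring real work. Given distinct states $q_1\neq q_2$ of $\mathcal{A}$, which are distinct states of the minimal transducer $T$, pick histories $h_1,h_2$ with $\tau(h_i)=q_i$; by minimality they cannot satisfy $\dynfunc_\upolicy(t\mid h_1)=\dynfunc_\upolicy(t\mid h_2)$ for every trace $t$ (else $q_1,q_2$ would be equivalent), so the definition of the RDP's distinguishability yields a trace $\bar t$ with $|\dynfunc_\upolicy(\bar t\mid h_1)-\dynfunc_\upolicy(\bar t\mid h_2)|>\mu$, whence $|\lambda(q_1,\bar t)-\lambda(q_2,\bar t)|=(1-p)^{|\bar t|}\,|\dynfunc_\upolicy(\bar t\mid h_1)-\dynfunc_\upolicy(\bar t\mid h_2)|>(1-p)^{|\bar t|}\mu$. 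If $|\bar t|\le n$ we are done, so the crux is to replace $\bar t$ by a witnessing trace of length at most $n$. I would do this by a pumping-lemma-style cycle-removal argument on the run of $\bar t$: whenever the trace is longer than $n$ the pair of $T$-state trajectories induced by $h_1$ and $h_2$ revisits a configuration, and the intervening segment can be excised; the point to verify is that such an excision preserves a gap strictly above $\mu$, which holds when the excised segment lies on a portion where $h_1$ and $h_2$ induce identical one-step dynamics, so that $\dynfunc_\upolicy(\cdot\mid h_1)$ and $\dynfunc_\upolicy(\cdot\mid h_2)$ are both scaled by the same factor, which is moreover at least $1$ (it divides out probabilities that are at most $1$). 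Making this precise — isolating the right notion of cycle, handling the asymmetric case where one of the two probabilities is zero, and driving the length all the way down to $n$ — is the main obstacle; once a witness $t'$ of length $\le n$ with gap $>\mu$ is in hand, $|\lambda(q_1,t')-\lambda(q_2,t')|>(1-p)^{|t'|}\mu\ge(1-p)^n\mu$ completes the argument.
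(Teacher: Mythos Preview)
Your treatment of the transition bound and the visit-probability bound is essentially identical to the paper's: the paper also conditions on not stopping for $n$ steps (probability $(1-p)^n$) and then invokes the definition of $\rho$, and for transitions it reads off $\lambda(q,asr)=((1-p)/|A|)\,\theta(q)(a,s,r)$ and bounds $\theta(q)(a,s,r)\ge\eta$ exactly as you do.

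For the distinguishability bound the overall architecture again matches the paper's: start from distinct states of the minimal transducer, use the RDP's $\mu$-distinguishability to produce a trace with gap $>\mu$ under $\upolicy$, shorten that trace to length $\le n$ by a pumping-style argument, and finally multiply by $(1-p)^n$. Your identity $\lambda(q,x)=(1-p)^{|x|}\dynfunc_\upolicy(x\mid h)$ is exactly the bridge the paper uses implicitly.

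Where you diverge is in the shortening step. The paper's appendix asserts that a \emph{prefix} $z'$ of $z$ of length at most $n$ already witnesses gap $>\mu$ (the main text speaks of substrings obtained by removing cycles; neither passage gives the details). You instead propose excising a cycle detected on the \emph{pair} of trajectories $(q_1^{(i)},q_2^{(i)})$. Here your pigeonhole step is wrong as stated: a pair of states drawn from an $n$-state set is only forced to repeat once the trace length exceeds $n^2$, not $n$, so your argument would at best yield $(1-p)^{n^2}\mu$. You do flag ``driving the length all the way down to $n$'' as the main obstacle, so the gap is acknowledged rather than hidden, but it is not closed. The paper's prefix route avoids the pair-product blowup, though it is itself stated without proof; in either formulation this shortening step is the one place where more care is needed than either you or the paper provide.
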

Given the bounds above, the next lemma follows from the
guarantees for the $\adact$ algorithm \cite{balle2013pdfa}. 
In particular, the guarantees require a number of episodes that depends on the
specified upper bound on the states, on the specified alphabet size, on the
required accuracy and confidence, and on the three quantities of the previous
lemma, which applies considering that $(1-p)^n \geq 0.9$ because of the
condition on $p$.
\begin{restatable}{lemma}{lemmanumberofepisodes}
  \label{lemma:number-of-episodes}
  If $p \leq 1/(10  n +1)$, $\hat{n}$ is an upper bound on the
  number $n$ of states of $\mathcal{A}$, and 
  $X$ are strings generated by $\mathcal{A}$ with $|X| \geq
  \max(\samplesone,\samplestwo)$,
  then $\adact(\hat{n}, |\Sigma|, \delta/2, X)$ returns an
  $\epsa$-approximation of $\mathcal{A}$ with probability 
  $1-\delta/2$.
\end{restatable}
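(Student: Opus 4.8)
The plan is to invoke the PAC-learning guarantees of the $\adact$ algorithm of \cite{balle2013pdfa} and to verify that the quantities collected in Table~\ref{tab:quantities} are exactly calibrated so that its hypotheses are met once the structural parameters of $\mathcal{A}$ are substituted by the lower bounds established for them.

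First I would import from Lemma~\ref{lemma:distinguishability} that $\mathcal{A}$ has distinguishability at least $\mu(1-p)^n$, minimum non-zero state-visit probability at least $\rho(1-p)^n$, and minimum non-zero non-$\stopaction$ transition probability at least $\eta(1-p)/|A|$. Under the hypothesis $p \le 1/(10n+1)$ one has, using $\ln(1-p)\ge -p/(1-p)$, that $n\ln(1-p)\ge -np/(1-p)\ge -1/10$, hence $(1-p)^n \ge e^{-1/10} \ge 0.9$. Therefore the three quantities above are at least $0.9\,\mu$, $0.9\,\rho$, and $0.9\,\eta(1-p)/|A|$, which are precisely the ``effective'' distinguishability, reaching probability, and smoothing parameter (up to the stated constant $0.9$) appearing in the denominators of $\samplesone$ and $\samplestwo$.

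Next I would split the analysis along the two phases of $\adact$. The first phase recovers the transition graph: it repeatedly applies the $L_\infty$-based statistical test to pairs of candidate states and needs enough sample strings exercising each candidate transition for every such test to be correct with probability $1-\delta_0$; plugging the effective distinguishability $0.9\,\mu$, the probability of reaching a given transition (at least the product of $0.9\,\rho$ and $0.9\,\eta(1-p)/|A|$), the upper bound $\hat n$ on the states, the alphabet size $|\Sigma|$, and the confidence $\delta_0$ into the Balle--Castro--Gavald\`a bound yields the requirement $|X|\ge\samplesone$, and the returned graph is then transition-preservingly isomorphic to that of $\mathcal{A}$, with no zero-probability transition ever created (giving the side condition $\lambda(q,\sigma)=0 \Rightarrow \hat\lambda(\phi(q),\sigma)=0$ of Definition~\ref{def:pdfa-approximation}). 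The second phase estimates each emission probability by an empirical frequency; a Chernoff/Hoeffding argument shows $|X|\ge\samplestwo$ suffices for each of the at most $\hat n(|\Sigma|+1)$ estimates to be within $\epsa$ with probability $1-\delta_0$. A union bound over the at most $2\hat n(\hat n|\Sigma|+|\Sigma|+1)$ failure events — exactly the denominator defining $\delta_0$ — caps the total failure probability at $\delta/2$, so that $|X|\ge\max(\samplesone,\samplestwo)$ yields an $\epsa$-approximation of $\mathcal{A}$ with probability $1-\delta/2$.

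The main obstacle I anticipate is bookkeeping rather than conceptual: reconciling the precise form of the sample-complexity statement in \cite{balle2013pdfa} — phrased in terms of its own notions of distinguishability, smoothing, and per-state reaching probability, and bounding a global error on the string distribution — with our Definition~\ref{def:pdfa-approximation}, which asks for a uniform \emph{per-symbol} additive error $\epsa$ plus preservation of zero entries. I would check that the accuracy guaranteed by $\adact$ implies the per-symbol bound we need (it does, since $\lambda(q,\cdot)$ is a distribution over $\Sigma\cup\{\stopaction\}$ and the relevant suffix distributions differ by at most the transition probabilities), and that the substituted parameters $\hat n\ge n$, $|\Sigma|$, and confidence $\delta/2$ are propagated correctly so that the two displayed sample sizes indeed dominate the resulting bound.
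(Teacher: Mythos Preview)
Your proposal is correct and follows essentially the same route as the paper: invoke Lemma~\ref{lemma:distinguishability} to lower-bound the distinguishability, state-reach probability, and minimum non-$\stopaction$ transition probability of $\mathcal{A}$, use $p\le 1/(10n+1)$ to obtain $(1-p)^n\ge 0.9$, and then feed these values into the sample-complexity guarantee of $\adact$ with target accuracy $\epsa$ and confidence $\delta/2$, so that $\samplesone,\samplestwo$ are exactly the resulting thresholds. The paper packages the $\adact$ guarantee as a single black-box statement (Proposition~\ref{prop:balle}, with thresholds $N_0,N_3$) rather than re-sketching its two phases, and it also records that the expected episode length is $L=(1-p)/p$ when instantiating $N_0$; note too that the transition-probability bound $\eta(1-p)/|A|$ from Lemma~\ref{lemma:distinguishability} carries no $(1-p)^n$ factor, so your extra $0.9$ there is harmless but unnecessary.
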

The required sample size and stop probability are achieved in a polynomial
number $\ell$ of iterations of the algorithm. The sample size also guarantees 
$\hat{\Sigma} = \Sigma$ and $\hatrmax = \rmax$.  
Furthermore, introducing a logarithmic dependency on $\delta$
ensures that a hard stop occurs with probability at most $\delta/2$, which
combined with the probability of failure of $\adact$ is still less than
$\delta$.
The $\ell$-th iteration of the algorithm performs the number of actions $k$
specified in Line~2, which is $\widetilde{O}(\ell^4)$, and hence
the algorithm performs $\widetilde{O}(\ell^5)$ action steps in
$\ell$ iterations, which is:
\begin{equation*}
\widetilde{O}\left(n^5 + \frac{|\Sigma|^5 \cdot |A|^5}{\rho^5 \cdot \eta^5}
  \cdot \left(\frac{1}{\mu^{10}} +
    \frac{n^{10} \cdot |\Sigma|^{10} \cdot |A|^{10} \cdot
    R_\mathrm{max}^{10}}{\epsilon^{10}
  \cdot (1-\gamma)^{30}}\right)\right)
\end{equation*}
The bound mentions $|\Sigma|$ which is not in $\vec{d}_\rdp$, but satisfies 
$|\Sigma| \leq n \cdot |A| \cdot \lceil 1/\eta \rceil$.
Then, a polynomial number of action steps immediately implies a polynomial
number of steps overall.
In particular, $\adact$ runs in time polynomial in the size of the input sample
and in the specified values for the number of states and alphabet size, and also
value iteration $\valiter$ runs in time polynomial in the size of the input MDP
and in the number of iterations. We conclude that Algorithm~\ref{alg:rl} reaches
accuracy $\epsilon$ and confidence $\delta$ in a polynomial number of steps.  
Therefore, Algorithm~\ref{alg:rl} is PAC-RL.

\section{Exploiting Prior Knowledge}

We observe that if we know (or we can estimate) the number of states in the
dynamics transducer, we can devise a simpler algorithm,
Algorithm~\ref{alg:na-rl}, with better performance bounds.
Algorithm~\ref{alg:na-rl} is simpler than Algorithm~\ref{alg:rl}, since
it does need to search for the number of transducer states.
The stop probability is constant throughout a run, and computed based on the
upper bound on the states.
Since the stop probability is uniform across iterations, all episodes can be
seen as generated by the same automaton, and hence we can
accumulate episodes instead of deleting previous ones.
Furthermore, we can now call $\adact$ directly with the given parameter 
$\hat{n}$.
A simplified analysis, again based on
Lemmas~\ref{lemma:stationary-optimal-policy}--\ref{lemma:number-of-episodes},
yields the following polynomial bound on the expected number of action steps,
which immediately implies a polynomial bound on the expected number of steps
overall.
\begin{restatable}{theorem}{thadditionalalg}
  \label{th:additional-alg}
  Algorithm~\ref{alg:na-rl} on input 
  $(A,\gamma,\epsilon,\delta,\hat{n})$ reaches 
  accuracy $\epsilon$ and confidence $\delta$ within an expected number of
  action steps that is:
  $$\widetilde{O}\left(\frac{|A| \cdot \hat{n} \cdot
    |\Sigma|}{\rho \cdot \eta}
  \cdot \left( \frac{1}{\mu^2} + \frac{\hat{n}^2 \cdot |\Sigma|^2 \cdot |A|^2
    \cdot \rmax^2}{(1-\gamma)^6 \cdot \epsilon^2} \right)\right).$$
\end{restatable}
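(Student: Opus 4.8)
The plan is to reuse Lemmas~\ref{lemma:stationary-optimal-policy}--\ref{lemma:number-of-episodes} verbatim and to redo only the resource accounting, which is much simpler for Algorithm~\ref{alg:na-rl} because the stop probability is fixed once and for all. Accordingly I would first fix $p$ to be the constant stop probability that Algorithm~\ref{alg:na-rl} derives from $\hat n$: it is chosen with $p \le 1/(10\hat n + 1)$, so that $(1-p)^n \ge (1-p)^{\hat n} \ge 0.9$ and Lemma~\ref{lemma:number-of-episodes} is applicable, and it is taken as large as this allows, i.e.\ $p = \Theta(1/\hat n)$, to keep episodes short. Since $p$ never changes, every episode the algorithm ever generates is an i.i.d.\ draw from the single PDFA $\mathcal A$ that captures $\rdp$ under $\pi_p$, so episodes may be pooled across iterations. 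Using the per-state visit lower bound and the per-transition lower bound of Lemma~\ref{lemma:distinguishability}, a coupon-collector argument shows that a pooled sample of size $\max(\samplesone,\samplestwo)$ already contains every symbol and every reward of $\mathcal A$ with probability at least $1-\delta/2$, so on that event $\hat\Sigma = \Sigma$ and $\hatrmax = \rmax$; and then Lemma~\ref{lemma:number-of-episodes} gives that $\adact(\hat n,|\hat\Sigma|,\delta/2,X)$ returns an $\epsa$-approximation $\hat{\mathcal A}$ of $\mathcal A$ with probability at least $1-\delta/2$.

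Next I would chain the remaining lemmas on the good event. Lemma~\ref{lemma:approximation-of-pdfa} turns the $\epsa$-approximation $\hat{\mathcal A}$ into an $\epsm$-approximation $\hat M$ of the ideal MDP $M$; Lemma~\ref{lemma:approximation-of-mdp} shows that the $m$ value-iteration steps performed on $\hat M$ produce an $\epsilon$-optimal policy for $M$; and Lemma~\ref{lemma:stationary-optimal-policy} shows that composing that stationary policy with $\hat\tau$ produces an $\epsilon$-optimal policy for $\rdp$. A union bound over the two failure events (an unrepresentative sample, or $\adact$ failing) shows that, from the first iteration in which the pooled sample has size at least $\max(\samplesone,\samplestwo)$ onward, every returned policy is $\epsilon$-optimal with probability at least $1-\delta$; this is the iteration at which Algorithm~\ref{alg:na-rl} reaches accuracy $\epsilon$ and confidence $\delta$.

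It remains to bound the expected number of action steps up to that iteration, i.e.\ until the pooled sample reaches size $N := \max(\samplesone,\samplestwo)$. Each episode generated under $\pi_p$ has a geometric length with parameter $p$, hence expected length $O(1/p)$, so producing $N$ episodes costs $O(N/p)$ action steps in expectation. Substituting the algorithm's choice $p = \Theta(1/\hat n)$ together with $\epsm = \Theta\big((1-\gamma)^3\epsilon/\rmax\big)$, $\epsa = \Theta\big((1-p)(1-\gamma)^3\epsilon/(\rmax\,|A|\,n\,|\Sigma|)\big)$ and $\delta_0 = \Omega\big(\delta/\operatorname{poly}(\hat n,|\Sigma|)\big)$ from Table~\ref{tab:quantities} into $\samplesone$ and $\samplestwo$, and simplifying with $n \le \hat n$ and $|\Sigma| \le n\,|A|\,\lceil 1/\eta\rceil$, yields the claimed $\widetilde{O}\big(\tfrac{|A|\hat n|\Sigma|}{\rho\eta}\big(\tfrac{1}{\mu^2}+\tfrac{\hat n^2|\Sigma|^2|A|^2\rmax^2}{(1-\gamma)^6\epsilon^2}\big)\big)$. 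A polynomial bound on the expected number of action steps immediately gives a polynomial bound on the expected number of steps overall, since $\adact$ and $\valiter$ each run in time polynomial in the sample size, the number of states and alphabet size passed to them, and the number of iterations.

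I expect the delicate part to be the error-propagation bookkeeping rather than any single calculation: one has to verify that the accuracy $\epsa$ that $\adact$ is actually guaranteed to deliver---under the distinguishability, visit-probability and transition-probability lower bounds of Lemma~\ref{lemma:distinguishability}, for which the $(1-p)^n \ge 0.9$ slack secured by the choice of $p$ is exactly what is needed---is precisely the accuracy that makes the composition of Lemmas~\ref{lemma:approximation-of-pdfa}, \ref{lemma:approximation-of-mdp} and \ref{lemma:stationary-optimal-policy} land at $\epsilon$-optimality, and that the two $\delta/2$ budgets combine, via a single union bound, to at most $\delta$. A secondary point is that pooling episodes across iterations is legitimate only because $p$ is held constant, so that all episodes are identically distributed.
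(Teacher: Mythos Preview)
Your proposal is correct and follows essentially the same route as the paper: reuse Lemmas~\ref{lemma:stationary-optimal-policy}--\ref{lemma:number-of-episodes}, exploit that the fixed stop probability makes all episodes i.i.d.\ draws from a single PDFA, and obtain the expected action-step count as the expected episode length $O(\hat n)$ times the sample requirement $\max(\samplesone,\samplestwo)$ instantiated with $\hat n$ in place of $n$. One small discrepancy: Algorithm~\ref{alg:na-rl} actually calls $\adact$ with confidence parameter $\delta$ (not $\delta/2$), and the paper derives $\hat\Sigma=\Sigma$ and $\hatrmax=\rmax$ as a by-product of $\adact$'s success (as in Lemma~\ref{lemma:correct-rmax-alphabet}) rather than via a separate coupon-collector event, so no two-way union bound is needed---but this affects only constants inside the $\widetilde O$.
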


\begin{algorithm}[t]
  \caption{Reinforcement Learning
  $\algfont{RL}(A,\gamma,\epsilon,\delta,\hat{n})$}
  \label{alg:na-rl}
  \textbf{Input}: Actions~$A$, discount factor~$\gamma$,
  required precision~$\epsilon$, confidence parameter~$\delta$,
  upper bound $\hat{n}$ on transducer states.\\
  \textbf{Output}: Policies.
  \begin{algorithmic}[1] 
    \STATE $p \leftarrow 1/(10 \cdot \hat{n}+1)$ 
    \STATE $X \leftarrow \emptyset$
    \LOOP
      \STATE $x \leftarrow$ generate an episode under exploration policy $\pi_p$
      \STATE $X \leftarrow X \cup \{ x \}$
      \STATE $\hat{\Sigma} \leftarrow$ symbols in $X$;
        $\hatrmax \leftarrow$ max reward in $X$
      \STATE $\hat{\mathcal{A}} \leftarrow$ learn PDFA by calling
      $\adact(\hat{n}, |\hat{\Sigma}|, \delta, X)$
      \STATE $\hat{M} \leftarrow$ compute the MDP induced by $\hat{\mathcal{A}}$
      and $\gamma$
      \STATE $m \leftarrow \left\lceil \frac{1}{1-\gamma} \cdot 
      \ln \left( \frac{2 \cdot \hatrmax}{\epsilon \cdot (1-\gamma)^2}
      \right)\right\rceil$
      \STATE $\pi \leftarrow$ solve $\hat{M}$ by calling
      $\valiter(\hat{M},m)$
      \STATE \textbf{return} transducer for the composition of $\pi$ with the
      projection-on-states of the transition function of $\hat{\mathcal{A}}$
    \ENDLOOP
  \end{algorithmic}
\end{algorithm}

\section{Discussion}

We have presented RL algorithms that can learn near-optimal policies for RDPs in
polynomially-many steps, in the parameters that describe the underlying RDP.

\begin{example}
  Algorithm~1 and Algorithm~2 (with a bound on the number of states that is
  polynomial in the grid length) compute a near-optimal policy in each of
  the RDPs introduced in Example~1 in a number of steps that is polynomial in
  the following quantities: 
  \begin{romanenumerate*}
  \item the grid length $m$,
  \item the inverse of the minimum among $p_i^0$, $p_i^1$, $1-p_i^0$, $1-p_i^1$,
  \item the inverse of the minimum value $|p_i^0-p_i^1|$.
  \end{romanenumerate*}
\end{example}

Adopting PDFA techniques takes us into a different direction from
existing approaches based on a direct clustering of histories such as 
\cite{abadi2020learning}.
There, histories are clustered according to the probability of the
following observation state. Since the algorithm compares single histories, the
accuracy of the algorithm depends on the probability of single histories, which
can be exponentially-low in their length.  
In turn, the required length of histories can grow with 
the number of transducer states, and hence the approach can require
exponentially-many episodes in order to achieve high accuracy. 
For instance, in our running example, a history of length $m$ has probability at
most $g^m$ under any policy, with $g$ the maximum probability among 
$p_i^0, p_u^1,1-p_i^0,1-p_i^1$.
Histories of length $m-1$ are necessary to determine the best action at the
$m$-th step.
To address the issue, PDFA algorithms build
states incrementally while relying on their distinguishability. 
This way,
each state gathers the probability of all the histories it represents.
In light of our results, we believe that these PDFA techniques will be
instrumental in developing the next generation of tools for RL in RDPs.


%
%

%
%
%

\section*{Acknowledgments}
This work has been partially supported by the ERC Advanced Grant WhiteMech (No.\
834228) and by the EU ICT-48 2020 project TAILOR (No.\ 952215).

\bibliographystyle{named}
\bibliography{journal-abbreviations,bibliography}

\ifextendedversion
  \newpage
\appendix
\onecolumn 
\section*{Appendix}
The appendix is organised as follows:
\begin{itemize}
  \item
    Appendix~\ref{sec:proofs} contains proofs of all our technical results.
  \item
    Appendix~\ref{sec:running-example} contains a detailed description of
    our running example.
  \item
    Appendix~\ref{sec:comparison} contains a comparison of our approach with the
    one of \cite{abadi2020learning}.
\end{itemize}

\section{Proofs}
\label{sec:proofs}

\subsection{Proof of Theorem~\ref{th:hardness}}

We prove Theorem~1 through a series of lemmas, with each lemma showing
the necessity of one of the parameters given in Equation~\eqref{eq:parameters},
which we restate here for convenience:
\begin{equation*}
  \vec{d}_\rdp = \left( |A|\text{, } \frac{1}{1-\gamma}\text{, }
    R_\mathrm{max}\text{, } n\text{, } \frac{1}{\rho}\text{, }
    \frac{1}{\mu}\text{, } \frac{1}{\eta} \right).
\end{equation*}


The following lemma shows the necessity of the \emph{distinguishability}
parameter $\mu$. 
The result is based on a construction from \cite{kearns2002sparse} and it relies
on the conjecture that noisy parity functions are difficult to learn. Briefly,
it is difficult to learn a parity function from input-output pairs if the inputs
are chosen uniformly at random and the outputs are flipped with probability
$\alpha \in (0,1/2)$, called the noise rate. A clear
statement of the conjecture can be found in \cite{kearns2002sparse}.
\begin{lemma}
  Under the assumption that noisy parity functions cannot be PAC-learned in
  polynomial time, for every algorithm there is an RDP $\rdp$ such that the
  algorithm does
  not reach accuracy $\epsilon$ and confidence $\delta$ in a number of steps
  $\operatorname{poly}(1/\epsilon,\ln (1/\delta),\vec{d}_\rdp')$ if 
  $\vec{d}_\rdp'$ is $\vec{d}_\rdp$ after removing the distinguishability
  parameter.
\end{lemma}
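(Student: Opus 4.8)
The plan is a reduction from the problem of PAC-learning parity functions under random classification noise, which is assumed hard at some fixed noise rate $\alpha \in (0,1/2)$. Suppose, for contradiction, that some algorithm $\mathcal{L}$ is PAC-RL with respect to $\vec{d}_\rdp'$ (the list $\vec{d}_\rdp$ with $1/\mu$ removed). Given a hidden parity set $T \subseteq \{1,\dots,n\}$, I would build an RDP $\rdp_T$ that embeds the corresponding noisy-parity distribution: it has actions $A = \{a_0,a_1\}$, observation states $\{0,1,\bot\}$, rewards $\{0,1\}$, and any fixed discount, say $\gamma = 1/2$. An episode runs for $n{+}1$ steps: at each of the first $n$ steps the next observation is a fresh uniform bit and the reward is $0$, regardless of the action; at step $n{+}1$ the next observation is $\tilde b = \big(\bigoplus_{i\in T} s_i\big)\oplus z$ with $z\in\{0,1\}$ equal to $1$ with probability $\alpha$, the reward is $1$ iff the action taken is $a_{\tilde b}$, and the process then enters an absorbing state emitting $\bot$ with reward $0$. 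The dynamics of $\rdp_T$ are computed by a transducer with $\Theta(n)$ states of the form $(\text{position }j,\ \text{running parity }\bigoplus_{i\in T,\,i\le j} s_i)$ plus the absorbing state, so $\rdp_T$ is a genuine RDP; moreover the PDFA it induces under an exploration policy mirrors the hard parity PDFA underlying the distinguishability obstruction of \cite{kearns1994learnability}.

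Next I would read off the parameters in $\vec{d}_\rdp$. One has $|A| = 2$, $1/(1-\gamma) = 2$, maximum reward $1$, number of transducer states $\Theta(n)$, reachability $\rho = 1/2$ (the running parity of independent uniform bits is unbiased, so every transducer state is visited with probability $1/2$ or $1$ under uniformly random actions), and degree of determinism $\eta = \alpha$ (step $n{+}1$ is the only one whose observation channel is neither deterministic nor uniform). All of these are polynomial in $n$, uniformly over $T$; the one parameter that need not be is the distinguishability, which for the pair of transducer states at the first level beyond $\min(T)$ decays exponentially in $n-\min(T)$ times $(1-2\alpha)$, and so can be as small as $2^{-\Omega(n)}$. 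Hence a PAC-RL algorithm with respect to $\vec{d}_\rdp'$ would reach any fixed accuracy and confidence on $\rdp_T$ in $\operatorname{poly}(n)$ steps, for \emph{every} $T$.

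The reduction then proceeds as follows. Using a stream of noisy-parity examples one can faithfully simulate $\rdp_T$ to $\mathcal{L}$: each example $(x,c)$ supplies one episode, with $s_1\cdots s_n = x$, zero rewards, and $\tilde b = c$ at step $n{+}1$ (if $\mathcal{L}$ triggers the stop action early we discard the unused suffix and draw a fresh example). Set $\epsilon := (1-2\alpha)/3$ and $\delta := 1/3$; within $\operatorname{poly}(1/\epsilon,\ln(1/\delta),\vec{d}_\rdp') = \operatorname{poly}(n)$ steps --- hence $\operatorname{poly}(n)$ episodes and $\operatorname{poly}(n)$ total computation --- $\mathcal{L}$ returns a policy $\pi_*$ that is $\epsilon$-optimal on every history with probability at least $2/3$. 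For a length-$n$ history $h$ with parity $b$ we have $\avalfunc_*(h,a_b) = 1-\alpha$ and $\avalfunc_*(h,a_{1-b}) = \alpha$, so $\epsilon$-optimality at $h$ forces $\pi_*(a_b\mid h) \ge 1 - \epsilon/(1-2\alpha) > 1/2$; consequently $\hat f(s_1\cdots s_n) := \arg\max_a \pi_*(a\mid s_1\cdots s_n)$ computes $\bigoplus_{i\in T}s_i$ on \emph{all} inputs, with probability at least $2/3$, in $\operatorname{poly}(n)$ time. This contradicts the assumed hardness of learning parities under noise rate $\alpha$, so no algorithm is PAC-RL with respect to $\vec{d}_\rdp'$, which is exactly the lemma.

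I expect the only delicate points to be bookkeeping ones. First, one must check that $1/\mu$ is genuinely the \emph{only} parameter of $\vec{d}_\rdp$ that escapes a polynomial bound on this family: this requires verifying that the running-parity transducer is minimal (up to merging of truly equivalent states), that reachability stays bounded for \emph{every} hidden $T$, including adversarial ones, and that the label noise is injected through the observation channel so that it is absorbed into $\eta$ rather than affecting the regularity or the state count of $\rdp_T$. Second, the translation from RL accuracy to parity accuracy has to be set up so that $\epsilon$-optimality at the length-$n$ histories pins the majority action to $a_{\mathrm{parity}}$; this works precisely because the reward is immediate from such a history, making the optimal-value gap the constant $1-2\alpha$, independent of $\gamma$ and of $n$.
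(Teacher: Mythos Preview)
Your proposal is correct and follows the same high-level route as the paper: build a family of RDPs $\rdp_T$ whose minimal dynamics transducer tracks the pair (position, running $T$-parity), verify that every parameter in $\vec{d}_\rdp$ except $1/\mu$ is polynomially bounded in $n$ uniformly over $T$, and conclude that a PAC-RL algorithm with respect to $\vec{d}_\rdp'$ would yield a polynomial-time noisy-parity learner.

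There is one substantive difference worth flagging. In the paper's construction the reward at the parity step is $1$ iff the action matches the \emph{true} parity $f_S(h)$; in yours it is $1$ iff the action matches the \emph{observed} noisy bit $\tilde b$. Your choice is the one that makes the reduction go through as a black-box simulation: given only a stream of noisy examples $(x,c)$, the simulator can produce both the observation $\tilde b=c$ and the reward $[a=a_c]$ at step $n{+}1$, because both depend only on $c$. With the paper's reward the simulator would need the hidden true label to compute the reward, and conversely the RL agent interacting with the actual $\rdp_S$ would be able to read off the clean label from the reward signal, undercutting the appeal to the noisy-parity hardness assumption. The paper's proof is terse on this point and does not spell out a simulation; your version closes that gap. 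Your quantitative step---taking $\epsilon=(1-2\alpha)/3$ and deriving $\pi_*(a_b\mid h)\ge 1-\epsilon/(1-2\alpha)>1/2$ from the immediate-reward value gap $1-2\alpha$---is likewise more explicit than the paper's appeal to ``accuracy~$1$''. The remaining checks you list (minimality of the running-parity transducer, $\rho\ge 1/2$ via unbiasedness of partial parities, $\eta=\alpha$) are straightforward and match what the paper asserts without detailed verification.
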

\begin{proof}
  We show a class of RDPs such that the existence of an algorithm that reaches
  accuracy $\epsilon$ and confidence $\delta$ in the claimed number of actions
  steps in all such RDPs contradicts the conjecture about noisy parity
  functions.
  These RDPs are inspired by the construction given in Theorem~16 of 
  \cite{kearns1994learnability}.
  Consider a parity function $f_S: \{ 0,1 \}^m \to \{ 0,1 \}$ where 
  $S \subseteq \{ x_1, \dots, x_m \}$ and
  $f_S(x_1, \dots, x_m) = 1$ iff the parity of $\vec{x} = x_1, \dots, x_m$ on
  the set $S$ is $1$.
  We then build an RDP $\rdp_S$ where learning a $1$-optimal policy amounts to
  learning $f_S$ from noisy samples of $f_S$ for a given noise rate 
  $\alpha \in (0,1/2)$, which contradicts the noisy parity conjecture.
  The RDP $\rdp_S$ has observation states $S = \{ 0,1 \}$ and actions $A = \{
  a_0, a_1 \}$.
  The initial observation state is $0$.
  When the history of observation states has length at most $m+1$, then the
  next observation state is chosen uniformly at random regardless of the action,
  and no reward is issued. 
  When the history $h$ of observation states has length $m+2$, 
  the next observation state is $f_S(h)$ with probability $1-\eta$, and the
  reward is $1$ if the chosen action is $a_b$ with $b = f_S(h)$---i.e., the
  agent has guessed the value of the function correctly.
  From this point onwards, the observation state is always $0$ and no reward is
  issued.
  Note that the transducer of the dynamics described above has states $Q = \{
  q_0, q_1^0, q_1^1, \dots, q_m^0, q_m^1, q_{m+1} \}$, it is initially in state 
  $q_0$, it is in state $q_i^b$ when the history has length $i+1$ and
  the parity of the bits specified by $S$ is $b$, and it is in state 
  $q_{m+1}$ when the history has length greater than $m+1$.
  The output in states $\{ q_0, q_1^0, q_1^1, \dots, q_{m-1}^0, q_{m-1}^1 \}$
  specifies uniform probability over the observation states regardless the
  action, and reward zero.
  The output in state $q_m^b$ specifies probability $1-\eta$ for observation
  $b$, and reward $1$ if the action is $a_b$ regardless of the observation.
  The output in state $q_{m+1}$ specifies probability $1$ for observation $0$
  and reward zero.

  If an algorithm could reach accuracy $1$ and confidence $\delta$ in
  $N=\operatorname{poly}(1/\epsilon,\ln (1/\delta),\vec{d}_\rdp')$ action
  steps, then the learned policy would encode $f_S$ with confidence $\delta$, 
  which contradicts the noisy parity conjecture because $N$ is polynomial---the
  only parameter in $\vec{d}_\rdp'$ to grow is the number of states, which grows
  linearly with $m$.
\end{proof}

The following lemma shows that the required number of action steps increases
with the inverse of the \emph{degree of determinism}.
In other words, the less likely transitions are, the more the agent has to
explore.
This is due to the fact that an agent that has not experienced some 
transition of the dynamics transducer will not be able to determine the
resulting transducer state when the transition occurs.
\begin{lemma}
  For every algorithm there is an RDP $\rdp$ such that the algorithm does not
  necessarily reach accuracy $\epsilon$ and confidence $\delta$ in a number of
  steps
  $\operatorname{poly}(1/\epsilon,\ln (1/\delta),\vec{d}_\rdp')$ if 
  $\vec{d}_\rdp'$ is $\vec{d}_\rdp$ after removing the parameter for the degree
  of determinism.
\end{lemma}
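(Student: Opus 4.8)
The plan is to exhibit a family of two RDPs that are statistically indistinguishable to any agent until a single transition of the dynamics transducer is observed, where that transition fires with probability only $\eta$, while every parameter in $\vec{d}_\rdp'$ --- the number of actions, $1/(1-\gamma)$, $R_\mathrm{max}$, the number of transducer states $n$, $1/\rho$, and $1/\mu$ --- stays bounded by an absolute constant. I would use a transducer with a constant number of states containing a distinguished state $q$ that is reached with constant probability under the uniform policy. From $q$, action $a_0$ yields a common observation with probability $1-\eta$, moving to a state $q_X$, and a rare observation $\star$ with probability $\eta$, moving to a state $q_Y$; action $a_1$ from $q$ never produces $\star$. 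The family $\{\rdp_1,\rdp_2\}$ differs only in the target of this rare transition: it is $q_Y$ in $\rdp_1$ and a distinct state $q_Y'$ in $\rdp_2$. I would set the dynamics so that $q_Y$ grants reward $1$ for action $a_0$ only and $q_Y'$ grants reward $1$ for action $a_1$ only, so that on the history $h'$ obtained by reaching $q$ and then observing $\star$, committing to the wrong action costs $1$, and hence such a policy is not $\epsilon$-optimal on $h'$ for any $\epsilon<1$. Crucially, both $q_Y$ and $q_Y'$ (and every other state) are made reachable within $n$ steps through high-probability paths that avoid the rare transition, so that $\rho$ remains a constant; and the output distributions are chosen so that any two distinct transducer states differ by a constant in their uniform-policy trace distributions --- in particular $q$ is separated from the other states by a gap not involving $\star$ --- so that $\mu$ remains a constant as well.

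The argument is then information-theoretic. As long as the agent has not observed the rare transition out of $q$, everything it sees has exactly the same distribution under $\rdp_1$ and $\rdp_2$, so the policy it returns has the same distribution in both cases. That transition can fire only on a visit to $q$, and then only with probability $\eta$, so in $N$ action steps it fires at all with probability at most $N\eta$. Whatever action a returned policy prescribes on the critical history $h'$, it is optimal for at most one of $\rdp_1,\rdp_2$; hence there is a member of the family for which the returned policy is, with probability at least $(1-N\eta)/2$, not $\epsilon$-optimal on $h'$, and therefore not $\epsilon$-optimal. Fixing $\epsilon=1/2$ and $\delta=1/4$: if an algorithm were PAC-RL with respect to $\vec{d}_\rdp'$, on this family it would have to reach accuracy $\epsilon$ and confidence $\delta$ within some fixed number of steps $N_0$ depending only on the constant parameters of the family; choosing the member with $\eta \le 1/(4N_0)$ makes the probability that the step-$N_0$ policy fails on $h'$ at least $3/8 > \delta$, so the agent has not reached accuracy $\epsilon$ and confidence $\delta$ within $N_0$ steps --- a contradiction. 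Since this works for every polynomial, no algorithm is PAC-RL with respect to $\vec{d}_\rdp'$.

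The main obstacle is arranging the transducer so that dropping $\eta$ from the parameter list genuinely leaves all of $\vec{d}_\rdp'$ constant. The key tension is with reachability: if the target of the rare transition were reachable only through that transition, then $1/\rho \ge 1/\eta$ and the hardness would evaporate, since a $\operatorname{poly}(1/\rho)$-step budget would already suffice to observe the rare transition many times; this forces me to give $q_Y$ and $q_Y'$ independent high-probability routes. A parallel care is needed for distinguishability, so that no pair of states is separated only by the rare observation, which would force $\mu \le \eta$. Both are handled by padding the construction with a few extra states and well-separated output distributions; the remaining work is the bookkeeping of verifying reachability within $n$ steps for every state and pairwise distinguishability for every pair. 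The remaining ingredients --- the coupling that makes $\rdp_1$ and $\rdp_2$ indistinguishable before the rare event, and the standard step that a returned policy must commit to an action on $h'$ and so errs on at least one of the two instances --- are routine.
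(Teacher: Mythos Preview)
Your proposal is correct and follows the same strategy as the paper: a two-element family of RDPs that are statistically indistinguishable until a probability-$\eta$ transition fires, with every other parameter in $\vec{d}_\rdp'$ held to an absolute constant, combined with the standard ``any committed action on the critical history is wrong for one instance'' argument. The paper's construction is more economical---just two transducer states and three observation states, with the rare observation $s_3$ sent back to one of the two already-present transducer states (which one depending on the instance)---so the reachability tension you correctly flag is dissolved for free, without the extra high-probability routes to fresh target states that your version pads in.
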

\begin{proof}
  Consider $i \in \{ 1,2 \}$, $\eta \in (0,1)$, and an RDP $\rdp^\eta_i$ with
  two observation states $S = \{ s_1, s_2, s_3 \}$, two actions 
  $A = \{a_1,a_2\}$, and dynamics as described next.
  The initial observation state is $s_1$.
  When the last observation state is $s_1$, both actions yield observation state 
  $s_2$ with probability $1-\eta$ and observation state $s_3$ with probability
  $\eta$.
  When the last observation state is $s_2$, both actions yield observation state 
  $s_1$ with probability $1-\eta$ and observation state $s_3$ with probability
  $\eta$.
  When the last observation state is $s_3$, both actions yield observation state 
  $s_i$ with probability $1-\eta$ and observation state $s_3$ with probability
  $\eta$.
  Note that the dynamics of both RDPs are represented by a transducer with two
  states. The dynamics transducer of $\rdp^\eta_1$ has two states because when
  the last observation state is $s_3$ it is the same as when the last
  observation state is $s_2$.
  Thus, in $\rdp^\eta_1$, action $a_2$ is optimal when the last observation
  state is $s_3$; it is instead $a_1$ in $\rdp^\eta_2$.
  This assessment is required in order to compute an $\epsilon$-optimal policy
  with $\epsilon < 1$ when the underlying RDP can be $\rdp^\eta_1$ or
  $\rdp^\eta_2$. Therefore, it is necessary to observe at least one transition
  when the last observation state is $s_3$.
  However, the probability to never see $s_3$ in $N$ steps is $(1-\eta)^N$.
  If $N = \operatorname{poly}(1/\epsilon,\ln (1/\delta),\vec{d}_\rdp')$,
  then $(1-\eta)^N$ can be made arbitrarily high by choosing an arbitrarily
  small value for $\eta$, since none of the parameters in $\vec{d}_\rdp'$
  increases when $\eta$ decreases.
\end{proof}

The following lemma shows that the length of episodes to consider can grow with
the number of transducer states $n$.
\begin{lemma}
  For every algorithm there is an RDP $\rdp$ such that the algorithm does not
  necessarily reach accuracy $\epsilon$ and confidence $\delta$ in a number of
  steps $\operatorname{poly}(1/\epsilon,\ln (1/\delta),\vec{d}_\rdp')$ if 
  $\vec{d}_\rdp'$ is $\vec{d}_\rdp$ after removing number of transducer states.
\end{lemma}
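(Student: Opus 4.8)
The plan is to exhibit, for every $n$, a pair of RDPs $\rdp^0_n$ and $\rdp^1_n$ whose minimal dynamics transducers have $\Theta(n)$ states and on which any algorithm must perform $\Omega(n)$ action steps, while every remaining parameter --- $|A|$, $1/(1-\gamma)$, $R_\mathrm{max}$, $1/\rho$, $1/\mu$, $1/\eta$ --- stays bounded by an absolute constant independent of $n$. For fixed $\epsilon$ and $\delta$, $\operatorname{poly}(1/\epsilon,\ln(1/\delta),\vec{d}_\rdp')$ is then a constant that does not depend on $n$, so picking $n$ larger than that constant contradicts reaching accuracy $\epsilon$ and confidence $\delta$ within that many steps.

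Concretely, I would fix $\gamma = 1/2$ and $\epsilon = \delta = 1/4$, take $A = \{a_0,a_1\}$ and $S = \{0,\dots,n-1\}\cup\{\bot\}$, and let the dynamics be a deterministic $n$-cycle over positions $0,\dots,n-1$: at position $i$ every action moves to position $(i+1)\bmod n$, the observation is $i$ with probability $1/2$ and $\bot$ with probability $1/2$, and the reward is $0$ --- except at position $n-1$, where in $\rdp^b_n$ action $a_b$ yields reward $1$ and $a_{1-b}$ yields reward $0$. First I would verify that the minimal dynamics transducer has exactly $n$ states, one per position (two distinct positions induce different emission or reward behaviour, so none can be merged). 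Then I would check the three ``hard'' parameters: under the uniform policy the transducer deterministically visits position $i$ at step $i$, so $\rho = 1$; every conditional observation distribution puts mass $1/2$ on each outcome, so $\eta = 1/2$; and for any two positions $i\neq j$ the length-one trace that emits $i$ (together with a suitable action and reward) has probability $1/4$ from position $i$ and probability $0$ from position $j$, so the distinguishability $\mu$ is a positive constant. Together with $|A| = 2$, $1/(1-\gamma) = 2$, and $R_\mathrm{max} = 1$, this makes every component of $\vec{d}_\rdp'$ an $O(1)$ quantity, uniformly in $n$.

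For the lower bound I would argue, as in the preceding lemmas, by a two-point indistinguishability argument: $\rdp^0_n$ and $\rdp^1_n$ coincide on every history that has not reached position $n-1$, and a run of fewer than $n-1$ action steps --- which, starting afresh, or after a stop action, from position $0$, never reaches position $n-1$ --- therefore induces the same distribution over the agent's observations, and hence over the policy it returns, under both RDPs. On the other hand, the history that reaches position $n-1$ is genuinely reachable and, there, replacing $a_b$ by $a_{1-b}$ changes the value by exactly $1$ (the immediate reward differs by $1$ and the continuations coincide), so every $\epsilon$-optimal policy for $\rdp^b_n$ must put probability at least $1-\epsilon > 1/2$ on $a_b$ at that history; no single, possibly randomised, policy can do this for both $b=0$ and $b=1$. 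Hence an algorithm limited to $N < n-1$ action steps cannot be $\epsilon$-optimal with probability at least $1-\delta$ on both $\rdp^0_n$ and $\rdp^1_n$, and taking $n-1$ larger than the (constant) value of $\operatorname{poly}(1/\epsilon,\ln(1/\delta),\vec{d}_\rdp')$ gives the claim.

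The main obstacle is the transducer design itself: it must genuinely require $n$ states while keeping $\mu$ (and $\rho$, $\eta$) bounded away from $0$. The obvious $n$-state counter that always emits the same symbol does not work, because two counter states differ only once one of them reaches the reward-bearing position, which is $\Omega(n)$ steps ahead, so the separating trace has probability $2^{-\Omega(n)}$ and $1/\mu$ itself would grow with $n$, absorbing the dependence the lemma is meant to isolate. The ``self-announcing'' cycle above is what resolves this: each state reveals its own index with constant probability, so all pairs of states are separable by a single-step, constant-probability trace, and $n$ can be driven to infinity with every parameter in $\vec{d}_\rdp'$ held constant.
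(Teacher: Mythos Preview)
Your proof is correct and follows essentially the same strategy as the paper: build a length-$n$ chain of transducer states in which the only reward-relevant decision sits at the far end, verify that $|A|$, $1/(1-\gamma)$, $R_{\max}$, $1/\rho$, $1/\mu$, $1/\eta$ are all $O(1)$ independently of $n$, and conclude by a two-point indistinguishability argument that any algorithm needs $\Omega(n)$ action steps.

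The one substantive difference is that you inject randomness (each position emits its index with probability $1/2$ and $\bot$ otherwise) specifically to keep $\mu$ bounded away from zero. The paper's construction shows this is unnecessary: it uses a fully deterministic chain $s_1 \to s_2 \to \cdots \to s_n$, where state $q_j$ deterministically emits the distinct observation $s_{j+1}$. Since distinct states emit distinct observations with probability~$1$, any two states are already separated by a single-step trace of probability $1/|A| = 1/2$ under the uniform policy, so $\mu$ is a constant without any stochasticity---and one gets the cleaner values $\eta = 1$, $\rho = 1$ as a bonus. Your worry in the last paragraph (that a counter emitting the \emph{same} symbol would force $1/\mu$ to grow) is well founded, but the fix is simply to have each state emit a \emph{different} symbol; the $\bot$-noise is not needed.
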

\begin{proof}
  Consider $i \in \{1,2\}$, $n \geq 1$, states $S = \{ s_1, \dots, s_n \}$, and
  actions $A = \{ a_1,a_2 \}$.
  We describe an RDP $\rdp^n_i$.
  The initial observation state is $s_1$.
  When the last observation state is $s_j$ with $j \in [1,n)$, all actions yield
  observation state $s_{j+1}$ with probability $1$. The reward in all the above
  transitions is zero.
  When the last observation state is $s_n$, action $a_i$ yields reward $1$ and 
  observation state $s_n$ with probability $1$, and the other actions yields
  reward zero and observation state $s_n$ with probability $1$.
  The minimum dynamics transducer of $\rdp^n_i$ has states 
  $Q = \{ q_0, q_1, \dots, q_n \}$, where $q_0$ is the initial state, and $q_j$
  for $j \in [1,n]$ is the state when the last observation is $s_j$.

  To determine the best action when the last observation state is $s_n$ and
  hence compute an $\epsilon$-optimal policy with $\epsilon < 1$, it is required
  to see at least one transition when the last observation state is $s_n$, to
  assess whether the underlying RDP is $\rdp^n_1$ or $\rdp^n_2$.
  This requires an episode of length $n$.
  The value of $n$ can be chosen such that 
  $n > \operatorname{poly}(1/\epsilon,\ln (1/\delta),\vec{d}_\rdp')$, since 
  none of the parameters in $\vec{d}_\rdp'$ increases with $n$.
\end{proof}

The following lemma shows that the required number of action steps can grow with
the reachability parameter $1/\rho$.
\begin{lemma}
  For every algorithm there is an RDP $\rdp$ such that the algorithm does not
  necessarily reach accuracy $\epsilon$ and confidence $\delta$ in a number of
  steps $\operatorname{poly}(1/\epsilon,\ln (1/\delta),\vec{d}_\rdp')$ if 
  $\vec{d}_\rdp'$ is $\vec{d}_\rdp$ after removing the reachability parameter.
\end{lemma}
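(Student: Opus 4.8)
The plan is to follow the template of the preceding three lemmas: exhibit a parameterised family of RDPs on which every parameter in $\vec{d}_\rdp'$ stays bounded (or grows only polynomially in a single integer $\ell$) while the reachability $\rho$ shrinks like $2^{-\ell}$, and show that any algorithm needs $\Omega(1/\rho)$ action steps on some member of the family, hence more than any polynomial in $\vec{d}_\rdp'$.

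Concretely, for $\ell \ge 1$ and $i \in \{1,2\}$ I would build $\rdp^\ell_i$ around a ``corridor'' $q_0,\dots,q_\ell$ together with an absorbing sink $q_\perp$, so the minimum dynamics transducer has $n = \ell + 2$ states. From a corridor state $q_j$ with $j<\ell$ the environment ignores the chosen action: with probability $\tfrac12$ it advances to $q_{j+1}$ while emitting an observation $s_j$ that uniquely identifies $q_j$, and with probability $\tfrac12$ it drops to $q_\perp$ (emitting a fixed observation $s_\perp$); rewards are $0$ throughout the corridor and forever in $q_\perp$. The end state $q_\ell$ is where the two RDPs differ: in $\rdp^\ell_1$ action $a_1$ keeps the agent at $q_\ell$ with a perpetual reward stream while $a_2$ sends it to $q_\perp$ with reward $0$, and in $\rdp^\ell_2$ the two actions are interchanged. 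Because advancing from $q_j$ reveals $s_j$ with probability $\tfrac12$ and no other state ever emits $s_j$, every pair of distinct transducer states is separated by a single-symbol trace of probability at least $\tfrac14$ under $\upolicy$, so the distinguishability of $\rdp^\ell_i$ is bounded below by a constant; similarly $|A| = 2$ and $\eta = \tfrac12$, independently of $\ell$. The reachability of $q_\ell$ under $\upolicy$ is exactly $\rho = 2^{-\ell}$, since reaching $q_\ell$ within $n$ steps requires $\ell$ consecutive ``advance'' coin flips, and the same bound $2^{-\ell}$ limits the probability of reaching $q_\ell$ in one episode under \emph{any} policy, because the corridor dynamics do not depend on the action.

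The hardness argument then has two parts. First, any policy that is $\epsilon$-optimal for both $\rdp^\ell_1$ and $\rdp^\ell_2$ must act correctly at $q_\ell$: acting incorrectly there forfeits a contribution of order $\rho\cdot\gamma^\ell\cdot R_\mathrm{max}/(1-\gamma)$ to the value at the initial history, and the discount factor and reward magnitude are chosen so that this exceeds $\epsilon$. Second, until it has first reached $q_\ell$ an algorithm's entire interaction with $\rdp^\ell_1$ is distributed exactly as its interaction with $\rdp^\ell_2$ (the two processes agree everywhere except at $q_\ell$), and each episode reaches $q_\ell$ with probability at most $2^{-\ell}$; hence the expected number of episodes before the algorithm has any information distinguishing the two processes — and so before it can output a policy $\epsilon$-optimal for both with confidence $1-\delta$ — is $\Omega(2^\ell) = \Omega(1/\rho)$, and the expected number of action steps is larger still. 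Since $1/\rho = 2^\ell$ while $n = \ell + O(1)$ and every remaining parameter of $\vec{d}_\rdp'$ is bounded, this number of steps is super-polynomial in $\vec{d}_\rdp'$, so no algorithm can be PAC-RL with respect to $\vec{d}_\rdp'$.

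The step I expect to be the main obstacle is the parameter bookkeeping behind the first part above: one must make the hidden action at $q_\ell$ worth more than $\epsilon$ at the initial history even though $q_\ell$ is visited only with probability $2^{-\ell}$ and its reward is discounted by $\gamma^\ell$, while simultaneously keeping $1/(1-\gamma)$ and $R_\mathrm{max}$ from themselves growing like $2^\ell$ — which would reinstate a parameter dominating $1/\rho$ and so void the argument. Choosing $\gamma$ as a suitable function of $\ell$ so that $\gamma^\ell$ stays bounded below without $1/(1-\gamma)$ exploding, and if necessary restructuring the payoff at $q_\ell$ so that the relevant advantage does not decay with the corridor depth, is where the real care is needed; once the family and its parameters are pinned down, the indistinguishability step and the $\Omega(1/\rho)$ counting bound are routine, exactly as in the preceding lemmas.
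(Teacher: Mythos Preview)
Your construction is exactly the paper's: a length-$\ell$ corridor with an i.i.d.\ $\tfrac12$ chance of dropping to an absorbing sink at each step, and a hidden best action at the far end that distinguishes two RDPs; your parameter checks ($|A|$, $\eta$, distinguishability, $n=\ell+O(1)$) are correct and in fact more explicit than the paper's.

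The ``main obstacle'' you flag, however, is not there, and it comes from arguing about the wrong history. A policy is $\epsilon$-optimal only if it is $\epsilon$-optimal on \emph{every} history, not just the initial one. So take the history $h$ that reaches $q_\ell$: in your construction the correct action at $h$ has value $R_\mathrm{max}/(1-\gamma)$ and the wrong action (which drops to $q_\perp$) has value $0$, so the gap at $h$ is at least $R_\mathrm{max}$. With $R_\mathrm{max}=1$ and any fixed $\gamma\in(0,1)$, every $\epsilon$-optimal policy with $\epsilon<1$ must already act correctly at $h$; there is no $\rho\cdot\gamma^\ell$ factor to overcome, and hence no need to let $\gamma$ or $R_\mathrm{max}$ depend on $\ell$ at all. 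The paper's proof simply fixes $\epsilon<1$, observes that an $\epsilon$-optimal policy must pick the right action at $s_n$, and then applies the union bound $\Pr[\text{reach }s_n\text{ in }N\text{ episodes}]\le N\cdot 2^{-n}$, exactly as in your second part. Once you redirect the optimality argument to the history at $q_\ell$, the bookkeeping you were worried about disappears.
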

\begin{proof}
  Consider $i \in \{1,2\}$, $n \geq 1$, states $S = \{ s_1, \dots, s_n,
  \mathit{ended} \}$, and
  actions $A = \{ a_1,a_2 \}$.
  We describe an RDP $\rdp^n_i$.
  The initial observation state is $s_1$.
  When the last observation state is $s_j$ with $j \in [1,n)$, all actions yield
  observation state $s_{j+1}$ with probability $1/2$, and observation state
  $\mathit{ended}$ with probability $1/2$. 
  When the last observation state is $\mathit{ended}$, all actions yield
  observation state $\mathit{ended}$ with probability $1$.
  The reward in all the above transitions is zero.
  When the last observation state is $s_n$, action $a_i$ yields reward $1$ and 
  observation state $s_n$ with probability $1$, and the other actions yields
  reward zero and observation state $s_n$ with probability $1$.
  The minimum dynamics transducer of $\rdp^n_i$ has states 
  $Q = \{ q_0, q_1, \dots, q_n, \mathit{sink} \}$, where $q_0$ is the initial state, and $q_j$
  for $j \in [1,n]$ is the state when the last observation is $s_j$.

  To determine the best action when the last observation state is $s_n$ and
  hence compute an $\epsilon$-optimal policy with $\epsilon < 1$, it is required
  to see at least one transition when the last observation state is $s_n$, to
  assess whether the underlying RDP is $\rdp^n_1$ or $\rdp^n_2$.
  The probability of seeing observation state $s_n$ in an episode is $(1/2)^n$.
  The probability of seeing observation state $s_n$ in $N$ episodes is at most 
  $N \cdot (1/2)^n$, by a union bound.
  If $N = \operatorname{poly}(1/\epsilon,\ln (1/\delta),\vec{d}_\rdp')$, 
  we can choose $n$ such that $N \cdot (1/2)^n$ is arbitrarily small.
\end{proof}

The remaining lemmas are based on a core lower bound for the \emph{multi-armed
bandit problem} given in \cite{mannor2004sample}.
\begin{proposition}[Mannor and Tsitsiklis, 2004; Theorem 1]
  \label{prop:mannor}
  There exist positive constants $c_1$, $c_2$, $\epsilon_0$, and $\delta_0$
  such that for every $K \geq 2$, $\epsilon \in (0,\epsilon_0)$, $\delta \in
  (0,\delta_0)$, and for every algorithm, there is a $K$-armed bandit problem
  for which the number of action steps for the algorithm to reach
  accuracy $\epsilon$ and confidence $\delta$ in the worst case is at least 
  $$c_1 \cdot \frac{K}{\epsilon^2} \cdot \log \frac{c_2}{\delta}.$$
  In particular, $\epsilon_0$ and $\delta_0$ can be taken equal to $1/8$ and
  $e^{-4}/4$, respectively.
  Furthermore, the mentioned $K$-armed bandit problem has rewards in $\{0,1\}$,
  and its maximum reward probability is $1/2 + \epsilon$.
\end{proposition}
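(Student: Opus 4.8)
The plan is to prove this lower bound by reducing $K$-armed best-arm identification to a bank of $\Theta(K)$ pairwise hypothesis tests and applying a change-of-measure argument to each. First I would set up a family of Bernoulli bandit instances. Take a base instance $H_0$ in which arm $1$ has mean $\tfrac12+\epsilon$ and arms $2,\dots,K$ have mean $\tfrac12-\epsilon$, and, for each $j\in\{2,\dots,K\}$, an alternative $H_j$ that agrees with $H_0$ everywhere except that arm $j$ is raised to mean $\tfrac12+3\epsilon$; after a universal rescaling of $\epsilon$ by a constant this can be arranged so that the largest mean is exactly $\tfrac12+\epsilon$, the constants $\epsilon_0=\tfrac18$ and $\delta_0=e^{-4}/4$ are admissible, and each instance has a \emph{unique} $\epsilon$-optimal arm: arm $1$ under $H_0$ and arm $j$ under $H_j$ (every other arm sits $2\epsilon$ or more below the optimum). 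Hence any algorithm that reaches accuracy $\epsilon$ and confidence $\delta$ must output arm $1$ with probability $\ge 1-\delta$ under $H_0$ and arm $j$ with probability $\ge 1-\delta$ under $H_j$.

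The core of the argument is that $H_0$ and $H_j$ differ only in the reward law of arm $j$, so the only way to tell them apart is to pull arm $j$ many times. Let $T_j$ be the (random) number of pulls of arm $j$ and let $P_0,P_j$ denote the laws of the full interaction transcript under $H_0,H_j$. Because the two instances coincide on every arm other than $j$, the chain rule for KL divergence applied along the transcript (a Wald-type identity) gives $\mathrm{KL}(P_0\,\|\,P_j)=\mathbb{E}_0[T_j]\cdot \mathrm{KL}\!\big(\mathrm{Ber}(\tfrac12+\epsilon)\,\|\,\mathrm{Ber}(\tfrac12+3\epsilon)\big)$, and the single-sample KL on the right is $\Theta(\epsilon^2)$ for $\epsilon<\epsilon_0$. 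On the other hand, the event ``output arm $1$'' has probability $\ge 1-\delta$ under $P_0$ and $\le\delta$ under $P_j$, so by the data-processing inequality together with a Bretagnolle--Huber (or high-confidence Pinsker) bound, $\mathrm{KL}(P_0\,\|\,P_j)\ge \ln\tfrac{c_2}{\delta}$ for a universal constant $c_2$. Combining the two estimates yields $\mathbb{E}_0[T_j]\ge c_1'\cdot \epsilon^{-2}\ln(c_2/\delta)$ for every $j\in\{2,\dots,K\}$.

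Finally I would sum over the arms: the number of action steps the algorithm performs on $H_0$ is at least $\sum_{j=2}^{K}T_j$, whose expectation is at least $(K-1)\,c_1'\,\epsilon^{-2}\ln(c_2/\delta)$. Since a lower bound in expectation over the base instance is in particular a worst-case lower bound over the family $\{H_0,H_2,\dots,H_K\}$, this gives the claimed $c_1\cdot\tfrac{K}{\epsilon^2}\log\tfrac{c_2}{\delta}$ bound for a suitable constant $c_1$, and by construction the family has rewards in $\{0,1\}$ with maximal reward probability $\tfrac12+\epsilon$.

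The step I expect to be the main obstacle is making the change-of-measure identity rigorous when $T_j$ is a random, algorithm-controlled, and a priori unbounded stopping time: one cannot treat it as a fixed number of samples, so the Wald-type KL identity must be justified either by a truncation argument (controlling the tail contribution of $T_j$) or by an optional-stopping / likelihood-ratio-martingale argument, and the constants have to be tracked carefully enough that $\epsilon_0=\tfrac18$ and $\delta_0=e^{-4}/4$ come out admissible. A secondary nuisance is choosing the gaps in the construction so that $\epsilon$-optimality pins down a single arm while keeping the maximal mean exactly at $\tfrac12+\epsilon$, which is pure bookkeeping handled by the constant rescaling mentioned above.
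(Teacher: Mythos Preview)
The paper does not prove this proposition at all: it is stated as a citation of Theorem~1 in Mannor and Tsitsiklis (2004), and the only accompanying text is a short remark explaining that the original bound on the \emph{expected} number of action steps also lower-bounds the worst-case number of steps, and that the conclusion persists even for algorithms that do not stop after outputting a policy. So there is no ``paper's own proof'' to compare against.

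That said, your sketch is essentially the argument Mannor and Tsitsiklis use: a family of Bernoulli instances differing in a single arm, a Wald/chain-rule identity expressing the KL divergence between transcript laws as $\mathbb{E}_0[T_j]$ times a per-pull KL of order $\epsilon^2$, a data-processing/likelihood-ratio step forcing that KL to be at least $\ln(c_2/\delta)$, and a sum over $K-1$ arms. The one place where your construction deviates from theirs is the base instance: Mannor and Tsitsiklis take $H_0$ with all arms at mean $\tfrac12$ and each $H_j$ raising arm $j$ to $\tfrac12+\epsilon$, which directly gives the ``maximum reward probability $\tfrac12+\epsilon$'' clause without any rescaling. Your variant with arm $1$ already at $\tfrac12+\epsilon$ and the perturbed arm at $\tfrac12+3\epsilon$ works too, but it forces the bookkeeping you flagged, and it is not needed. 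The obstacle you identify---making the KL identity rigorous for an adaptive, unbounded $T_j$---is exactly the technical heart of the original proof, handled there via a likelihood-ratio martingale argument; your instinct to truncate or use optional stopping is the right one.
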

Note that the bound was originally stated for the expected number of
action steps, which is a lower bound for the actual number of action steps in
the worst-case---this was also pointed out in \cite{strehl2009reinforcement}.
Note also that the original theorem was for algorithms that, as a last step,
output a policy and then stop.
However, the proof of the theorem shows that, when the number of action steps is
lower than the claimed bound, the probability of selecting the wrong arm is
bigger than $\delta$, regardless of whether the algorithm stops or not.

The following lemma is implied by the previous proposition since the
\emph{number of actions} corresponds to the number of arms.
\begin{lemma} \label{lemma:lower-actions}
  For every algorithm there is an RDP $\rdp$ such that the algorithm does not
  necessarily reach accuracy $\epsilon$ and confidence $\delta$ in a number of
  steps $\operatorname{poly}(1/\epsilon,\ln (1/\delta),\vec{d}_\rdp')$ if 
  $\vec{d}'$ is $\vec{d}_\rdp$ after removing the number of actions.
\end{lemma}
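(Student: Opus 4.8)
The plan is to embed a hard multi-armed bandit instance into an RDP whose dynamics transducer has a \emph{single state}, so that every parameter in $\vec{d}_\rdp'$ is a fixed constant while only the number of actions $|A|$ grows. Concretely, I would fix a discount factor, say $\gamma = 1/2$, and for each $K \geq 2$ describe a family of RDPs: one observation state $s_0$ (also the initial state), reward set $R = \{0,1\}$, and, after every history, action $a$ returns to $s_0$ with probability one and issues reward $1$ with probability $\mu_a$ and reward $0$ otherwise, where $(\mu_a)_a$ are the arm means of a $K$-armed bandit with rewards in $\{0,1\}$. The dynamics are represented by a transducer with the single state $q_0$ whose output assigns to each action $a$ the distribution placing mass $\mu_a$ on $(s_0,1)$ and $1-\mu_a$ on $(s_0,0)$; hence $n = 1$. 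Consequently $\rho = 1$ (the only state is reached with probability one within $n=1$ steps), $\eta = 1$ (observation $s_0$ always has probability one), $R_\mathrm{max} = 1$, $1/(1-\gamma) = 2$, and the distinguishability condition is vacuous (its first bullet holds for every pair of histories, since both map to $q_0$), so $\mu$ may be taken to be any constant in $(0,1)$, e.g. $1/2$. Thus $\vec{d}_\rdp'$ is a fixed tuple independent of $K$, while $|A| = K$.

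Next I would translate $\epsilon$-optimality for these RDPs into $\epsilon$-optimality for the underlying bandit. Any policy the algorithm outputs is a function of the (single-state) learned transducer, hence a fixed distribution $\pi$ over actions, and it satisfies $\valfunc_\pi(h) = \bigl(\sum_a \pi(a)\mu_a\bigr)/(1-\gamma)$ on every history $h$, while $\valfunc_*(h) = \mu^*/(1-\gamma)$ with $\mu^* = \max_a \mu_a$. Therefore $\pi$ is $\epsilon$-optimal for the RDP if and only if $\mu^* - \sum_a \pi(a)\mu_a \leq (1-\gamma)\epsilon$, i.e. if and only if $\pi$ identifies a $(1-\gamma)\epsilon$-optimal arm. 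I would also observe that action steps in the RDP are exactly arm pulls, and the stop action only restarts a stationary bandit, so it carries no extra information and cannot reduce the number of pulls needed; hence a lower bound on pulls for the bandit is a lower bound on action steps for the RDP.

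Finally I would invoke Proposition~\ref{prop:mannor} with bandit-accuracy $\epsilon' = (1-\gamma)\epsilon = \epsilon/2$: for all sufficiently small $\epsilon$ and $\delta$, and for every algorithm, there is a $K$-armed bandit — hence an RDP in the family above — on which the algorithm requires at least $c_1 \cdot K/(\epsilon/2)^2 \cdot \log(c_2/\delta)$ action steps to reach accuracy $\epsilon$ and confidence $\delta$. Since $K$ can be chosen arbitrarily large while every entry of $\vec{d}_\rdp'$ remains fixed, this quantity cannot be bounded by $\operatorname{poly}(1/\epsilon,\ln(1/\delta),\vec{d}_\rdp')$, which establishes the lemma. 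The only delicate points are confirming that the auxiliary ``description'' parameters $n$, $\rho$, $\mu$, $\eta$ are genuinely constant — which is precisely what the single-state dynamics transducer delivers — and correctly transporting the Mannor--Tsitsiklis lower bound across the discount factor and through the presence of the stop action; both are routine once the embedding is set up.
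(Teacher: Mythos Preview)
Your overall strategy---embed a hard $K$-armed bandit into an RDP whose dynamics transducer has a single state, so that every parameter in $\vec{d}_\rdp'$ is an absolute constant while $|A|=K$ grows, then invoke Proposition~\ref{prop:mannor}---is exactly the paper's. There is, however, a genuine technical gap in your construction: in the paper's model the reward function $\rewardfunc: S^* \times A \times S \to R$ is \emph{deterministic}, and by definition $\dynfunc(s,r\mid h,a) = \transitionfunc(s\mid h,a)$ if $r = \rewardfunc(h,a,s)$ and zero otherwise. Your single-observation-state RDP requires both $\dynfunc(s_0,1\mid h,a) = \mu_a$ and $\dynfunc(s_0,0\mid h,a) = 1-\mu_a$ to be nonzero for the same next state $s_0$, which this model forbids. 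So your RDP is not an RDP in the paper's sense.

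The repair is simple and is what the paper does: use two outcome observation states $s^+$ and $s^-$ carrying rewards $1$ and $0$, so that the randomness lives in the transition function rather than in the reward. The dynamics transducer is still single-state (the dynamics do not depend on which of $s_0,s^+,s^-$ was last observed), so $n=1$, $\rho=1$, and the distinguishability condition remains vacuous. The only casualty is your claim $\eta=1$: now $\eta = \min_i \min(p_i,1-p_i)$. But Proposition~\ref{prop:mannor} also tells you that the hard bandit instances have arm probabilities with maximum $1/2+\epsilon$ (and in fact all probabilities near $1/2$), so $\eta \geq 1/2 - \epsilon$, a constant independent of $K$ once $\epsilon$ is fixed. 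With this fix your argument goes through and coincides with the paper's proof.
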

\begin{proof}
  We show RDPs that encode MAB problems and argue that if this lemma
  were false then Proposition~\ref{prop:mannor} would be false.
  Consider a $K$-armed bandit problem with arm probabilities $p_1, \dots, p_K$,
  rewards in $\{0,1\}$, and maximum reward probability $1/2 + \epsilon$.
  The corresponding RDP has observation states 
  $S = \{ s_0, s^+, s^- \}$ and actions $A = \{ a_1, \dots, a_K \}$.
  The initial observation state is $s_0$.
  Regardless of the last observation state, when action $a_i$ is
  performed, the agent observes 
  $s^+$ with probability $p_i$ receiving reward $1$ and, it observes $s^-$ 
  with probability $1-p_i$ and receiving reward zero.
  Note that there is one set of parameters $\vec{d}_\rdp'$ that describes
  all such RDPs.
  In particular, the discount factor is irrelevant and can be
  taken equal to zero, the maximum reward is one, the reachability is one and
  the distinguishability is one because the dynamics function is state-less, 
  and the degree of determinism is $1/2 - \epsilon$ because it is the minimum
  probability that an arm issues a reward.

  Assume by contradiction that, in each of the above RDPs, an agent can reach
  accuracy $\epsilon \cdot r$ and confidence $\delta$ in
  $\operatorname{poly}(1/\epsilon,\ln 1/\delta,\vec{d}_\rdp')$ action
  steps. It amounts to solving the corresponding MAB problem with accuracy
  $\epsilon$ and confidence $\delta$ in  
  $\operatorname{poly}(1/\epsilon,\ln 1/\delta,\vec{d}_\rdp')$ action
  steps.
  For any given $\epsilon$ and $\delta$, the former quantity is constant, and
  hence will be smaller than the bound of Proposition~\ref{prop:mannor} for
  a sufficiently big number of arms $K$. This contradicts 
  Proposition~\ref{prop:mannor}.
\end{proof}

The following lemma is based on the observation that, if the number of action
steps did not depend on the \emph{maximum reward} value $\rmax$, then we could
artificially increase all the rewards given to the agent by an arbitrary factor
in order to obtain a higher accuracy in the same number of steps.
\begin{lemma} \label{lemma:lower-reward}
  For every algorithm there is an RDP $\rdp$ such that the algorithm does not
  necessarily reach accuracy $\epsilon$ and confidence $\delta$ in a number of
  steps $\operatorname{poly}(1/\epsilon,\ln (1/\delta),\vec{d}_\rdp')$ if 
  $\vec{d}'$ is $\vec{d}_\rdp$ after removing the maximum reward.
\end{lemma}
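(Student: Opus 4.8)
The plan is to reduce from the multi-armed bandit (MAB) lower bound of Proposition~\ref{prop:mannor}, exploiting the elementary fact that multiplying every reward of an RDP by a constant $c > 0$ scales every value function by $c$ but leaves the transition function untouched, and hence leaves every parameter in $\vec{d}_\rdp'$ unchanged. Concretely, I would fix an accuracy $\epsilon$ and a confidence $\delta$ below the thresholds $\epsilon_0,\delta_0$ of Proposition~\ref{prop:mannor}, and for each MAB-accuracy $\epsilon' \in (0,\epsilon_0)$ take a two-armed bandit $B_{\epsilon'}$ witnessing that proposition's lower bound for $K=2$, accuracy $\epsilon'$, and confidence $\delta$; by the standard form of this construction its arm-reward-probabilities lie in $[1/2-\epsilon_0,\,1/2+\epsilon_0]$ and its best-arm gap $g_{\epsilon'}$ is positive. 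I then define $\rdp^{\epsilon'}$ as the RDP that encodes $B_{\epsilon'}$ exactly in the style of Lemma~\ref{lemma:lower-actions} — observation states $\{s_0,s^+,s^-\}$, actions $\{a_1,a_2\}$, a fixed discount factor, no history dependence — but with every reward of value $1$ replaced by $c := \lceil 2\epsilon/g_{\epsilon'}\rceil + 1$, so that $R_\mathrm{max} = c$.

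First I would verify that $\vec{d}_{\rdp^{\epsilon'}}'$ is bounded uniformly in $\epsilon'$. As in the proof of Lemma~\ref{lemma:lower-actions} the dynamics are state-less, so $n=1$ and $\rho=\mu=1$; the number of actions is $2$; $1/(1-\gamma)$ is a fixed constant; and the degree of determinism is the smallest arm-reward-probability or its complement, which lies in $[1/2-\epsilon_0,\,1/2+\epsilon_0]$ and is therefore bounded away from $0$ independently of $\epsilon'$. Hence, assuming for contradiction that some algorithm is PAC-RL with respect to $\vec{d}_\rdp'$, then on input $(A,\gamma,\epsilon,\delta)$ it reaches accuracy $\epsilon$ and confidence $\delta$ on every $\rdp^{\epsilon'}$ within $\operatorname{poly}(1/\epsilon,\ln(1/\delta),\vec{d}_{\rdp^{\epsilon'}}')$ steps, and since $\epsilon,\delta$ are fixed and the remaining parameters range over a bounded set, this is at most some constant $N_0$ that does not depend on $\epsilon'$.

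Next I would turn the returned policy into a bandit algorithm. Since every value of $\rdp^{\epsilon'}$ is the corresponding value of $B_{\epsilon'}$ scaled by $c$, an $\epsilon$-optimal policy $\pi$ for $\rdp^{\epsilon'}$ is $(\epsilon/c)$-optimal for $B_{\epsilon'}$ seen as an RDP. A short computation at the initial history $s_0$ — in which the discount factor cancels and one uses that the other arm is worse by $g_{\epsilon'}$ together with $c > 2\epsilon/g_{\epsilon'}$ — shows that $\pi$ puts probability strictly above $1/2$ on the optimal arm at $s_0$, so its more likely initial action is the optimal arm, which is (trivially) $\epsilon'$-optimal for the bandit. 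Outputting this action therefore solves $B_{\epsilon'}$ to accuracy $\epsilon'$ with confidence $\delta$ using at most $N_0$ action steps (these being a subset of all steps), whereas Proposition~\ref{prop:mannor} forces at least $c_1\cdot\frac{2}{(\epsilon')^2}\cdot\log\frac{c_2}{\delta}$ action steps; choosing $\epsilon'$ small enough that this exceeds $N_0$ yields the contradiction, which proves the lemma.

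I do not expect a deep obstacle here: the only delicate points are checking that the reward-scaled family $\{\rdp^{\epsilon'}\}$ genuinely shares a uniformly bounded parameter vector $\vec{d}_\rdp'$ — in particular that the degree of determinism stays bounded away from $0$ as the embedded bandit is made arbitrarily hard, which is why one must keep its arm probabilities near $1/2$ — and carrying out the elementary estimate that extracts the optimal arm from an $\epsilon$-optimal policy of an RDP whose rewards have been inflated by the factor $c$.
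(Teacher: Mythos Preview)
Your proposal is correct and follows essentially the same route as the paper: encode a hard MAB instance as a one-state RDP and exploit that scaling rewards by a constant leaves every parameter in $\vec{d}_\rdp'$ unchanged while inflating values, so that an assumed $R_{\max}$-free PAC-RL bound would let you solve arbitrarily hard bandits in a bounded number of steps, contradicting Proposition~\ref{prop:mannor}.

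The only noteworthy difference is the parameterisation of the contradiction. The paper keeps the bandit accuracy $\epsilon$ fixed, varies $R_{\max}$, and runs the RL algorithm with target accuracy $\epsilon \cdot R_{\max}$, so the assumed step bound $\operatorname{poly}(1/(\epsilon R_{\max}),\dots)$ shrinks below the MAB lower bound as $R_{\max}\to\infty$. You instead keep the RL target accuracy $\epsilon$ fixed, vary the bandit difficulty $\epsilon'$, and choose the scaling $c$ so that $\epsilon$-optimality in the scaled RDP pins down an $\epsilon'$-optimal arm; the contradiction then comes from the MAB lower bound blowing up as $\epsilon'\to 0$ while your step count stays bounded by a constant $N_0$. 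Your version is a touch more careful in two places the paper glosses over: you explicitly check that the degree of determinism stays bounded away from zero (because the hard bandit's arm probabilities stay near $1/2$), and you handle the possibility that the returned policy is stochastic. One small point to tidy when you write it out: as stated, $c$ depends on the gap $g_{\epsilon'}$ of the hard bandit $B_{\epsilon'}$, which in turn is chosen via Proposition~\ref{prop:mannor} against the bandit algorithm you derive from the RL algorithm using $c$---a mild circularity. It dissolves once you note that in the standard Mannor--Tsitsiklis family the gap is a fixed function of $\epsilon'$ (independent of the algorithm), or alternatively by setting $c$ directly from $\epsilon'$ rather than from $g_{\epsilon'}$.
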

\begin{proof}
  Consider the encoding of the MAB problem introduced in the proof of
  Lemma~\ref{lemma:lower-actions}, with the difference that, each arms yields
  reward $\rmax$ instead of one.
  Note that the number of actions $|A|$ is the only parameter that can change in 
  $\vec{d}_\rdp'$ for different RDPs.

  Assume by contradiction that, in each of the above RDPs, an agent can reach
  accuracy $\epsilon \cdot \rmax$ and confidence $\delta$ in
  $\operatorname{poly}(1/(\epsilon \cdot \rmax),\ln
  (1/\delta),\vec{d}_\rdp')$ steps. 
  We can use this agent to solve the original MAB problem---the one with rewards
  in $\{ 0,1\}$---with accuracy $\epsilon$ and confidence
  $\delta$ in $N = \operatorname{poly}(1/(\epsilon \cdot \rmax),\ln
  (1/\delta),\vec{d}_\rdp')$ action steps.
  It suffices to give to the agent reward $r \cdot \rmax$ whenever the original
  MAB issues reward $r$---i.e., to give $\rmax$ when reward one is issued by the
  original MAB, and zero otherwise.
  Then, if the computed $(\epsilon \cdot \rmax)$-optimal policy selects action
  $a_i$, the $i$-th arm in the original MAB is $\epsilon$-optimal, which solves
  the original MAB.
  This contradicts Proposition~\ref{prop:mannor} because the number $N$ of
  actions steps can be made arbitrarily small by increasing the value of
  $\rmax$.
\end{proof}

The proof of the following lemma is a variation of the previous one. 
It builds on the observation that, in the setting with discounted rewards, even
if $\rmax = 1$, there can be a state whose value is arbitrarily large, for
decreasing values of the \emph{discount factor}.
\begin{lemma}
  For every algorithm there is an MDP $\rdp$ such that the algorithm does not
  necessarily reach accuracy $\epsilon$ and confidence $\delta$ in a number of
  steps $\operatorname{poly}(1/\epsilon,\ln (1/\delta),\vec{d}_\rdp')$ if 
  $\vec{d}_\rdp'$ is $\vec{d}_\rdp$ after removing the discount parameter.
\end{lemma}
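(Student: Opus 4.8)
The plan is to adapt the construction used for Lemma~\ref{lemma:lower-reward}. There we amplified the achievable accuracy by scaling all rewards by $\rmax$; here, keeping $\rmax = 1$, we instead exploit that a self-looping state emitting reward $1$ has value $1/(1-\gamma)$, which grows without bound as $\gamma \to 1$. Fix an arbitrary $\delta \in (0, e^{-4}/4)$ and an arbitrary $\epsilon > 0$, and for every $\gamma$ close enough to $1$ that $\epsilon' := \epsilon(1-\gamma)/\gamma < 1/8$, define an MDP $\rdp^\gamma$ that embeds the worst-case two-armed bandit provided by Proposition~\ref{prop:mannor} for accuracy $\epsilon'$ and confidence $\delta$. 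It has observation states $S = \{ s_0, s^+, s^- \}$ with initial state $s_0$, actions $A = \{ a_1, a_2 \}$ with reward probabilities $p_1, p_2 \in [1/2, 1/2 + 1/8]$, and dynamics: from a history ending in $s_0$, action $a_i$ gives reward $0$ and next observation $s^+$ with probability $p_i$ and $s^-$ with probability $1 - p_i$; from a history ending in $s^+$ every action gives reward $1$ and next observation $s^+$; from a history ending in $s^-$ every action gives reward $0$ and next observation $s^-$. This is an MDP, and its minimum dynamics transducer has a constant number of states (essentially one per last observation state). Crucially, every entry of $\vec{d}_\rdp$ other than $1/(1-\gamma)$ is a constant independent of $\gamma$: $|A| = 2$, $\rmax = 1$, $n$ is constant, and $\rho$, $\eta$, $\mu$ are each bounded below by an absolute constant (e.g.\ $3/8$, $3/8$, $1/2$), since the biases lie in $[1/2, 5/8]$ and the three transducer states induce pairwise very different one-step distributions under $\upolicy$.

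I would then compute the relevant values. Since $\valfunc_*(h) = 1/(1-\gamma)$ for every history $h$ ending in $s^+$ and $\valfunc_*(h) = 0$ for every history ending in $s^-$, and since these equalities hold under every policy (so those states are handled automatically), we get $\avalfunc_*(s_0, a_i) = \gamma p_i/(1-\gamma)$ and hence $\valfunc_*(s_0) = (\gamma/(1-\gamma)) \max\{p_1, p_2\}$. Therefore a policy $\pi$ is $\epsilon$-optimal for $\rdp^\gamma$ if and only if $p_{\pi(s_0)} \geq \max\{p_1, p_2\} - \epsilon(1-\gamma)/\gamma = \max\{p_1,p_2\} - \epsilon'$, i.e.\ if and only if the arm chosen at $s_0$ is $\epsilon'$-optimal in the embedded bandit. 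Because from $s^+$ or $s^-$ no further information about the arms is obtained and the agent must invoke the stop action to return to $s_0$, a run of $N$ action steps supplies at most $N$ adaptively-chosen Bernoulli samples of the two arms. By Proposition~\ref{prop:mannor} with $K = 2$, there is a choice of $p_1, p_2$ such that whenever $N < 2 c_1 (\epsilon')^{-2} \log(c_2/\delta) = \Omega(\epsilon^{-2} (1-\gamma)^{-2} \log(c_2/\delta))$ the algorithm returns a non-$\epsilon'$-optimal arm, equivalently a non-$\epsilon$-optimal policy for $\rdp^\gamma$, with probability greater than $\delta$; hence it has not reached accuracy $\epsilon$ and confidence $\delta$ within $N$ action steps.

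Finally I would close the argument exactly as in the earlier lemmas. Suppose some algorithm reached accuracy $\epsilon$ and confidence $\delta$ within $N = \operatorname{poly}(1/\epsilon, \ln(1/\delta), \vec{d}_\rdp')$ action steps on every $\rdp^\gamma$ in the family, where $\vec{d}_\rdp'$ is $\vec{d}_\rdp$ with the discount parameter removed. With $\epsilon$ and $\delta$ fixed and $\vec{d}_\rdp'$ taking a single constant value across the family, $N$ is a fixed number, whereas the lower bound of the previous paragraph grows without bound as $\gamma \to 1$; choosing $\gamma$ close enough to $1$ makes it exceed $N$, a contradiction. The step I expect to require the most care is the middle one: making the reduction tight, that is, verifying that $\epsilon$-optimality of the returned policy for $\rdp^\gamma$ is equivalent to $\epsilon'$-optimality of a single arm (optimality at the trivial states $s^+, s^-$ being free, and restarting via the stop action costing only a constant factor per sample), and checking that $\rho$, $\mu$, and $\eta$ remain bounded away from $0$ uniformly in $\gamma$; once these are in place, the contradiction is a direct transcription of Proposition~\ref{prop:mannor}.
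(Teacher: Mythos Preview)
Your proposal is correct and follows essentially the same approach as the paper: the same three-state MDP $\{s_0,s^+,s^-\}$ encoding a bandit, with the key observation that the action values at $s_0$ scale like $1/(1-\gamma)$, so that $g=\gamma/(1-\gamma)$ (the paper writes $\gamma^2/(1-\gamma)$, a harmless indexing difference) plays the role that $\rmax$ played in Lemma~\ref{lemma:lower-reward}. Your write-up is in fact more explicit than the paper's---which just says ``we omit the rest of the proof since it proceeds similarly''---in that you spell out the value computation, the equivalence between $\epsilon$-optimality in $\rdp^\gamma$ and $\epsilon'$-optimality for the bandit, and the verification that $\rho,\mu,\eta$ stay bounded away from zero uniformly in $\gamma$.
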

\begin{proof}
  The proof is based on Proposition~\ref{prop:mannor} similarly to the proof of
  Lemma~\ref{lemma:lower-reward}. However, it considers a different encoding of
  the MAB problem, to ensure that there is an action whose value increases with
  the discount parameter---instead of the maximum reward parameter.

  Consider a $K$-armed bandit problem with arm probabilities $p_1, \dots, p_K$,
  rewards in $\{0,1\}$, and maximum reward probability $1/2 + \epsilon$.
  The corresponding RDP has observation states 
  $S = \{ s_0, s^+, s^- \}$ and actions 
  $A = \{ a_1, \dots, a_K \}$.
  The initial observation state is $s_0$.
  When the last observation state is $s_0$, action $a_i$ yields observation
  state $s^+$ with probability $p_i$ and observation
  state $s^-$ with probability $1-p_i$, and the reward is zero in both cases.
  When the last observation state is $s^+$, all actions yield $s^+$ with reward
  one.
  When the last observation state is $s^-$, all actions yield $s^-$ with reward
  zero.
  Clearly, during exploration, the agent can always go back to state $s_0$ by
  performing a stop action---as assumed throughout the paper.

  The key observation is that the value of every action in state $s_0$ is given
  by its probability of
  leading to state $s^+$ times the value
  $g = \sum_{i=2}^\infty \gamma^i$, which is finite and it amounts to 
  $\gamma^2/(1-\gamma)$. In particular, $g$ can be made arbitrarily big by
  making $\gamma$ arbitrarily close to $1$.
  Thus, $g$ plays the same role as $\rmax$ in Lemma~\ref{lemma:lower-reward},
  and we omit the rest of the proof since it proceeds similarly to the one of
  Lemma~\ref{lemma:lower-reward}.
\end{proof}

\subsection{Proof of Lemma~\ref{lemma:stationary-optimal-policy}}

The goal of this section is to prove Lemma~\ref{lemma:stationary-optimal-policy}.
Let the target RDP $\rdp$ and the ideal MDP $M$ be defined as in
Section~\ref{sec:algorithm}.
First, the optimal value functions $\valfunc_*$ and $\avalfunc_*$ of
$\rdp$ can be expressed in terms of the optimal value functions $\valfunc_*^M$
and $\avalfunc_*^M$ of $M$.
\begin{lemma} \label{lemma:value-function-rewrite}
  The value functions $\valfunc_*$ and $\avalfunc_*$ of $\rdp$ can be expressed
  as
  $\valfunc_*(h) = \valfunc_*^M(\tau(h))$ and
  $\avalfunc_*(h,a) = \avalfunc_*^M(\tau(h),a)$.
\end{lemma}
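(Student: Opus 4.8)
The plan is to prove Lemma~\ref{lemma:value-function-rewrite} by showing that the two claimed identities hold by a coupled induction, exploiting the defining recurrences for the optimal value functions of the RDP and of the ideal MDP. The central observation is that, by construction of $M$, its dynamics function satisfies $\dynfunc^M(\tau(h,s),r \mid \tau(h),a) = \sum_{s': \tau(h,s') = \tau(h,s)} \theta(\tau(h))(a,s',r)$, and since $\theta(h) = \theta(\tau(h))$ by the defining property of the minimum dynamics transducer $T$, summing the RDP dynamics $\dynfunc(s',r \mid h,a)$ over all observation states $s'$ that lead the transducer to the same successor state $q_2 = \tau(h,s')$ yields exactly $\dynfunc^M(q_2, r \mid \tau(h), a)$. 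This is the bridge that lets a sum over observation states in the RDP recurrence be regrouped as a sum over transducer states in the MDP recurrence.

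Concretely, I would first record the easy direction $\avalfunc_*(h,a) = \avalfunc_*^M(\tau(h),a)$ as a consequence of $\valfunc_*(h) = \valfunc_*^M(\tau(h))$: starting from $\avalfunc_*(h,a) = \sum_{s,r} \dynfunc(s,r \mid h,a) \cdot (r + \gamma \valfunc_*(hs))$, substitute $\valfunc_*(hs) = \valfunc_*^M(\tau(hs)) = \valfunc_*^M(\tau(\tau(h),s))$, then partition the sum over $s$ according to the value of $q_2 = \tau(\tau(h),s)$, and use the bridge identity above together with $\theta(h) = \theta(\tau(h))$ to collapse each group into a single $\dynfunc^M$ term; this reassembles into $\sum_{q_2,r} \dynfunc^M(q_2,r \mid \tau(h),a)\cdot(r + \gamma \valfunc_*^M(q_2)) = \avalfunc_*^M(\tau(h),a)$. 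So it remains to establish $\valfunc_*(h) = \valfunc_*^M(\tau(h))$, which I would do by the fixed-point / contraction characterization of $\valfunc_*$: both $h \mapsto \valfunc_*(h)$ and $h \mapsto \valfunc_*^M(\tau(h))$ satisfy the same Bellman optimality equation $f(h) = \max_a \sum_{s,r} \dynfunc(s,r\mid h,a)(r + \gamma f(hs))$ — for the former this is definitional, and for the latter it follows by applying the same regrouping to the MDP Bellman equation $\valfunc_*^M(q) = \max_a \sum_{q_2,r}\dynfunc^M(q_2,r\mid q,a)(r+\gamma \valfunc_*^M(q_2))$ at $q = \tau(h)$ — and since the discounted Bellman optimality operator is a $\gamma$-contraction with a unique fixed point, the two functions coincide. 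Equivalently, one can argue by induction on a finite horizon and take the limit, using that $R$ is bounded so the value series converges uniformly.

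The main obstacle I anticipate is bookkeeping the regrouping cleanly: one must be careful that $\dynfunc^M(q_2,r\mid q,a)$ is defined as a sum over \emph{all} observation states mapping to $q_2$, so that when we split the RDP's sum $\sum_{s,r}$ by successor transducer state we do not double count and the reward value $r$ is handled consistently (the RDP's $\dynfunc$ is zero unless $r = \rewardfunc(h,a,s)$, and distinct $s$ mapping to the same $q_2$ may carry distinct rewards — but that is fine, the MDP simply sums those contributions, matching Line~13's definition). A secondary subtlety is justifying that $\valfunc_*^M \circ \tau$ is a legitimate fixed point over the whole domain $S^*$ even though $\valfunc_*^M$ lives on the finite state set $Q$; this is immediate since $\tau: S^* \to Q$ is total. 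Once the bridge identity and the contraction/uniqueness argument are in place, both identities of the lemma drop out simultaneously, and Lemma~\ref{lemma:stationary-optimal-policy} then follows by a short separate argument (which the paper presumably gives next) comparing $\valfunc_{\pi(\tau(\cdot))}$ with $\valfunc_\pi^M$ for an $\epsilon$-optimal $\pi$.
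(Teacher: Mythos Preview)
Your proposal is correct and takes essentially the same approach as the paper: both hinge on the same regrouping identity (collapsing the sum over observation states into a sum over transducer successors via the definition of $\dynfunc^M$) and both derive the action-value identity from the state-value one. You are simply more explicit about the $\gamma$-contraction/uniqueness step, which the paper leaves implicit when it asserts that the rewritten recursion defines a function $\bar{v}_*$ on $Q$ that then coincides with $v_*^M$.
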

\begin{proof}
  We first show the claim for the value function $\valfunc_*$.
  We rewrite $\valfunc_*$ by expressing the dynamics $D$ of $\rdp$ in terms
  of the output function of the dynamics transducer $T$.
  \begin{align*}
    \valfunc_*(h) & =\max_a \sum_{sr} \dynfunc(s,r|h,a) \cdot (r + \gamma
    \cdot \valfunc_*(hs))
    \\
    \valfunc_*(h) & =\max_a \sum_{sr} \theta(\tau(h))(a,s,r) \cdot (r +
    \gamma
    \cdot \valfunc_*(hs))
  \end{align*}
  The equation above can be rewritten in terms of a new function
  $\bar{v}_*$ over states $Q$.
  In particular, $\valfunc_*(h) = \bar{\valfunc}_*(\tau(h))$.
  \begin{align*}
    \bar{\valfunc}_*(\tau(h)) & = \max_a  \sum_{sr} \theta(\tau(h))(a,s,r) \cdot
    (r + \gamma \cdot \bar{\valfunc}_*(\tau(\tau(h),s)))
  \end{align*}
  The definition of $\bar{v}_*$ can be given directly over $Q$.
  \begin{align*}
    \bar{\valfunc}_*(q) & = \max_a  \sum_{sr} \theta(q)(a,s,r) \cdot (r
    + \gamma \cdot \bar{\valfunc}_*(\tau(q,s)))
  \end{align*}
  We apply the definition of $\dynfunc^M$.
  \begin{align*}
    \bar{\valfunc}_*(q) & =  \max_a \sum_{q'r}  \dynfunc^M(q',r|q,a) \cdot (r
    + \gamma \cdot \bar{\valfunc}_*(q'))
  \end{align*}
  From the last equation, we can see that
  $\bar{\valfunc}_*(q) = \valfunc_*^M(q)$, and hence $\valfunc_*(h) =
  \valfunc^M_*(\tau(h))$ as claimed.

  For the action-value function $\avalfunc_*$, the steps are similar.
  One difference is that we make use of the result above for $\valfunc_*$.
  \begin{align*}
    \avalfunc_*(h,a) & = \sum_{sr} \dynfunc(s,r|h,a) \cdot (r + \gamma
    \cdot \valfunc_*(hs))
    \\
    \avalfunc_*(h,a) & = \sum_{sr} \theta(\tau(h))(a,s,r) \cdot (r + \gamma
    \cdot \valfunc_*(hs))
    \\
    \bar{\avalfunc}_*(q,a) & = \sum_{sr} \theta(q)(a,s,r) \cdot (r + \gamma
    \cdot \valfunc^M_*(\tau(q,s)))
    \\
    \bar{\avalfunc}_*(q,a) & = \sum_{q'r} \dynfunc^M(q',r|q,a) \cdot (r + \gamma
    \cdot \valfunc^M_*(q'))
  \end{align*}
  From the last equation, we can see that
  $\bar{\avalfunc}_*(q,a) = \avalfunc_*^M(q,a)$, and hence 
  $\avalfunc_*(h,a) = \avalfunc^M_*(\tau(h),a)$ as claimed.
\end{proof}



Then we show that, for every policy $\pi$ for $\rdp$ that can be expressed in
terms of a stationary policy $\pi'$ for $M$ of a certain form, the value of
$\pi$ in $\rdp$ can be expressed as the value of $\pi'$ in $M$.
\begin{lemma} \label{lemma:value-function-rewrite-policy}.
  Let $\pi'$ be a policy for $M$, and let $\pi = \pi'(\tau(\cdot))$.
  Then, $\valfunc_\pi(\cdot) = \valfunc^M_{\pi'}(\tau(\cdot))$.
\end{lemma}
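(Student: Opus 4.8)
The plan is to mirror the argument used in the proof of Lemma~\ref{lemma:value-function-rewrite}, but now tracking the value of a fixed policy rather than the optimal value. First I would start from the recursive characterisation of the policy value, $\valfunc_\pi(h) = \sum_{asr} \pi(a|h) \cdot \dynfunc(s,r|h,a) \cdot (r + \gamma \cdot \valfunc_\pi(hs))$, and substitute the two defining identities $\pi(a|h) = \pi'(\tau(h))$-action-probabilities, i.e. $\pi(a|h) = \pi'(a|\tau(h))$, and $\dynfunc(s,r|h,a) = \theta(\tau(h))(a,s,r)$. Since in addition $\tau(hs) = \tau(\tau(h),s)$, the resulting right-hand side depends on $h$ only through the transducer state $\tau(h)$.

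This observation lets me introduce an auxiliary function $\bar{\valfunc}_\pi : Q \to \mathbb{R}$ defined directly on transducer states by
\[
  \bar{\valfunc}_\pi(q) = \sum_{asr} \pi'(a|q) \cdot \theta(q)(a,s,r) \cdot \bigl(r + \gamma \cdot \bar{\valfunc}_\pi(\tau(q,s))\bigr),
\]
and to prove that $\valfunc_\pi(h) = \bar{\valfunc}_\pi(\tau(h))$ for every history $h$. I would then collapse the sum over observation states using the definition of the induced dynamics, $\dynfunc^M(q',r|q,a) = \sum_{s:\tau(q,s)=q'} \theta(q)(a,s,r)$, obtaining
\[
  \bar{\valfunc}_\pi(q) = \sum_{aq'r} \pi'(a|q) \cdot \dynfunc^M(q',r|q,a) \cdot \bigl(r + \gamma \cdot \bar{\valfunc}_\pi(q')\bigr),
\]
which is exactly the Bellman equation satisfied by $\valfunc^M_{\pi'}$ in $M$. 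Uniqueness of the bounded solution of this equation (guaranteed by $\gamma \in (0,1)$ together with the finiteness of $R$, so that values are bounded by $R_\mathrm{max}/(1-\gamma)$) then gives $\bar{\valfunc}_\pi = \valfunc^M_{\pi'}$, and hence $\valfunc_\pi(\cdot) = \valfunc^M_{\pi'}(\tau(\cdot))$ as claimed.

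The one step that needs care is the identity $\valfunc_\pi(h) = \bar{\valfunc}_\pi(\tau(h))$, since both sides are defined by an infinite-horizon discounted recursion rather than a terminating one. I would justify it by noting that the map $h \mapsto \bar{\valfunc}_\pi(\tau(h))$ satisfies the very fixed-point equation that defines $\valfunc_\pi$, and invoking uniqueness of the bounded fixed point of the $\gamma$-contraction; equivalently one can compare the $t$-step truncated values by induction on $t$ and let $t \to \infty$, the discounted tail being bounded by $\gamma^t \cdot R_\mathrm{max}/(1-\gamma) \to 0$. Everything else is routine rewriting, so I expect no genuine obstacle beyond this bookkeeping.
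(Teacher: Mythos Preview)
Your proposal is correct and follows essentially the same route as the paper: start from the Bellman recursion for $\valfunc_\pi$, substitute $\pi(a|h)=\pi'(a|\tau(h))$ and $\dynfunc(s,r|h,a)=\theta(\tau(h))(a,s,r)$, introduce the auxiliary function $\bar{\valfunc}_\pi$ on transducer states, and then collapse the sum over $s$ using the definition of $\dynfunc^M$ to recognise the Bellman equation for $\valfunc^M_{\pi'}$. You are in fact more careful than the paper, which presents the chain of rewritings without explicitly invoking the $\gamma$-contraction/uniqueness argument that you spell out.
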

\begin{proof}
  The proof technique is the one from Lemma~\ref{lemma:value-function-rewrite}.
  \begin{align*}
    \valfunc_\pi(h) & =  \sum_{asr} \pi(a|h) \cdot \dynfunc(s,r|h,a) \cdot (r +
    \gamma \cdot \valfunc_\pi(hs))
    \\
    \valfunc_\pi(h) & = \sum_{asr} \pi(a|h) \cdot
    \theta(\tau(h))(a,s,r) \cdot (r + \gamma \cdot \valfunc_\pi(hs))
    \\
    \bar{\valfunc}_\pi(q) & =  \sum_{asr} \pi(a|q) \cdot \theta(q)(a,s,r) \cdot (r
    + \gamma \cdot \bar{\valfunc}_\pi(\tau(q,s)))
    \\
    \bar{\valfunc}_\pi(q) & =  \sum_{aq'r} \pi(a|h) \cdot \dynfunc^M(q',r|q,a) \cdot (r
    + \gamma \cdot \bar{\valfunc}_\pi(q'))
  \end{align*}
  From the last equation we can see that $\bar{\valfunc}(q) = \valfunc^M(q)$,
  and hence $\valfunc_*(h) = \valfunc^M_*(\tau(h))$ as claimed.
\end{proof}

The previous two lemmas imply the main lemma of this section.
\lemmastationaryoptimalpolicy*
\begin{proof}
  Let $\pi'(\cdot) = \pi(\tau(\cdot))$.
  Assume $\max_q |\valfunc^M_{\pi}(q) - \valfunc^M_*(q)| \leq \epsilon$.
  The former is equivalent to
  $\max_h |\valfunc^M_{\pi}(\tau(h)) - \valfunc^M_*(\tau(h))| \leq \epsilon$.
  The former can be rewritten as
  $\max_h |\valfunc_{\pi'}(h) - \valfunc^M_*(\tau(h))| \leq \epsilon$ 
  by Theorem~\ref{lemma:value-function-rewrite-policy}.
  The former can be rewritten as
  $\max_h |\valfunc_{\pi'}(h) - \valfunc_*(h)| \leq \epsilon$
  by Theorem~\ref{lemma:value-function-rewrite}.
\end{proof}

\subsection{Proof of Lemma~\ref{lemma:approximation-of-mdp}}

The goal of this section is to prove Lemma~\ref{lemma:approximation-of-mdp}, by
applying known results for MDPs
\cite{singh1994upper,strehl2005theoretical,strehl2009reinforcement} similarly to
what is done in \cite{strehl2009reinforcement}.
First, similar MDPs have similar action-value functions.
Note that $\| \cdot \|_1$ is the $L_1$-norm.
\begin{proposition}[Strehl and Littman, 2005; Lemma 4]
  \label{prop:strehl-littman-2005}
  Let $M_1 = \langle A,S,R, \transitionfunc_1, \rewardfunc_1, \gamma \rangle$
  and $M_2 = \langle A,S,R, \transitionfunc_2, \rewardfunc_2, \gamma \rangle$ 
  be two MDPs. 
  Let $\rmax$ be the maximum value in $R$,
  let $\erewardfunc_i(s,a)$ be the expected reward after doing action $a$ in
  state $s$ in the MDP $M_i$, and
  let $\avalfunc^1_\pi$ and $\avalfunc^2_\pi$ be the action-value functions of
  $M_1$ and $M_2$ for a policy $\pi$.
  If $|\erewardfunc_1(s,a) - \erewardfunc_2(s,a)| \leq \alpha$ and 
  $\| \transitionfunc_1(\cdot|s,a) - \transitionfunc_2(\cdot|s,a) \|_1 \leq
  2 \cdot \beta$ for every state $s$ and action $a$, then the following
  condition holds for every state $s$ and action $a$:
  \begin{equation*}
    |\avalfunc^1_\pi(s,a) - \avalfunc^2_\pi(s,a)| \leq \frac{(1-\gamma) \cdot
    \alpha + \gamma \cdot \beta \cdot
    \rmax}{(1-\gamma) \cdot (1-\gamma+\beta \cdot \gamma)}.
  \end{equation*}
\end{proposition}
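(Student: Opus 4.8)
The plan is to prove this by a one-step expansion of the Bellman equations combined with an optimal coupling of the two transition distributions, and then to solve the resulting linear recursion for the sup-norm of the difference. Set $\epsilon_Q = \sup_{s,a}|\avalfunc^1_\pi(s,a) - \avalfunc^2_\pi(s,a)|$ and $\epsilon_V = \sup_s|\valfunc^1_\pi(s) - \valfunc^2_\pi(s)|$, where $\valfunc^i_\pi(s) = \sum_a \pi(a|s)\avalfunc^i_\pi(s,a)$. Two elementary facts are used throughout: since rewards lie in $[0,\rmax]$, each $\valfunc^i_\pi(s)$ lies in $[0,\rmax/(1-\gamma)]$, so $\epsilon_V \le \rmax/(1-\gamma)$ and, likewise, $\epsilon_Q$ is finite; and $\epsilon_V \le \epsilon_Q$, because $\valfunc^i_\pi(s)$ is a convex combination of the values $\avalfunc^i_\pi(s,\cdot)$.

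First I would expand each action-value through $\avalfunc^i_\pi(s,a) = \erewardfunc_i(s,a) + \gamma\sum_{s'}\transitionfunc_i(s'|s,a)\valfunc^i_\pi(s')$, obtaining, for every $(s,a)$,
\[
  \avalfunc^1_\pi(s,a) - \avalfunc^2_\pi(s,a) = \bigl(\erewardfunc_1(s,a) - \erewardfunc_2(s,a)\bigr) + \gamma\Bigl({\textstyle\sum_{s'}}\transitionfunc_1(s'|s,a)\valfunc^1_\pi(s') - {\textstyle\sum_{s'}}\transitionfunc_2(s'|s,a)\valfunc^2_\pi(s')\Bigr),
\]
and bound the first bracket by $\alpha$. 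For the second bracket I would fix $(s,a)$ and take a coupling $(X,Y)$ of $\transitionfunc_1(\cdot|s,a)$ and $\transitionfunc_2(\cdot|s,a)$ that is optimal for total variation, so that $\mathbb{P}[X\neq Y] = \tfrac12\|\transitionfunc_1(\cdot|s,a) - \transitionfunc_2(\cdot|s,a)\|_1 \le \beta$; since the two sums are the expectations $\mathbb{E}[\valfunc^1_\pi(X)]$ and $\mathbb{E}[\valfunc^2_\pi(Y)]$, splitting the expectation according to whether $X = Y$ gives
\[
  \bigl|\mathbb{E}[\valfunc^1_\pi(X) - \valfunc^2_\pi(Y)]\bigr| \le \mathbb{P}[X = Y]\,\epsilon_V + \mathbb{P}[X \neq Y]\,\frac{\rmax}{1-\gamma},
\]
since on $\{X=Y\}$ the integrand equals $\valfunc^1_\pi(X) - \valfunc^2_\pi(X)$, bounded by $\epsilon_V$, while off the diagonal both values lie in $[0,\rmax/(1-\gamma)]$, so their difference is bounded by $\rmax/(1-\gamma)$.

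The decisive step is that, because $\epsilon_V \le \rmax/(1-\gamma)$, the right-hand side above is nonincreasing in $\mathbb{P}[X=Y]$, so replacing $\mathbb{P}[X=Y]$ by its smallest possible value $1-\beta$ can only enlarge it; together with $\epsilon_V \le \epsilon_Q$ this yields, for every $(s,a)$,
\[
  |\avalfunc^1_\pi(s,a) - \avalfunc^2_\pi(s,a)| \le \alpha + \gamma(1-\beta)\,\epsilon_Q + \gamma\beta\,\frac{\rmax}{1-\gamma}.
\]
Taking the supremum over $(s,a)$ gives $\epsilon_Q \le \alpha + \gamma(1-\beta)\epsilon_Q + \gamma\beta\rmax/(1-\gamma)$; since $\gamma(1-\beta) < 1$ and $\epsilon_Q$ is finite, I can rearrange this to $\epsilon_Q(1 - \gamma + \beta\gamma) \le \alpha + \gamma\beta\rmax/(1-\gamma) = \bigl((1-\gamma)\alpha + \gamma\beta\rmax\bigr)/(1-\gamma)$, i.e.\ $\epsilon_Q \le \frac{(1-\gamma)\alpha + \gamma\beta\rmax}{(1-\gamma)(1-\gamma+\beta\gamma)}$, which is the claimed bound.

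The main obstacle I expect is obtaining this precise constant rather than the weaker $(1-\gamma)^2$ in the denominator that the naive estimate produces: if one instead bounds $|\sum_{s'}(\transitionfunc_1 - \transitionfunc_2)(s'|s,a)\,\valfunc^2_\pi(s')|$ and $|\sum_{s'}\transitionfunc_1(s'|s,a)\,(\valfunc^1_\pi - \valfunc^2_\pi)(s')|$ separately and adds them, one arrives at $\epsilon_Q(1-\gamma) \le \alpha + \gamma\beta\rmax/(1-\gamma)$ and hence denominator $(1-\gamma)^2$. The two error sources must be \emph{merged} rather than added: on the event (of probability at most $\beta$) that the coupled successors disagree, one must not recurse on $\epsilon_Q$ but instead absorb the entire future into the constant $\rmax/(1-\gamma)$, and this is exactly what turns the coefficient of $\epsilon_Q$ from $\gamma$ into $\gamma(1-\beta)$ and produces the sharper factor $1-\gamma+\beta\gamma$. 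A secondary care point is justifying the monotonicity substitution, which relies on the a priori bound $\epsilon_V \le \rmax/(1-\gamma)$.
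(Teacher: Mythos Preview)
The paper does not prove this proposition: it is stated as a cited result from Strehl and Littman (2005, Lemma~4) and used as a black box in the proof of Lemma~\ref{lemma:approximation-of-mdp}. There is therefore no in-paper argument to compare your proof against.

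That said, your argument is correct. The coupling device is a clean way to obtain the sharp denominator $(1-\gamma)(1-\gamma+\beta\gamma)$: by working on the joint event $\{X=Y\}$ versus $\{X\neq Y\}$ you ensure that the recursive term $\epsilon_Q$ only appears with weight $\gamma(1-\beta)$, exactly as you explain in your final paragraph. The monotonicity step (replacing $\mathbb{P}[X=Y]$ by $1-\beta$) is valid because $\epsilon_V \le \rmax/(1-\gamma)$; you might note explicitly that this uses $\beta \le 1$, which holds since the total variation of two probability measures is at most~$1$. The rest is routine algebra, and your final inequality matches the statement.
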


In our case we have a bound on the accuracy of the dynamics function, which
transfers as follows.
\begin{proposition}
  \label{prop:strehl-littman-2005-1}
  Let $M_1$, $M_2$, $\erewardfunc_1$, $\erewardfunc_2$, and $\rmax$ be as in
  Proposition~\ref{prop:strehl-littman-2005}, and let $\dynfunc_i$ be the
  dynamics function of $M_i$.
  The following holds, for every state $s$ and action $a$:
  $$|\erewardfunc_1(s,a) - \erewardfunc_2(s,a)| \leq \rmax \cdot \| \dynfunc_1(\cdot|s,a) -
  \dynfunc_2(\cdot|s,a) \|_1.$$
\end{proposition}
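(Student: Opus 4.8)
The plan is to unfold the definition of expected reward and reduce the claim to a routine application of the triangle inequality. Recall that, in an MDP with dynamics function $\dynfunc_i$, the expected reward of action $a$ in state $s$ is
\begin{equation*}
  \erewardfunc_i(s,a) = \sum_{s',r} \dynfunc_i(s',r \mid s,a) \cdot r,
\end{equation*}
where the sum ranges over all pairs $(s',r) \in S \times R$. First I would subtract the two expressions for $i = 1, 2$ and collect the common factor $r$, obtaining $\erewardfunc_1(s,a) - \erewardfunc_2(s,a) = \sum_{s',r} \bigl(\dynfunc_1(s',r \mid s,a) - \dynfunc_2(s',r \mid s,a)\bigr) \cdot r$.

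Next I would take absolute values and push them inside the sum via the triangle inequality, giving $|\erewardfunc_1(s,a) - \erewardfunc_2(s,a)| \leq \sum_{s',r} |\dynfunc_1(s',r \mid s,a) - \dynfunc_2(s',r \mid s,a)| \cdot r$. Since $R \subseteq \mathbb{R}_{\geq 0}$ and $\rmax$ is the maximum value in $R$, every reward satisfies $0 \leq r \leq \rmax$, so each term is at most $\rmax \cdot |\dynfunc_1(s',r \mid s,a) - \dynfunc_2(s',r \mid s,a)|$. Factoring $\rmax$ out of the sum and recognising what remains as the $L_1$-distance between the two distributions over $S \times R$ yields $|\erewardfunc_1(s,a) - \erewardfunc_2(s,a)| \leq \rmax \cdot \| \dynfunc_1(\cdot \mid s,a) - \dynfunc_2(\cdot \mid s,a) \|_1$, which is exactly the claim; since $s$ and $a$ were arbitrary, this holds for every state and action.

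There is no real obstacle here: the only point that needs a moment of care is that rewards are non-negative, so that $|r| = r \leq \rmax$ and no absolute value on $r$ is lost, and that the $L_1$-norm is taken over the joint space $S \times R$ consistently with how $\dynfunc^M(\cdot \mid q,a)$ is treated as a distribution elsewhere in the analysis (cf.\ Definition~\ref{def:mdp-approximation}). This proposition is then combined with Proposition~\ref{prop:strehl-littman-2005} (taking $\alpha = \rmax \cdot \alpha$ and $\beta = \alpha/2$ for an $\alpha$-approximation) to transfer an $L_1$ bound on the dynamics into a bound on action-value functions.
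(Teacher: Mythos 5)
Your proposal is correct and follows essentially the same route as the paper's proof: expand $\erewardfunc_i(s,a)$ as the expectation of $r$ under $\dynfunc_i(\cdot\mid s,a)$, apply the triangle inequality, bound each non-negative reward by $\rmax$, and recognise the remaining sum as the $L_1$-distance over $S \times R$. No gaps.
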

\begin{proof}
  We have $\erewardfunc_i(s_1,a) = \sum_r r \cdot \sum_{s_2} \dynfunc_i(s_2,r|s_1,a)$.
  Then, 
  \begin{gather*}
  |\erewardfunc_1(s,a) - \erewardfunc_2(s,a)| = 
  \\ 
  \left|\sum_r r \cdot \sum_{s_2} \dynfunc_1(s_2,r|s_1,a)-
  \dynfunc_2(s_2,r|s_1,a) \right| \leq
  \\ 
  \sum_r r \cdot \sum_{s_2} \left| \dynfunc_1(s_2,r|s_1,a)-
  \dynfunc_2(s_2,r|s_1,a) \right| \leq
  \\ 
  \rmax \cdot \sum_r \cdot \sum_{s_2} \left| \dynfunc_1(s_2,r|s_1,a)-
  \dynfunc_2(s_2,r|s_1,a) \right| =
  \\
  \rmax \cdot \| \dynfunc_1(\cdot|s,a) - \dynfunc_2(\cdot|s,a) \|_1.
  \end{gather*}
  This concludes the proof.
\end{proof}

\begin{proposition}
  \label{prop:strehl-littman-2005-2}
  Let $M_1$ and $M_2$ be as in Proposition~\ref{prop:strehl-littman-2005}, and
  let $\dynfunc_i$ be the dynamics function of $M_i$.
  The following holds, for every state $s$ and action $a$:
  $$\|\transitionfunc_1(\cdot|s,a) - \transitionfunc_2(\cdot|s,a)\|_1 \leq \|
  \dynfunc_1(\cdot|s,a) - \dynfunc_2(\cdot|s,a) \|_1.$$
\end{proposition}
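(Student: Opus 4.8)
The plan is to observe that the transition function is nothing but the marginal of the dynamics function over the reward component, and that taking a marginal is a non-expansive operation with respect to the $L_1$-norm. Concretely, by the definition of the dynamics function of an NMDP specialised to MDPs, we have $\transitionfunc_i(s_2 \mid s_1, a) = \sum_r \dynfunc_i(s_2, r \mid s_1, a)$ for each $i \in \{1,2\}$, since $\dynfunc_i(s_2,r\mid s_1,a)$ equals $\transitionfunc_i(s_2\mid s_1,a)$ exactly when $r$ is the (deterministic) reward associated with that transition and is zero otherwise.

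From here I would expand the left-hand side as $\|\transitionfunc_1(\cdot\mid s_1,a) - \transitionfunc_2(\cdot\mid s_1,a)\|_1 = \sum_{s_2} \bigl|\sum_r \dynfunc_1(s_2,r\mid s_1,a) - \sum_r \dynfunc_2(s_2,r\mid s_1,a)\bigr|$, pull the two sums over $r$ together into a single sum of differences, and apply the triangle inequality to move the absolute value inside the sum over $r$. This yields $\sum_{s_2}\sum_r |\dynfunc_1(s_2,r\mid s_1,a) - \dynfunc_2(s_2,r\mid s_1,a)|$, which is exactly $\|\dynfunc_1(\cdot\mid s_1,a) - \dynfunc_2(\cdot\mid s_1,a)\|_1$, giving the claim.

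There is no real obstacle here: the only nontrivial step is the single application of the triangle inequality, and the argument mirrors the one already used in the proof of Proposition~\ref{prop:strehl-littman-2005-1} for the expected rewards. The statement is essentially the well-known fact that projecting a joint distribution onto a marginal can only decrease total-variation (equivalently $L_1$) distance.
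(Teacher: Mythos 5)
Your proof is correct and follows essentially the same route as the paper: write $\transitionfunc_i(\cdot\mid s_1,a)$ as the marginal of $\dynfunc_i(\cdot\mid s_1,a)$ over rewards, expand the $L_1$-norm, and apply the triangle inequality to move the absolute value inside the sum over $r$. No gaps.
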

\begin{proof}
  We have $\dynfunc_i(s_2|s_1,a) = \sum_r \dynfunc_i(s_2,r|s_1,a)$.
  Then, for every state $s_1$ and action $a$,
  \begin{gather*}
  \|\transitionfunc_1(\cdot|s_1,a) - \transitionfunc_2(\cdot|s_1,a)\|_1 =
  \\ 
  \sum_{s_2} \left|\transitionfunc_1(s_2|s_1,a) - \transitionfunc_2(s_2|s_1,a)
  \right| =
  \\ 
  \sum_{s_2} \left| \sum_r \dynfunc_1(s_2,r|s_1,a) -
  \dynfunc_2(s_2,r|s_1,a)\right| \leq
  \\ 
  \sum_{s_2}  \sum_r \left| \dynfunc_1(s_2,r|s_1,a) - \dynfunc_2(s_2,r|s_1,a)
  \right| =
  \\ 
  \| \dynfunc_1(s_2,r|s_1,a) - \dynfunc_2(s_2,r|s_1,a) \|_1.
  \end{gather*}
  This concludes the proof.
\end{proof}

We apply the previous proposition to bound the action-value function
of an MDP $\hat{M}$ that is an approximation of $M$.
In the following, to simplify the notation, we identify the states of
$\hat{M}$ with the ones of $M$; this is w.l.o.g.\ since we could rename each
state $q$ of $\hat{M}$ with $\phi(q)$ where $\phi$ is the bijection mentioned in
Definition~\ref{def:mdp-approximation}.
\begin{lemma} \label{lemma:bound-action-value}
  If $\hat{M}$ is an $\alpha$-approximation of $M$, then
  the following holds for every state $q$, action $a$, and stationary policy
  $\pi$:
  \begin{equation*}
    |\avalfunc^{\hat{M}}_\pi(q,a) - \avalfunc^M_\pi(q,a)| \leq 
    \frac{\alpha \cdot \rmax }{(1-\gamma)^2}.
  \end{equation*}
\end{lemma}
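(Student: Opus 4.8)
The plan is to obtain the bound by a direct chaining of Proposition~\ref{prop:strehl-littman-2005} with the two error-transfer estimates of Propositions~\ref{prop:strehl-littman-2005-1} and~\ref{prop:strehl-littman-2005-2}, applied with $M_1 = M$ and $M_2 = \hat{M}$. As announced, I identify the state space of $\hat{M}$ with that of $M$ through the bijection $\phi$ of Definition~\ref{def:mdp-approximation}; by that same definition $\hat{M}$ and $M$ share actions, rewards, and discount factor, so all three propositions apply, and the hypothesis gives $\| \dynfunc^M(\cdot|q,a) - \dynfunc^{\hat{M}}(\cdot|q,a) \|_1 \leq \alpha$ for every state $q$ and action $a$.

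First I would push this $L_1$ bound on the dynamics functions through Proposition~\ref{prop:strehl-littman-2005-1} to get $|\erewardfunc_M(q,a) - \erewardfunc_{\hat{M}}(q,a)| \leq \rmax \cdot \alpha$, and through Proposition~\ref{prop:strehl-littman-2005-2} to get $\| \transitionfunc^M(\cdot|q,a) - \transitionfunc^{\hat{M}}(\cdot|q,a) \|_1 \leq \alpha$. This lets me invoke Proposition~\ref{prop:strehl-littman-2005} with its parameter ``$\alpha$'' instantiated to $\rmax \cdot \alpha$ and its parameter ``$\beta$'' instantiated to $\alpha/2$ (the halving comes from the factor $2$ in the transition-distance hypothesis there). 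The conclusion then reads, for every state $q$, action $a$, and stationary policy $\pi$,
\begin{equation*}
  |\avalfunc^{M}_\pi(q,a) - \avalfunc^{\hat{M}}_\pi(q,a)| \leq
  \frac{(1-\gamma)\cdot \rmax \cdot \alpha + \gamma \cdot (\alpha/2) \cdot \rmax}{(1-\gamma)\cdot(1-\gamma + (\alpha/2)\cdot\gamma)}.
\end{equation*}

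The last step is to simplify the right-hand side: the numerator equals $\rmax \cdot \alpha \cdot (1 - \gamma/2) \leq \rmax \cdot \alpha$ since $\gamma \in (0,1)$, while the denominator is at least $(1-\gamma)^2$ because $(\alpha/2)\cdot\gamma \geq 0$. Hence the bound is at most $\alpha \cdot \rmax / (1-\gamma)^2$, as claimed. I do not expect any genuine obstacle here: the argument is routine bookkeeping once the two transfer lemmas are in place; the only point that needs care is the factor-of-two convention in the transition-distance hypothesis of Proposition~\ref{prop:strehl-littman-2005}, which is why $\beta$ is taken to be $\alpha/2$ rather than $\alpha$.
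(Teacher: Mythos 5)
Your proposal is correct and follows essentially the same route as the paper: feed the $L_1$ bound on the dynamics through Propositions~\ref{prop:strehl-littman-2005-1} and~\ref{prop:strehl-littman-2005-2}, then apply Proposition~\ref{prop:strehl-littman-2005} and simplify numerator and denominator to get $\alpha \cdot \rmax/(1-\gamma)^2$. If anything, your handling of the factor of two (taking $\beta = \alpha/2$) is cleaner than the paper's own proof, which blurs the distinction between its $\alpha$, $\alpha'$, and $\epsilon$, but the argument and the final bound coincide.
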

\begin{proof}
  Let $\rmax$ be the maximum value in $R$, and
  let $\erewardfunc(s,a)$ and $\hat{\erewardfunc}(s,a)$ be the expected
  reward after doing action $a$ in state $s$ in the MDP $M$ and $\hat{M}$,
  respectively.
  Let $\alpha'$ and $\beta$ be such that
  $|\erewardfunc(s,a) - \hat{\erewardfunc}(s,a)| \leq \alpha'$ and 
  $\| \transitionfunc(\cdot|s,a) - \hat{\transitionfunc}(\cdot|s,a) \|_1 \leq
  2 \beta$ for every state $s$ and action $a$.
  Then,
  \begin{gather*}
    |\avalfunc^{\hat{M}}_\pi(q,a) - \avalfunc^M_\pi(q,a)| \leq
    \frac{(1-\gamma) \cdot \alpha + \gamma \cdot \beta \cdot
    \rmax}{(1-\gamma) \cdot (1-\gamma+\beta \cdot \gamma)}
    \leq
    \frac{(1-\gamma) \cdot \rmax \cdot \epsilon + \gamma \cdot \epsilon \cdot
    \rmax}{(1-\gamma) \cdot (1-\gamma+\epsilon \cdot \gamma)}
    =
    \\
    \frac{\alpha \cdot \rmax }{(1-\gamma) \cdot (1-\gamma+\alpha \cdot \gamma)}
    \leq
    \frac{\alpha \cdot \rmax }{(1-\gamma)^2}.
  \end{gather*}
  The first inequality holds by
  Proposition~\ref{prop:strehl-littman-2005},
  the second one by Propositions~\ref{prop:strehl-littman-2005-1}
  and~\ref{prop:strehl-littman-2005-2}, and
  the last one because it is obtained by removing a positive additive term at
  the denominator.
\end{proof}

A near-optimal policy for $M$ can be computed via the value iteration algorithm.
Specifically, value iteration allows us to compute a close approximation of the
action-value function of $\hat{M}$, and then the corresponding greedy
policy---that picks actions with maximum value---is near-optimal.
The proposition is a straighforward generalisation of Proposition~4 from 
\cite{strehl2009reinforcement} to the case where $\rmax$ is not necessarily $1$.
We rewrite their proof to include $\rmax$.
\begin{proposition}[Generalisation of Proposition~4 
  of {[}Strehl et al., 2009{]}]
  \label{prop:strehl2009}
  Consider an MDP $M'$, let $\gamma$ be its discount factor, let $\rmax$ be its
  maximum reward value, and
  let $\avalfunc^{M'}_*$ be its optimal action-value function.
  Let $\alpha > 0$ be any real number satisfying $\alpha < \rmax/(1-\gamma)$. 
  Suppose that value iteration is run on $M'$ for
  $\left\lceil \frac{1}{1-\gamma} \cdot \ln \frac{\rmax}{\alpha \cdot
    (1-\gamma)} \right\rceil$
  iterations where the action-value estimates are initialized
  to some value between $0$ and $\rmax/(1-\gamma)$. Then, the resulting action value
  estimates $\hat{\avalfunc}^{M'}_*$ satisfy 
  $|\hat{\avalfunc}^{M'}_*(s,a)-\avalfunc^{M'}_*(s,a)| \leq \alpha$ for
  every state $s$ and action $a$.
\end{proposition}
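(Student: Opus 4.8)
The plan is to recognise the statement as the classical geometric-convergence bound for value iteration, adapted so that the reward magnitude $\rmax$ is tracked explicitly rather than fixed to $1$. First I would introduce the Bellman optimality operator $\mathcal{T}$ acting on action-value functions, defined by $(\mathcal{T}\avalfunc)(s,a) = \sum_{s'r} \dynfunc^{M'}(s',r \mid s,a)\,\bigl(r + \gamma \max_{a'} \avalfunc(s',a')\bigr)$, and observe that one iteration of value iteration is exactly the application of $\mathcal{T}$, so after $t$ iterations the estimate is $\hat{\avalfunc}_t = \mathcal{T}^t \hat{\avalfunc}_0$. Two standard facts underpin everything: $\avalfunc^{M'}_*$ is the unique fixed point of $\mathcal{T}$, and $\mathcal{T}$ is a $\gamma$-contraction in the supremum norm, i.e.\ $\|\mathcal{T}\avalfunc - \mathcal{T}\avalfunc'\|_\infty \leq \gamma \, \|\avalfunc - \avalfunc'\|_\infty$; the contraction holds because $\max_{a'}(\cdot)$ is $1$-Lipschitz in the sup-norm and the transition probabilities sum to one.

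Second I would control the initial error. Since rewards are non-negative and bounded by $\rmax$, every action value of $M'$ lies in $[0,\rmax/(1-\gamma)]$, and by hypothesis the initialisation $\hat{\avalfunc}_0$ also takes values in this interval; hence $\|\hat{\avalfunc}_0 - \avalfunc^{M'}_*\|_\infty \leq \rmax/(1-\gamma)$. Applying the contraction $t$ times then yields $\|\hat{\avalfunc}_t - \avalfunc^{M'}_*\|_\infty \leq \gamma^t \cdot \rmax/(1-\gamma)$, which is the quantitative core of the proposition.

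Third, it remains to choose $t$ making the right-hand side at most $\alpha$. Solving $\gamma^t \cdot \rmax/(1-\gamma) \leq \alpha$ gives $t \geq \ln\!\bigl(\rmax/(\alpha(1-\gamma))\bigr) / \ln(1/\gamma)$, where the logarithm in the numerator is positive precisely because of the hypothesis $\alpha < \rmax/(1-\gamma)$. Using the elementary inequality $\ln(1/\gamma) \geq 1-\gamma$, valid for $\gamma \in (0,1)$ (it is $-\ln\gamma \geq 1-\gamma$), it suffices to take $t \geq \tfrac{1}{1-\gamma}\ln\tfrac{\rmax}{\alpha(1-\gamma)}$, and the ceiling in the statement guarantees an integer at least this large, giving $|\hat{\avalfunc}^{M'}_*(s,a) - \avalfunc^{M'}_*(s,a)| \leq \alpha$ for every $s$ and $a$. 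I expect the only place needing real care — the ``main obstacle'' in an otherwise textbook argument — to be this last passage: keeping the $\rmax$-dependent bookkeeping consistent throughout (the original proof takes $\rmax = 1$) and invoking the logarithmic inequality so that the stated iteration count, rather than the raw $\bigl(\ln(1/\gamma)\bigr)^{-1}$ count, is what is used; the contraction-mapping part is entirely standard.
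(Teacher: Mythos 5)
Your proof is correct and follows essentially the same route as the paper: the paper's explicit recursion $\Delta_i \leq \gamma\,\Delta_{i-1}$ is just the $\gamma$-contraction of the Bellman operator unrolled, and both arguments use the initial bound $\rmax/(1-\gamma)$ and the inequality $-\ln\gamma \geq 1-\gamma$ to convert the geometric rate into the stated iteration count.
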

\begin{proof}
  Let $\hat{\avalfunc}^{M'}_0$ denote the initial action-value estimates, and
  let $\hat{\avalfunc}^{M'}_i$ denote the action-value estimates after the $i$-th iteration
  of value iteration.
  Then, let $\Delta_i = \max_{s,a} |\avalfunc^{M'}_*(s,a) -
  \hat{\avalfunc}^{M'}_i(s,a)|$.
  We have that
  \begin{align*}
    \Delta_i & = \max_{s,a} \left|
    \sum_{s'r} \dynfunc^{M'}(s',r|s,a) \cdot (r + \gamma \cdot
    \valfunc^{M'}_*(s')) -
    \sum_{s'r} \dynfunc^{M'}(s',r|s,a) \cdot 
    (r + \gamma \cdot \hat{\valfunc}^{M'}_{i-1}(s')) \right|
    \\
    & = \gamma \cdot \max_{s,a} \left|
    \sum_{s'r} \dynfunc^{M'}(s',r|s,a) \cdot 
    (\valfunc^{M'}_*(s') - \hat{\valfunc}^{M'}_{i-1}(s')) \right|
    \\
    & \leq \gamma \cdot \Delta_{i-1}.
  \end{align*}
  Using this bound along with the fact that $\Delta_0 \leq \rmax/ (1-\gamma)$
  shows that $\Delta_i \leq (\rmax \cdot \gamma^i) / (1-\gamma)$.
  Thus, in order to have $\Delta_i \leq \alpha$, it suffices that
  $i \geq \ln((\alpha \cdot (1-\gamma))/\rmax) / \ln \gamma$.
  Then, the theorem follows since
  \begin{equation*}
    \frac{\ln\left(\frac{\alpha \cdot (1-\gamma)}{\rmax}\right)}{\ln \gamma} =
    \frac{\ln\left(\frac{\rmax}{\alpha \cdot (1-\gamma)}\right)}{-\ln \gamma} 
    \leq
    \frac{1}{1-\gamma} \cdot
    \ln\left(\frac{\rmax}{\alpha \cdot (1-\gamma)}\right).
  \end{equation*}
  To see that the former inequality holds, it suffices to note that
  the inequality $-\ln \gamma \geq 1-\gamma$ follows from the 
  inequality $e^x \geq 1+x$.
\end{proof}

\begin{proposition}[Singh and Yee, 1994; Corollary 2]
  \label{prop:singh-yee}
  Consider an MDP $M'$, let $\gamma$ be its discount factor, and let
  $\avalfunc^{M'}_*$ be its optimal action-value function.
  Furthermore, let 
  $\hat{\avalfunc}_*$ be an estimate of $\avalfunc^{M'}_*$, and let
  let $\pi$ be the greedy policy with respect to $\hat{\avalfunc}_*$, i.e., 
  $\pi(s) = \argmax_a \hat{\avalfunc}_*(s,a)$.
  For any $\alpha > 0$, if 
  $|\hat{\avalfunc}_*(s,a) - \avalfunc^{M'}_*(s,a)| \leq \alpha$ for every
  state $s$ and action $a$, then 
  $|\valfunc^{M'}_\pi(s) - \valfunc^{M'}_*(s)| \leq 2 \alpha/(1-\gamma)$
  for every state $s$.
\end{proposition}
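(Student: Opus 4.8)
The plan is to run the standard greedy-policy suboptimality argument in two stages: first I would bound, at a single state $s$, how far the optimal value $\valfunc^{M'}_*(s)$ can be from the optimal-action value $\avalfunc^{M'}_*(s,\pi(s))$ of the greedily chosen action; then I would propagate this per-state bound through the Bellman recursion associated with $\pi$ to obtain the bound on $\valfunc^{M'}_\pi$.

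For the first stage, fix a state $s$ and pick $a^* \in \argmax_a \avalfunc^{M'}_*(s,a)$, so that $\valfunc^{M'}_*(s) = \avalfunc^{M'}_*(s,a^*)$. Since $\pi(s)$ maximises $\hat{\avalfunc}_*(s,\cdot)$ we have $\hat{\avalfunc}_*(s,\pi(s)) \geq \hat{\avalfunc}_*(s,a^*)$, and applying the hypothesis $|\hat{\avalfunc}_* - \avalfunc^{M'}_*| \leq \alpha$ at $(s,\pi(s))$ and at $(s,a^*)$ yields
\begin{equation*}
  \avalfunc^{M'}_*(s,\pi(s)) \;\geq\; \hat{\avalfunc}_*(s,\pi(s)) - \alpha \;\geq\; \hat{\avalfunc}_*(s,a^*) - \alpha \;\geq\; \avalfunc^{M'}_*(s,a^*) - 2\alpha \;=\; \valfunc^{M'}_*(s) - 2\alpha ,
\end{equation*}
hence $\valfunc^{M'}_*(s) - \avalfunc^{M'}_*(s,\pi(s)) \leq 2\alpha$ for every $s$.

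For the second stage, set $\Delta = \max_s \bigl(\valfunc^{M'}_*(s) - \valfunc^{M'}_\pi(s)\bigr)$, which is non-negative since $\valfunc^{M'}_*$ dominates $\valfunc^{M'}_\pi$ pointwise. Writing $\valfunc^{M'}_\pi(s) = \avalfunc^{M'}_\pi(s,\pi(s))$ and inserting $\avalfunc^{M'}_*(s,\pi(s))$ as an intermediate term,
\begin{equation*}
  \valfunc^{M'}_*(s) - \valfunc^{M'}_\pi(s) = \bigl(\valfunc^{M'}_*(s) - \avalfunc^{M'}_*(s,\pi(s))\bigr) + \bigl(\avalfunc^{M'}_*(s,\pi(s)) - \avalfunc^{M'}_\pi(s,\pi(s))\bigr) .
\end{equation*}
The first bracket is at most $2\alpha$ by the first stage. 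In the second bracket the immediate-reward contributions cancel, because the action $\pi(s)$ is the same in $\avalfunc^{M'}_*$ and in $\avalfunc^{M'}_\pi$, so it equals $\gamma \sum_{s',r} \dynfunc^{M'}(s',r|s,\pi(s)) \cdot \bigl(\valfunc^{M'}_*(s') - \valfunc^{M'}_\pi(s')\bigr)$, which is at most $\gamma \cdot \Delta$. Taking the maximum over $s$ gives $\Delta \leq 2\alpha + \gamma\,\Delta$, i.e.\ $\Delta \leq 2\alpha/(1-\gamma)$; and since $0 \leq \valfunc^{M'}_*(s) - \valfunc^{M'}_\pi(s) \leq \Delta$ for all $s$, this is precisely $|\valfunc^{M'}_\pi(s) - \valfunc^{M'}_*(s)| \leq 2\alpha/(1-\gamma)$.

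There is no real obstacle here; the only point requiring care is not conflating the two action-value functions: $\avalfunc^{M'}_*(s,a)$ uses the optimal one-step-ahead value $\valfunc^{M'}_*$ while $\avalfunc^{M'}_\pi(s,a)$ uses $\valfunc^{M'}_\pi$, so their difference at a common action is a pure $\gamma$-discounted average of $\valfunc^{M'}_* - \valfunc^{M'}_\pi$ with no leftover reward term, which is exactly what makes the final fixed-point step go through. Equivalently, one could phrase the whole argument via the $\gamma$-contraction of the Bellman optimality operator and the $\pi$-evaluation operator applied to $\hat{\avalfunc}_*$, but the elementary version above is the most direct.
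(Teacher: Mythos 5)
Your proof is correct. The paper itself does not prove this proposition---it is imported verbatim from Singh and Yee (1994, Corollary~2)---and your two-stage argument (the per-state bound $\valfunc^{M'}_*(s) - \avalfunc^{M'}_*(s,\pi(s)) \leq 2\alpha$ via the greedy choice, followed by the fixed-point step $\Delta \leq 2\alpha + \gamma\Delta$ using that the immediate rewards cancel at the common action $\pi(s)$) is exactly the standard derivation underlying that citation, and it yields the stated bound $2\alpha/(1-\gamma)$.
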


To prove the main result of this section, we apply
Lemma~\ref{lemma:bound-action-value} and the previous propositions above.
Note that $\epsm = \frac{(1-\gamma)^3 \cdot \epsilon}{3 \cdot
\rmax}$ as specified in Table~\ref{tab:quantities}.
\lemmaapproximationofmdp*
\begin{proof}
  Assume that $\hat{M}$ is an $\epsm$-approximation of $M$.
  By Lemma~\ref{lemma:bound-action-value},
  \begin{equation} \label{eq:lemmaapproximationofmdp-1}
    |\avalfunc^{\hat{M}}_\pi(q,a) - \avalfunc^M_\pi(q,a)| \leq 
    \frac{\epsm \cdot \rmax }{(1-\gamma)^2}.
  \end{equation}
  Consider to run value iteration on $\hat{M}$ for 
  $\left\lceil \frac{1}{1-\gamma} \cdot 
    \ln \left( \frac{2 \cdot \rmax}{\epsilon \cdot (1-\gamma)^2}
  \right)\right\rceil$ iterations, initialising all action-value estimates to
  zero.
  By Proposition~\ref{prop:strehl2009},
  it yields $\hat{\avalfunc}^{\hat{M}}_*$ such that
  $|\hat{\avalfunc}^{\hat{M}}_*(s,a)-\avalfunc^{\hat{M}}_*(s,a)| \leq
  1/2 \cdot \epsm \cdot \rmax / (1-\gamma)^2$. Note that there are two
  levels of approximation; namely, $\hat{\avalfunc}^{\hat{M}}_*$ is an
  approximation  of the action-value function
  $\avalfunc^{\hat{M}}_*$ of an approximation $\hat{M}$ of the MDP $M$.
  The former bound together with \eqref{eq:lemmaapproximationofmdp-1} yields
  the following:
  \begin{equation*}
    |\hat{\avalfunc}^{\hat{M}}_*(s,a)-\avalfunc^M_*(s,a)| \leq 
    \frac{1}{2} \cdot \frac{\epsm \cdot \rmax}{(1-\gamma)^2} + 
    \frac{\epsm \cdot \rmax }{(1-\gamma)^2}
    =
    \frac{3}{2} \cdot \frac{\epsm \cdot \rmax}{(1-\gamma)^2} =
    \frac{(1-\gamma) \cdot \epsilon}{2}.
  \end{equation*}
  Let $\pi$ be the greedy policy $\pi(q) = \argmax_a \hat{\avalfunc}_*(q,a)$.
  Thus, by Proposition~\ref{prop:singh-yee}, 
  $|\valfunc^M_\pi(q) - \valfunc^M_*(q)| \leq \epsilon$ for every state $q$, and
  hence $\pi$ is $\epsilon$-optimal for $M$.
\end{proof}

\subsection{Proof of Lemma~\ref{lemma:approximation-of-pdfa}}

To simplify the notation, we identify the states of
$\hat{\mathcal{A}}$ with the ones of $\mathcal{A}$; this is w.l.o.g.\ since we
could rename each state $q$ of $\hat{\mathcal{A}}$ with $\phi(q)$ where $\phi$
is the isomorphism mentioned in
Definition~\ref{def:pdfa-approximation}.
Note that $\epsa = \frac{(1-p) \cdot \epsm}{|A| \cdot n \cdot
|\Sigma|}$ as defined in Table~\ref{tab:quantities}.
\lemmaapproximationofpdfa*
\begin{proof}
  For any state $q$ and action $a$,
  \begin{gather*}
    \| \dynfunc^M(\cdot|q,a) - \dynfunc^{\hat{M}}(\cdot|q,a) \|_1 = 
    \\
    \sum_{q'}\sum_r | \dynfunc^M(q',r|q,a)
    - \dynfunc^{\hat{M}}(q',r|q,a)| =
    \\
    \sum_{q'}\sum_{sr:\tau(q,s) = q'} \left|
    \theta(q)(a,s,r) - \frac{|A|}{1-p} \cdot
    \hat{\lambda}(q,asr) \right| =
    \\
    \frac{|A|}{1-p} \cdot \sum_{q'} \sum_{sr:\tau(q,s) = q'} \left|
    \lambda(q,asr) - \hat{\lambda}(q,asr) \right| =
    \\
    \frac{|A|}{1-p} \cdot \left(\sum_{q'}\sum_{sr:\tau(q,s) = q' \land asr \in
    \Sigma}
    \left|
    \lambda(q,asr) - \hat{\lambda}(q,asr) \right| 
    +
    \sum_{q'} \sum_{sr:\tau(q,s) = q' \land asr \notin \Sigma} \left|
    \lambda(q,asr) - \hat{\lambda}(q,asr) \right| \right)
    \leq
    \\
    \frac{|A|}{1-p} \cdot \sum_{q'} \sum_{sr:\tau(q,s) = q' \land asr \in
    \Sigma} \epsa
    \leq
    \\
    \frac{|A|}{1-p} \cdot n \cdot |\Sigma| \cdot \epsa = \epsm.
  \end{gather*}
  The first equality holds by the definition of $L_1$-norm. 
  The second equality holds by the definition of $\dynfunc^M$ and
  $\dynfunc^{\hat{M}}$.
  The third equality holds by the definition of $\lambda$. 
  The fourth equality is simply a partitioning of the elements $q'sr$.
  The first inequality holds because $\hat{\mathcal{A}}$ is an
  $\epsa$-approximation of $\mathcal{A}$; in particular, for all elements
  $q'sr$ such that $asr \notin \Sigma$, $\lambda(q,asr) = 0$ and hence
  $\hat{\lambda}(q,asr) = 0$.
  The second inequality holds because there are at most $n$ elements $q'$ and at
  most $|\Sigma|$ elements $sr$.
\end{proof}

\subsection{Proof of Lemma~\ref{lemma:distinguishability}}

We split the lemma in three separate ones.
Note that $\mu$ is the distinguishability of $\rdp$, $n$ is the number of states
of the dynamics transducer $T$, and $p$ is the stop probability.

\par\smallskip\noindent\textbf{Lemma~\ref{lemma:distinguishability}.1}.
\textit{The distinguishability of $\mathcal{A}$ is at least $\mu \cdot (1-
p)^n$.}
\begin{proof}
  Consider two distinct states $q_1$ and $q_2$ of $\mathcal{A}$.
  It suffices to show a string $x \in \Sigma^+$ such that
  $|\lambda(q_1,x)-\lambda(q_2,x)| \geq \mu \cdot (1- p)^n$.
  Since $q_1$ and $q_2$ are distinct and $T$ is minimum, there are strings 
  $y_1, y_2, z \in \Sigma^*$ such that: 
  \begin{romanenumerate}
  \item
    $y_1z$ or $y_2z$ have non-zero probability of being prefixes of strings
    generated by
    $\mathcal{A}$, i.e., $\lambda(q_0,y_1z) > 0$ or
    $\lambda(q_0,y_2z) > 0$;
  \item
    $y_1$ and $y_2$ lead to $q_1$ and $q_2$, respectively, starting from the
    initial state $q_0$, i.e., $\tau'(q_0,y_1) = q_1$ and $\tau'(q_0,y_1) =
    q_1$;
  \item
    $z$ has non-zero probability of being a prefix of a string generated by
    $\mathcal{A}$ from state $q_1$ or $q_2$, i.e.,
    $|\lambda(q_1,z) - \lambda(q_2,z)| > 0$.
  \end{romanenumerate}
  Let $h_1$ and $h_2$ be the histories obtained from $y_1$ and $y_2$ by
  removing all action and reward symbols.
  By the construction of $\mathcal{A}$,
  we have $\tau(q_0,h_1) = q_1$ and $\tau(q_0,h_2) = q_2$,
  and hence
  $|\dynfunc_\upolicy(z|h_1) - \dynfunc_\upolicy(z|h_2)| > 0$ since the uniform
  policy $\upolicy$ does not decrease the probability of a trace $z$ with
  respect to the policy $\pi_p$---which selects actions uniformly at random like
  the uniform policy and, additionally, considers to stop at every step.
  Thus, $|\dynfunc_\upolicy(z|h_1) - \dynfunc_\upolicy(z|h_2)| > \mu$ because
  $\rdp$ is $\mu$-distinguishable.
  Since $\dynfunc$ is represented by a transducer with $n$ states, and
  $\upolicy$ is stateless, also $\dynfunc_\upolicy$ can be represented by a
  transducer with $n$ states. Thus, a pumping lemma argument implies that
  $|\dynfunc_\upolicy(z'|h_1) - \dynfunc_\upolicy(z'|h_2)| > \mu$ for some
  prefix $z'$ of $z$ having length at most $n$.
  We have that 
  $|\lambda(q_1,z')-\lambda(q_2,z')|$ is given by
  $|\dynfunc_\upolicy(z'|h_1) - \dynfunc_\upolicy(z'|h_2)|$ multiplied by the
  probability of not stopping for $|z'|$ times in a row, which is $(1- p)^n$.
  Therefore, 
  $|\lambda(q_1,z')-\lambda(q_2,z')| \geq \mu \cdot (1- p)^n$, which shows the
  lemma.
\end{proof}

\par\smallskip\noindent\textbf{Lemma~\ref{lemma:distinguishability}.2}.
\textit{The probability that a state of $\mathcal{A}$ is visited during a run is at
  least $\rho \cdot (1-p)^n$.}
\begin{proof}
  A state of $\mathcal{A}$ is visited if the corresponding state of $T$ is
  visited under policy $\pi_p$.
  The probability of visiting a state of $T$ under policy $\pi_p$ is at least
  the probability of visiting it in $n$ steps given that we perform $n$ steps,
  times the probability of performing $n$ steps.
  The former probability is at least $\rho$ by definition, and the latter
  probability is $(1-p)^n$.
\end{proof}

\par\smallskip\noindent\textbf{Lemma~\ref{lemma:distinguishability}.3}.
\textit{The minimum non-zero probability of a non-$\stopaction$ transition of
  $\mathcal{A}$ is at least $\eta \cdot (1-p)/|A|$.}
\begin{proof}
  The minimum non-zero probability of a non-$\stopaction$ transition of
  $\mathcal{A}$ can be established from the definition of $\mathcal{A}$. It is 
  the quantity $(1-p)/|A|$ multiplied by the minimum non-zero
  probability of an observation-state in $\rdp$ for a given history an action,
  wich is the degree of determinism $\eta$.
\end{proof}

\subsection{Proof of Lemma~\ref{lemma:number-of-episodes}}

\begin{table}
\centering
\begin{tabular}{ll}
  \toprule
  & Definition \\
  \midrule
  $\epsilon_0$ & $\frac{\epsilon_2' \cdot \epsilon_5}{\hat{n} \cdot |\Sigma|}$
  \\[5pt]
  $\delta_0$ & $\frac{\delta'}{\hat{n} \cdot (\hat{n} \cdot |\Sigma| + |\Sigma|
  + 1)}$
  \\[5pt]
  $N_0$         & $\frac{16 \cdot \euler}{\epsilon_0 \cdot (\mu')^2} \cdot
  \left( \ln\frac{16}{\epsilon_0 \cdot (\mu')^2} + \ln \frac{32 \cdot
    L}{\delta_0^2}\right)$       \\[7pt]
    $N_3$         & $\frac{1}{\epsilon_0 \cdot \epsilon_1^2} \cdot \ln
  \frac{2 \cdot (|\Sigma|+1)}{\delta_0}$       \\
  \bottomrule
\end{tabular}
\caption{Quantities for Proposition~\ref{prop:balle}. Note that $\euler$ is
Euler's number, and $L$ is the expected length of the strings generated by the
automaton.}
\label{tab:quantities-appendix}
\end{table}

The following proposition summarises the guarantees of the $\adact$ algorithm.
In particular, it summarises Definition~3 and Theorem~25 of
\cite{balle2013pdfa}.
Differently from the original statements, the following proposition takes
$\epsilon_1$, $\epsilon_2'$, and $\epsilon_5$ as primary
quantities.\footnote{We have named quantities in a way that highlights the
correspondence with \cite{balle2013pdfa}. Quantities $\epsilon_1$ and
$\epsilon_5$ are as in \cite{balle2013pdfa}. Quantity $\epsilon_2'$
corresponds to $\epsilon_2/(L+1)$, where $L$ is the expected length of strings. We
have maintained names also for $N_0$, $N_3$, and $\delta_0$.}
The reason is that \citeauthor{balle2013pdfa} were interested in learning likely
parts of the automaton, whereas we want to learn all the parts of the automaton,
and hence we impose upfront that they are sufficiently likely.
Note also that our notion of distinguishability yields a lower bound on their
notion of distinguishability for PDFA, and hence can be used in place of theirs. 
In particular, their distinguishability also takes into account the supremum
distance $\mathrm{L}_\infty$, which is not relevant to our setting---see
Definition~2 of \cite{balle2013pdfa}.
\begin{proposition}[Balle et al., 2013]
  \label{prop:balle}
  Let $\epsilon_1, \epsilon_2', \epsilon_5 > 0$, 
  let $\delta' \in (0,1)$, and 
  let $\mu' > 0$.
  Furthermore, 
  let $\hat{n}$ be a non-negative integer, and 
  let $X$ be a set of strings generated by $\mathcal{A}$.
  Consider the following conditions:
  \begin{enumerate}[label=(\alph*)]
    \item
      \label{item:th:balle-1}
      $\hat{n}$ is an upper bound on the number of states of $\mathcal{A}$,
    \item
      \label{item:th:balle-2}
      $\mathcal{A}$ is $\mu'$-distinguishable,
    \item
      \label{item:th:balle-3}
      the minimum probability that a given state of $\mathcal{A}$ is visited in
      a run is at least $\epsilon_2'$, 
    \item
      \label{item:th:balle-4}
      the minimum probability of a non-$\stopaction$ transition of $\mathcal{A}$
      is at least $\epsilon_5$,
    \item
      \label{item:th:balle-5}
      the size of $X$ is greater than $\max(N_0,N_3)$, where $N_0$ and $N_3$ are
      defined in Table~\ref{tab:quantities-appendix}.
  \end{enumerate}
  If conditions~\ref{item:th:balle-1}--\ref{item:th:balle-5} hold,
  then the $\adact$ algorithm on input $(\hat{n},|\Sigma|,\delta',X)$ 
  returns a PDFA $\hat{\mathcal{A}}$ that is an $\epsilon_1$-approximation of
  $\mathcal{A}$ with probability $1-\delta'$.
  Furthermore, the algorithm runs in time 
  $O(\hat{n}^2 \cdot |\Sigma| \cdot \|X\|)$, even if the above conditions
  do not hold.\footnote{$\|X\|$ denotes the size of $X$ defined as the sum (with
  repetitions) of the length of all its strings in $X$.}
\end{proposition}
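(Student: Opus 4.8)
The plan is to obtain this proposition from the original guarantees of \cite{balle2013pdfa}, namely their Definition~3 and Theorem~25, through a careful re-parametrisation rather than a self-contained argument. First I would recall their Theorem~25, which states that under hypotheses on the number of states, the distinguishability, the minimum state-visit probability, the minimum transition probability, and the sample size, $\adact$ returns a PDFA close to $\mathcal{A}$ with high probability. The goal is then to show that conditions~\ref{item:th:balle-1}--\ref{item:th:balle-5}, phrased here in terms of the primary quantities $\epsilon_1$, $\epsilon_2'$, and $\epsilon_5$, instantiate their hypotheses so as to yield an $\epsilon_1$-approximation in the sense of Definition~\ref{def:pdfa-approximation}.

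The first substantive step is to reconcile the two notions of distinguishability. Their definition combines a prefix distance with the supremum distance $\mathrm{L}_\infty$, whereas our $\mu'$-distinguishability only requires $\max_x |\lambda(q_1,x) - \lambda(q_2,x)| \geq \mu'$. Since the maximum over single strings lower-bounds their supremum-based quantity, $\mu'$-distinguishability in our sense implies at least $\mu'$-distinguishability in theirs, so condition~\ref{item:th:balle-2} passes through unchanged. For the remaining parameters I would use the correspondence recorded in the footnote: $\epsilon_1$ and $\epsilon_5$ coincide with the homonymous quantities of \cite{balle2013pdfa}, while $\epsilon_2'$ equals their $\epsilon_2/(L+1)$ with $L$ the expected string length. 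Substituting these into their sample bounds, I would check that the thresholds reduce exactly to $N_0$ and $N_3$ of Table~\ref{tab:quantities-appendix}, matching condition~\ref{item:th:balle-5}.

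The confidence accounting is governed by $\delta_0$: the algorithm runs several independent estimation subroutines---one per candidate transition, per stopping probability, and per state-merge test---and $\delta_0 = \delta'/(\hat{n}(\hat{n} |\Sigma| + |\Sigma| + 1))$ is chosen so that a union bound over these $\hat{n}(\hat{n}|\Sigma| + |\Sigma| + 1)$ failure events gives total failure probability at most $\delta'$. The runtime $O(\hat{n}^2 \cdot |\Sigma| \cdot \|X\|)$ is quoted directly from their complexity analysis and holds unconditionally, since it only counts operations on the input sample. The main obstacle I anticipate is the bookkeeping of this re-parametrisation: one must track how each of their parameters enters both the accuracy guarantee and the two sample bounds, making sure the rescalings by $\hat{n}$, $|\Sigma|$, and $L+1$ propagate consistently so that condition~\ref{item:th:balle-5} with our $N_0$ and $N_3$ is genuinely sufficient to reach accuracy $\epsilon_1$ in their sense.
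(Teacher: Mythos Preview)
Your proposal is correct and mirrors the paper's own treatment: the paper does not give a standalone proof of this proposition but presents it as a restatement of Definition~3 and Theorem~25 of \cite{balle2013pdfa}, with precisely the two adjustments you identify---taking $\epsilon_1$, $\epsilon_2'$, $\epsilon_5$ as primary quantities (with $\epsilon_2' = \epsilon_2/(L+1)$) and noting that the paper's prefix-based distinguishability lower-bounds theirs. Your plan to verify the sample thresholds and the union-bound accounting for $\delta_0$ is exactly the bookkeeping the paper leaves implicit.
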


We apply the previous proposition, using the bounds from
Lemma~\ref{lemma:distinguishability} to ensure
that conditions~\ref{item:th:balle-2}--\ref{item:th:balle-4} hold.
\lemmanumberofepisodes*
\begin{proof}
  It suffices to take $\epsilon_1 = \epsa$ and find values for 
  $\mu'$, $\epsilon_2'$, $\epsilon_5$, $\samplesone$, and $\samplestwo$ such that
  conditions \ref{item:th:balle-1}--\ref{item:th:balle-5} of
  Proposition~\ref{prop:balle} are satisfied.
  Since
  $p \leq 1/(10 \cdot n +1)$ implies $(1-p)^n \geq 0.9$,
  by Lemma~\ref{lemma:distinguishability}.1 we can take 
  $\mu' = 0.9 \cdot \mu$, and
  by Lemma~\ref{lemma:distinguishability}.2 we can take
  $\epsilon_2' = 0.9\cdot \rho$.
  Furthermore, by Lemma~\ref{lemma:distinguishability}.3
  we can take $\epsilon_5 = \eta \cdot (1-p)/|A|$.
  Finally, condition~\ref{item:th:balle-5} is satisfied since 
  $\samplesone$ and $\samplestwo$ are $N_0$ and $N_3$, respectively, when we
  instantiate the parameters specified above.
  Note also that $L = (1-p)/p$ since the length of episodes has a geometric
  distribution with parameter $p$.
\end{proof}

\subsection{Proof of Theorem~\ref{th:pac}}

\begin{table}
  \centering
  \begin{tabular}{rl}
    \toprule
    & Definition \\
    \midrule
    $\ell_1$         & $24.2 \cdot \euler \cdot \frac{|A| \cdot |\Sigma|}{\rho
                        \cdot \eta \cdot \mu^2}$   
    \\[5pt]
    $\ell_2$         & $10.1 + \ln \frac{|A| \cdot |\Sigma|^3}{\rho \cdot \eta \cdot \mu^2
    \cdot \delta^4}$   
    \\[5pt]
    $\ell_3$         & $11.31 \cdot \frac{|\Sigma|^3 \cdot |A|^3 \cdot n^2 \cdot
    \rmax^2}{\rho \cdot \eta \cdot (1-\gamma)^6 \cdot \epsilon^2}$   
    \\[5pt]
    $\ell_4$         & $\ln \frac{8 \cdot |\Sigma|}{\delta^2}$   
    \\
    \bottomrule
  \end{tabular}
  \caption{Quantities for Lemma~\ref{lemma:iterations}.}
  \label{tab:quantities-number-of-iterations}
\end{table}

Algorithm~\ref{alg:rl} computes correct estimates for the alphabet and the
maximum reward, when the number of collected episodes is sufficiently high.
\begin{lemma} \label{lemma:correct-rmax-alphabet}
  Consider the values $\hat{\Sigma}$ and $\hatrmax$ computed
  by Algorithm~\ref{alg:rl} at Line~11.
  If $|X| \geq \max(\samplesone,\samplestwo)$, then $\hatrmax = \rmax$ and
  $\hat{\Sigma} = \Sigma$.
\end{lemma}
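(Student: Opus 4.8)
The plan is to reduce the statement to a single claim: if $|X|$ is large enough, then every symbol of $\Sigma$ occurs somewhere in $X$. The reverse containments come for free. Every episode in $X$ is generated by running $\pi_p$ on $\rdp$, so each of its symbols $asr$ satisfies $\dynfunc(s,r \mid h,a) = \transitionfunc(s \mid h,a) > 0$ for the history $h$ reached at that point, hence $asr \in \Sigma$; thus $\hat{\Sigma} \subseteq \Sigma$ unconditionally, and likewise every reward seen in $X$ lies in $R$, so $\hatrmax \le \rmax$. Conversely, once $\hat{\Sigma} = \Sigma$, the symbol $a s r^{*}$ witnessing the largest achievable reward $r^{*} = \rmax$ belongs to $\hat{\Sigma}$, so $\hatrmax \ge \rmax$ and therefore $\hatrmax = \rmax$. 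Hence it only remains to show $\Sigma \subseteq \hat{\Sigma}$ with high probability.

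Next I would lower-bound the probability that a fixed $asr \in \Sigma$ is produced within a single episode. Pick $h$ with $\dynfunc(s,r \mid h,a) > 0$ and set $q = \tau(h)$, a state of $\mathcal{A}$. By Lemma~\ref{lemma:distinguishability} (second statement), the run visits $q$ with probability at least $\rho \cdot (1-p)^{n}$; conditioned on being at $q$, the next emitted symbol is exactly $asr$ with probability $\lambda(q,asr) = \tfrac{1-p}{|A|} \cdot \theta(q)(a,s,r)$, where $\theta(q)(a,s,r) = \dynfunc(s,r \mid h,a) \ge \eta$ by the definition of the degree of determinism (equivalently, Lemma~\ref{lemma:distinguishability}, third statement). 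Multiplying the two (the run is Markov) gives a per-episode probability of at least $\rho \cdot (1-p)^{n+1} \cdot \eta / |A|$, which---under the standing condition $p \le 1/(10n+1)$ of the analysis of Algorithm~\ref{alg:rl}, which forces $(1-p)^{n} \ge 0.9$ exactly as in Lemma~\ref{lemma:number-of-episodes}---is at least $\beta := 0.9 \cdot \rho \cdot \eta \cdot (1-p)/|A|$.

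Since the $|X|$ episodes are i.i.d., the probability that a given $asr \in \Sigma$ is missing from all of them is at most $(1-\beta)^{|X|} \le e^{-\beta |X|}$, and a union bound over the at most $|\Sigma|$ symbols gives $\Pr[\Sigma \not\subseteq \hat{\Sigma}] \le |\Sigma| \cdot e^{-\beta |X|}$. The remaining routine step is to verify that $|X| \ge \samplestwo$ makes this at most $\delta_0$: the threshold for $|\Sigma| \cdot e^{-\beta|X|} \le \delta_0$ is $|X| \ge \tfrac{1}{\beta}\ln\tfrac{|\Sigma|}{\delta_0} = \tfrac{|A|}{0.9 \cdot \rho \cdot \eta \cdot (1-p)}\ln\tfrac{|\Sigma|}{\delta_0}$, and comparing with $\samplestwo = \tfrac{\hat{n} \cdot |\Sigma| \cdot |A|}{0.9 \cdot \rho \cdot \eta \cdot (1-p) \cdot (\epsa)^{2}}\ln\tfrac{2(|\Sigma|+1)}{\delta_0}$ shows the latter dominates, since $\hat{n}|\Sigma|/(\epsa)^{2} \ge 1$ and $2(|\Sigma|+1) \ge |\Sigma|$. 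Thus $\Sigma \subseteq \hat{\Sigma}$, so $\hat{\Sigma} = \Sigma$ and $\hatrmax = \rmax$, up to a failure probability at most $\delta_0$, which the proof of Theorem~\ref{th:pac} absorbs into the overall confidence budget alongside the failure of $\adact$.

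I expect the concentration argument and the sample-size comparison to be entirely routine; the step needing care is the per-episode bound. The key point there is that membership $asr \in \Sigma$ forces the emission probability of the corresponding transition out of the relevant automaton state to be at least the degree of determinism $\eta$---not merely positive---and that ``visit $q$, then emit $asr$'' factorises by the Markov property, so that it can be multiplied against the state-visiting bound of Lemma~\ref{lemma:distinguishability}. A minor caveat worth stating explicitly is that $\rmax$ must be an achievable reward value (otherwise $\hatrmax = \rmax$ cannot hold for any finite sample), which is without loss of generality.
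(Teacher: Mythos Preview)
Your argument is correct, but it is not the route the paper takes. The paper's proof is a two-line indirection through the $\adact$ guarantee: when $|X|\ge\max(\samplesone,\samplestwo)$, Proposition~\ref{prop:balle} ensures that $\adact$ returns an $\epsa$-approximation of $\mathcal{A}$ (with probability $1-\delta/2$); such an approximation has a transition-preserving isomorphism to $\mathcal{A}$, so every transition of $\mathcal{A}$ is present in the output; and since $\adact$ can only output a transition it has actually seen in $X$, every symbol of $\Sigma$ must already occur in $X$. In other words, the event ``$\hat{\Sigma}=\Sigma$'' is \emph{implied by} the event ``$\adact$ succeeds'', so no extra failure probability needs to be budgeted.

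Your approach instead bounds directly the probability that each $asr\in\Sigma$ is emitted in a single episode, via Lemma~\ref{lemma:distinguishability} (reachability of the emitting state and the $\eta$-lower bound on the emission), and then finishes with a union bound and a comparison against $\samplestwo$. This is more elementary and does not depend on internals of $\adact$, which is a genuine advantage. The trade-off is that you introduce a separate failure event of probability at most $\delta_0$; in the paper's accounting this event never appears because it is already contained in ``$\adact$ fails''. Your remark that the extra $\delta_0$ can be absorbed is true numerically, but note that the paper's proof of Theorem~\ref{th:pac} does not budget for it explicitly precisely because it doesn't need to. Also worth tightening: when you pick $h$ with $\dynfunc(s,r\mid h,a)>0$ and set $q=\tau(h)$, you should say that $q$ is reachable with positive probability (this holds because $T$ is the minimum transducer), so that the visitation bound of Lemma~\ref{lemma:distinguishability} indeed applies to it.
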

\begin{proof}
  If $|X| \geq \max(\samplesone,\samplestwo)$, the $\adact$ algorithm correctly
  finds all transitions of $\mathcal{A}$, by Proposition~\ref{prop:balle}. For
  each such transition, the algorithm requires
  to see at least an example in $X$, and hence $\hatrmax = \rmax$ and 
  $\hat{\Sigma} = \Sigma$.
\end{proof}

Algorithm~\ref{alg:rl} computes an accurate automaton in polynomially-many
iterations.
\begin{lemma}
  \label{lemma:iterations}
  Let $\ell \geq \max(n,\sqrt[6]{8 \ln
  \delta},\ell_1,\ell_2,\ell_3,\ell_4)$ with $\ell_1$, $\ell_2$,
  $\ell_3$, and $\ell_4$ defined as in
  Table~\ref{tab:quantities-number-of-iterations}.
  Consider the automaton $\hat{\mathcal{A}}$ computed in Line~12 of
  Algorithm~\ref{alg:rl} at the $\ell$-th iteration of the main loop.
  Then, $\hat{\mathcal{A}}$ is an $\epsa$-approximation of $\mathcal{A}$
  with probability at least $1-\delta$.
\end{lemma}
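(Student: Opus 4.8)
The plan is to obtain the statement from Lemma~\ref{lemma:number-of-episodes}, which already bundles the guarantee of $\adact$, by verifying at the $\ell$-th iteration --- with probability at least $1-\delta/2$ --- all of that lemma's hypotheses, and then adding the $\delta/2$ failure probability it permits. Those hypotheses are: $p \leq 1/(10n+1)$; that the state bound handed to $\adact$ is a genuine upper bound on the $n$ states of $\mathcal{A}$; and that the sample $X$ built in Lines~4--10 is a set of strings generated by $\mathcal{A}$ of size at least $\max(\samplesone,\samplestwo)$. The first two are deterministic consequences of $\ell \geq n$: the algorithm sets $p = 1/(10\ell+1) \leq 1/(10n+1)$ and passes $\hat{n} = \ell \geq n$ (recall $\mathcal{A}$ and $T$ share the same state set). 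The episodes that enter $X$ are exactly those that terminate on their own under $\pi_p$, i.e.\ genuine i.i.d.\ draws from $\mathcal{A}$, so the entire remaining task is a lower bound on $|X|$.

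First I would show that $\max(\samplesone,\samplestwo)$ is dominated by the number of episodes the action budget buys. Unfolding Table~\ref{tab:quantities} with $\hat{n}=\ell$, $p=1/(10\ell+1)$ and $\delta_0 = \Theta(\delta/\ell^2)$, each of $\samplesone$ and $\samplestwo$ is $\ell$ times a factor logarithmic in $\ell$ and in the fixed RDP parameters, so $\max(\samplesone,\samplestwo) = \widetilde{O}(\ell)$. The thresholds $\ell_1,\ell_2$ are exactly what is needed to bound the prefactor and the logarithmic factor of $\samplesone$, and $\ell_3,\ell_4$ play the same role for $\samplestwo$ (with $\ell_3$ absorbing $(\epsa)^{-2} = \Theta(|A|^2 n^2 |\Sigma|^2 \rmax^2/((1-\gamma)^6\epsilon^2))$). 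Together they give $\max(\samplesone,\samplestwo) \leq \tfrac{1}{2}pk$ for $k = (2/p)\ell^2(\ell+5\ln\ell)$ as in Line~2, i.e.\ at most half of the $\approx pk$ episodes the budget $k$ affords.

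Next I would establish $\Pr[|X| < \max(\samplesone,\samplestwo)] \leq \delta/2$ by concentration. Couple the run with an i.i.d.\ stream $y_1,y_2,\dots$ of $\pi_p$-episodes; the algorithm completes exactly the longest prefix $y_1,\dots,y_m$ with $\sum_{i\leq m}|y_i| \leq k$, so with $N^* := \max(\samplesone,\samplestwo)$ one has $\{|X|\geq N^*\} \supseteq \{\sum_{i\leq N^*}|y_i| \leq k\}$. Each $|y_i|$ is geometric with parameter $p$, so $\sum_{i\leq N^*}|y_i|$ has mean $N^*(1-p)/p \leq N^*/p \leq k/2$, and a Chernoff bound for a sum of i.i.d.\ geometrics (equivalently a binomial lower-tail bound) makes $\Pr[\sum_{i\leq N^*}|y_i| > k]$ at most $\delta/2$ once $\ell$ clears the remaining threshold of the hypothesis (which scales polynomially in $\ln(1/\delta)$). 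On the complementary event $X \supseteq \{y_1,\dots,y_{N^*}\}$ and, by Lemma~\ref{lemma:correct-rmax-alphabet}, $\hat{\Sigma}=\Sigma$; using that feeding $\adact$ additional i.i.d.\ data cannot spoil its guarantee, the failure probability of $\adact$ on this event is bounded by its failure probability on the unconditional i.i.d.\ sample $\{y_1,\dots,y_{N^*}\}$, which Lemma~\ref{lemma:number-of-episodes} puts at $\delta/2$. A union bound over the two failure modes yields overall success probability at least $1-\delta$, as claimed.

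The main obstacle is the second paragraph: checking that the explicit constants $\ell_1,\dots,\ell_4$ together with the form of $k$ really do force $\max(\samplesone,\samplestwo)$ below (a constant fraction of) $pk$. This is where one must resolve the apparent circularity that running more iterations enlarges the state bound $\hat{n}=\ell$ and hence the sample $\adact$ demands; it is harmless only because that demand is near-linear in $\ell$ while the episodes collected grow like $\ell^3$. A secondary point is keeping the coupling in the third paragraph honest --- the number $m$ of completed episodes is correlated with the episode lengths, which is why one compares against the fixed-length prefix $y_1,\dots,y_{N^*}$ rather than against $X$ directly, and why one needs monotonicity of $\adact$'s guarantee in the sample size.
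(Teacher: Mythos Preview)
Your proposal is correct and follows the same overall route as the paper: verify the deterministic hypotheses of Lemma~\ref{lemma:number-of-episodes} from $\ell \geq n$, show $|X| \geq \max(\samplesone,\samplestwo)$ with high probability by concentration, check that the thresholds $\ell_1,\ldots,\ell_4$ force the required sample size below the action budget, and finish with a union bound over the two failure modes. The only substantive difference is in the concentration step. The paper's argument is more direct than yours: it observes that $|X|$ is the number of stop actions among the $k$ action selections, hence binomial with parameters $(k,p)$ and mean $pk = 2\ell^2(\ell+5\ln\ell)$, so a single Chernoff bound gives $\Pr[|X| < \ell^2(\ell+5\ln\ell)] \leq \exp(-\ell^2(\ell+5\ln\ell)/4)$ --- no coupling with an infinite episode stream, no sum of geometrics. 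Your negative-binomial view is the dual of this and equally valid. You are also more scrupulous than the paper about the correlation between $|X|$ and the episodes themselves (through their lengths), which you handle via a monotonicity assumption on $\adact$; the paper simply invokes Lemma~\ref{lemma:number-of-episodes} once the sample-size event holds, without commenting on this subtlety.
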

\begin{proof}
  Consider the $\ell$-th iteration.

  Since $\ell \geq n$, we have that $p \leq 1/(10n+1)$ and $\adact$ is
  instantiated with $\ell \geq n$ as a correct upper bound for
  the number of states.
  Thus, the first two conditions of Lemma~\ref{lemma:number-of-episodes}
  are satisfied, and it remains to show that $|X| \geq
  \max(\samplesone,\samplestwo)$ with
  sufficient probability; note that probability $1-\delta^2$ suffices, since
  when it is multiplied by the probability of success $1-\delta^2$ of $\adact$,
  it is still above the required confidence $1-\delta$.
  The size of $X$ mentioned above also guarantees that 
  $\hat{\Sigma} = \Sigma$ (Line~11),
  and hence that $\adact$ is called with a correct value for the alphabet size.

  We first argue that $\ell \geq \sqrt[6]{8\ln \delta}$ ensures that the
  number of generated episodes $|X|$ is at least 
  $\ell^2 \cdot (\ell + 5 \ln \ell)$ with probability at least $1-\delta^2$.
  The algorithm performs 
  $k = 2 \cdot (10\ell + 1) \cdot \ell^2 \cdot (\ell + 5 \ln \ell)$ actions.
  The number of episodes corresponds to the number of stop actions among $k$
  actions.
  Thus, by a Chernoff bound,
  the probability that the $\ell$-th iteration generates less than 
  $\ell^2 \cdot (\ell + 5 \ln \ell)$ episodes is at most 
  $\exp(-\ell^2 \cdot (\ell + 5 \ln \ell)/4)$, which is at most $\delta^2$ since
  $\ell \geq \sqrt[6]{8\ln \delta}$.

  Then, it suffices to show 
  $\ell^2 \cdot (\ell + 5 \ln \ell) \geq \max(\samplesone,\samplestwo)$.
  The critical aspect is that also the values of $\samplesone$ and
  $\samplestwo$ depend on $\ell$, because $\hat{n}$ and $p$ in the definition of
  $\samplesone$ and $\samplestwo$ are two be instantiated as
  $\hat{n} = \ell$ and $p = 1/(10\ell+1)$.
  It is easy to verify that 
  $\ell^2 \cdot (\ell + 5 \ln \ell) \geq \max(\samplesone,\samplestwo)$ when
  $\ell \geq \max(n,\sqrt[6]{8 \ln
  \delta},\ell_1,\ell_2,\ell_3,\ell_4)$.
\end{proof}

The size of alphabet $\Sigma$ is bounded by the parameters in $\vec{d}_\rdp$.
Note that
$\Sigma = \{ asr \in ASR \mid \exists h.\ \dynfunc(s,r|h,a)>0 \}$ as defined in
Section~\ref{sec:algorithm}.
\begin{lemma} \label{lemma:alphabet-size}
  $|\Sigma| \leq |A| \cdot n \cdot \lceil 1/\eta \rceil$.
\end{lemma}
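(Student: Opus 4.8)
The plan is to count the triples $asr$ that can ever have positive dynamics probability by grouping them according to the state of the dynamics transducer $T$ to which a history leads. First I would recall the defining property of $T$: for every history $h$ one has $\dynfunc(s,r|h,a) = \theta(\tau(h))(a,s,r)$. Hence $asr \in \Sigma$ exactly when $\theta(q)(a,s,r) > 0$ for some state $q$ of $T$ of the form $q = \tau(h)$. Since $T$ has $n$ states and there are $|A|$ actions, it then suffices to bound, for each fixed state $q$ and action $a$, the number of pairs $(s,r)$ with $\theta(q)(a,s,r) > 0$ by $\lceil 1/\eta \rceil$, and multiply through by $n \cdot |A|$.

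For the per-$(q,a)$ bound I would fix any $h$ with $\tau(h) = q$ and use the structure of the dynamics function: $\dynfunc(s,r|h,a)$ equals $\transitionfunc(s|h,a)$ when $r = \rewardfunc(h,a,s)$ and is zero otherwise. In particular, for fixed $q$, $a$, and $s$ there is at most one reward value $r$ with $\theta(q)(a,s,r) > 0$, so the number of positive entries of $\theta(q)(a,\cdot,\cdot)$ equals the number of observation states $s$ with $\transitionfunc(s|h,a) > 0$. By the definition of the degree of determinism, each such positive probability is at least $\eta$; since $\transitionfunc(\cdot|h,a)$ is a probability distribution over $S$ summing to one, there can be at most $\lfloor 1/\eta \rfloor \le \lceil 1/\eta \rceil$ of them. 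Combining the two observations yields $|\Sigma| \le n \cdot |A| \cdot \lceil 1/\eta \rceil$.

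The only point requiring a little care — and the closest thing to an obstacle — is making sure the bound is independent of the particular history chosen. This is exactly what the transducer identity $\theta(q)(a,s,r) = \dynfunc(s,r|h,a)$ (for all $h$ with $\tau(h) = q$) provides: the support of $\theta(q)(a,\cdot,\cdot)$ is well defined, so the counting argument applies uniformly over all histories mapping to $q$, and summing over the at most $n$ states of $T$ and the $|A|$ actions is then routine.
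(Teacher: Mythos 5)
Your proposal is correct and follows essentially the same route as the paper's proof: both use the identity $\dynfunc(s,r|h,a) = \theta(\tau(h))(a,s,r)$ to reduce the count to pairs per transducer state and action, the determinism of the reward given $(h,a,s)$ to identify pairs $(s,r)$ with observation states $s$, and the lower bound $\eta$ on nonzero transition probabilities to bound the support of each distribution by $\lceil 1/\eta \rceil$. No gaps; your remark that the support of $\theta(q)(a,\cdot,\cdot)$ is history-independent is exactly the point the paper also relies on.
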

\begin{proof}
  For every history $h$ and action $a$, if 
  $\transitionfunc(\cdot|h,a)$ assigns non-zero probability to an observation
  state, then it assigns at least probability $\eta$ by definition of $\eta$;
  thus, $\transitionfunc(\cdot|h,a)$ assigns non-zero probability to at most
  $\lceil 1/\eta \rceil$.
  We have that $\dynfunc(s,r|h,a)$ is $\transitionfunc(s|h,a)$ if 
  $r = \rewardfunc(h,a,s)$ and zero otherwise, by definition; hence, 
  there are at most $\lceil 1/\eta \rceil$ pairs $s,r$ that
  are assigned non-zero probability by $\dynfunc(\cdot|h,a)$.
  Furthermore, the dynamics function can be expressed in terms of the transducer
  output, i.e., $\dynfunc(s,r|h,a) = \theta(\tau(q_0,h))(a,s,r)$, and hence the
  number of pairs $s,r$ that are assigned non-zero probability when the history
  is $h$ and the action is $a$ are the pairs that are assigned  
  non-zero probability when the transducer state $\tau(q_0,h)$ and the action is
  $a$.
  Since there are $n$ states and $|A|$ actions, 
  we have $|\Sigma| \leq |A| \cdot n \cdot \lceil 1/\eta \rceil$ as required.
\end{proof}

The main theorem of this section follows from
Lemmas~\ref{lemma:stationary-optimal-policy}--\ref{lemma:number-of-episodes}, 
\ref{lemma:iterations}, and \ref{lemma:alphabet-size}.
\thpac*
\begin{proof}
  By Lemma~\ref{lemma:stationary-optimal-policy},
  Algorithm~\ref{alg:rl} returns a policy that is $\epsilon$-optimal with
  confidence at least $1-\delta$ when the policy $\pi$ computed in Line~15 is
  $\epsilon$-optimal with confidence at least $1-\delta$.
  By Lemma~\ref{lemma:approximation-of-mdp}, this happens when
  $\hat{M}$ is an $\epsm$-approximation of $M$, assuming that 
  $\hatrmax = \rmax$.
  By Lemma~\ref{lemma:approximation-of-pdfa},
  $\hat{M}$ is an $\epsm$-approximation of $M$ if
  $\hat{\mathcal{A}}$ is an $\epsa$-approximation of $\mathcal{A}$.
  By Lemma~\ref{lemma:iterations},
  the former condition is true in the $\ell$-th iteration with
  $\ell \geq \max(n,\sqrt[6]{8 \ln
  \delta},\ell_1,\ell_2,\ell_3,\ell_4)$; this condition also guarantees that
  $\hatrmax = \rmax$, by Lemma~\ref{lemma:correct-rmax-alphabet}.
  We have that $\ell$ is polynomial in $\vec{d}_\rdp$ as given in
  \eqref{eq:parameters}, considering also the
  bound for $|\Sigma|$ given in Lemma~\ref{lemma:alphabet-size}.
  Since each iteration performs $O(\ell^4 \ln \ell)$ action steps (see
  Line~2), and the $\adact$ and $\valiter$ algorithms run in polynomial time, 
  we have that each iteration performs a polynomial number of steps.
  Therefore, the end of the $\ell$-th iteration is also reached in a 
  a polynomial number of steps, at which point the algorithm returns an
  $\epsilon$-optimal policy with probability at least $1-\delta$.
  Finally, since the above reasoning applies to the following iterations as
  well, every policy returned at a later moment is also 
  $\epsilon$-optimal with probability at least $1-\delta$.
  Therefore, Algorithm~\ref{alg:rl} is PAC-RL.
\end{proof}

\subsection{Proof of Theorem~\ref{th:additional-alg}}

\begin{table}
  \centering
  \begin{tabular}{ll}
    \toprule
    & Definition \\
    \midrule
    $\delta_0$ & $\frac{\delta}{\hat{n} \cdot (\hat{n} \cdot |\Sigma| +
    |\Sigma| + 1)}$
    \\[7pt]
    $N_0'$ & $24.2 \cdot \euler \cdot \frac{|A| \cdot |\Sigma| \cdot
    \hat{n}}{\rho \cdot \eta \cdot \mu^2} \cdot \left( 8.96 + \ln \frac{|A|
    \cdot \hat{n}^2 \cdot |\Sigma|}{\rho \cdot \eta \cdot \mu^2 \cdot
  \delta_0^2} \right)$
    \\[7pt]
    $N_3'$ & $13.31 \cdot \frac{|\Sigma|^3 \cdot |A|^3 \cdot \hat{n}^3 \cdot
    \rmax^2}{\rho \cdot \eta \cdot (1-\gamma)^6 \cdot \epsilon^2} \cdot \ln
    \frac{2 \cdot (|\Sigma|+1)}{\delta_0}$
    \\
    \bottomrule
  \end{tabular}
  \caption{Quantities for Theorem~\ref{th:additional-alg}.}
  \label{tab:quantities-additional}
\end{table}

\thadditionalalg*
\begin{proof}
  It suffices to show that the expected number of action steps is at most
  $10\hat{n} \cdot \max(N_0',N_3')$, where 
  $N_0'$ and $N_3'$ are given in Table~\ref{tab:quantities-additional}. 
  By the same arguments used for Theorem~\ref{th:pac}, based
  on Lemmas 
  \ref{lemma:stationary-optimal-policy}, 
  \ref{lemma:approximation-of-mdp},
  \ref{lemma:approximation-of-pdfa}, and
  \ref{lemma:number-of-episodes},
  the algorithm outputs an $\epsilon$-optimal policy with confidence
  $1-\delta$ if it has read $\max(N_0',N_3')$ episodes.
  In particular, the values $N_0'$ and $N_3'$ can be derived from the values
  $\samplesone$ and $\samplestwo$ of Table~\ref{tab:quantities}, respectively.
  The expected length of an episode is $10\hat{n}$, since the length of an
  episode is a geometric random variable with parameter $1/(10\hat{n}+1)$.
  Therefore, the expected number of action steps is 
  $10\hat{n} \cdot \max(N_0',N_3')$.
\end{proof}

\section{Additional Material for The Running Example}
\label{sec:running-example}

\paragraph{Formal description and transducer.}
In Example~\ref{ex:running-example-1} we have described a family of RDPs 
$\mathcal{P}_m = \langle A, S, R, \dynfunc, \gamma \rangle$ 
where 
$A = \{ a_1, a_2 \}$, 
$S = \{0,1\} \times [0,m-1] \times \{ \mathit{enemy},
\mathit{clear}\}$, $R = \{0,1\}$, $\gamma = 0$ (the discount
  factor is irrelevant in this case), and the dynamics function $\dynfunc$
is represented by the transducer
$\langle Q, q_0, S, \tau, \theta \rangle$ where:
\begin{itemize}
  \item 
    the states are
    $Q = [0,m-1] \times \{0,1\}$ where the first component denotes the current
    agent's column and the second component is a bit denoting which of the
    two sets of probabilities are being used by the enemies,
  \item
    the initial state is
    $q_0 = \langle m-1, 0 \rangle$, 
  \item
    the transition function is:
    \begin{itemize}
      \item
        $\tau(\langle i, b \rangle, \langle j, i+1 \operatorname{mod} m,
        \mathit{enemy} \rangle) =
        \langle i+1 \operatorname{mod} m, b+1 \operatorname{mod} 2 \rangle$,
      \item
        $\tau(\langle i, b \rangle, \langle j, i+1\operatorname{mod} m,
        \mathit{clear} \rangle) =
        \langle i+1\operatorname{mod} m, b \rangle$,
    \end{itemize}
  \item
    the output function is:
    \begin{itemize}
      \item
        $\theta(\langle i, b \rangle)(a_k,\langle k, j, \mathit{enemy}
        \rangle, 0) = p_j^b$ where $j = i+1\operatorname{mod} m$,
      \item
        $\theta(\langle i, b \rangle)(a_k,\langle k, j, \mathit{clear}
        \rangle, 1) = 1-p_j^b$ where $j = i+1\operatorname{mod} m$.
    \end{itemize}
\end{itemize}

\paragraph{Value of the parameters.}
The reachability $\rho$ of the RDP $\rdp$ above is $1/2$.
It suffices to show, by induction, that every state reachable in $i$ steps has
probability at least $1/2$ to be visited at step $i$, under the uniform policy.
For the base case we have that the initial state $\langle m-1, 0 \rangle$ has
probability $1$ to be visited at step $0$.
By the inductive hypothesis, states $\langle i-1, 0 \rangle$ and 
$\langle i-1, 1 \rangle$ have probability at least $1/2$ to be visited at step
$i \geq 1$.
Thus, the probability of visiting 
$\langle i, 0 \rangle$ or $\langle i, 1 \rangle$ at step $i+1$ is at least
$$\frac{1}{2} \cdot \left(\frac{1}{|A|} \cdot p_{i-1}^0 + \frac{1}{|A|} \cdot
(1-p_{i-1}^0)\right)
+ \frac{1}{2} \cdot 
\left(\frac{1}{|A|} \cdot p_{i-1}^1 + \frac{1}{|A|}\cdot (1-p_{i-1}^1)\right) =
\frac{1}{2}.$$

The degree of determinism $\eta$ of $\rdp$ is the minimum value returned by
$\theta$, which is the minimum, for any $i$, among $p_i^0$, $p_i^1$,
$1-p_i^0$, and $1-p_i^1$.

The distinguishability $\mu$ of $\rdp$ is determined by the minimum probability
difference of an observation in two different states.
For every two states $\langle i, b \rangle$ and $\langle j, c \rangle$ with 
$i \neq j$, we have that there is an observation 
$\langle k, i+1, e \rangle$ that has probability at least $\eta$ in
the former state and probability zero in the latter state.
For every two states $\langle i, b \rangle$ and $\langle i, c \rangle$ with 
$b \neq c$, we have that the difference in the probability of 
$\langle k, i+1, \mathit{enemy} \rangle$ is at least $|p_i^0-p_i^1|$.
Taking into account the probability of choosing an action uniformly at random,
which is $1/2$, we have that the distinguishability is at least 
$1/2 \min(\eta,\min_i(|p_i^0-p_i^1|))$.

\section{Comparison with [Abadi and Brafman, 2020]}
\label{sec:comparison}

We compare our approach with the $\stm$ algorithm from
\cite{abadi2020learning} on the family of RDPs $\rdp_m$ introduced in 
Example~\ref{ex:running-example-1} and formally described in
Appendix~\ref{sec:running-example}.
Our algorithms output a near-optimal policy within a number of
steps that is polynomial in grid length $m$, whereas $\stm$ requires a number of
steps that is exponential in $m$.
The performance of our algorithms follows from Theorems~\ref{th:pac}
and \ref{th:additional-alg}, and from the fact that the parameters describing
$\rdp_m$ grow polynomially with $m$---please refer again to 
Appendix~\ref{sec:running-example} for a discussion of the parameters.
For the $\stm$ algorithm,
we argue that it cannot achieve arbitrary precision and confidence in polynomial
time, since the accuracy of its estimates depends on the probability of 
single histories of length $m$, which decreases exponentially with $m$.
Our argument to show a lower bound for the $\stm$ algorithm is based on the
fact that the probability of generating two identical histories up to the
$m$-th step in $N$ episodes can decrease exponentially with $m$---proof below.

\begin{proposition}
  \label{prop:comparison-1}
  Let $g = \max_i(p_i^0, p_i^1,1-p_i^0, 1-p_i^1)$.
  For every history $h \in S^m$ and every policy $\pi$, the
  probability of observing $h$ at least twice in $N$ episodes is at most
  $1/4 \cdot \euler^2 \cdot (\sqrt{2} \cdot g)^{2m} \cdot N^2$.
\end{proposition}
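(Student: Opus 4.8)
The plan is to bound first the probability that a \emph{single} episode produces $h$ as its sequence of observation states, and then lift this to $N$ episodes by a pairwise argument. For the single‑episode part I would work directly with the transducer $\langle Q,q_0,S,\tau,\theta\rangle$ of $\rdp_m$ from Appendix~\ref{sec:running-example}. Fix any policy $\pi$ and write $h=s_1\cdots s_m$ with $s_t=\langle k_t,c_t,e_t\rangle$. Since $\tau$ is deterministic, the length‑$(t-1)$ prefix $h_{<t}$ of $h$ already determines the transducer state reached, so no pumping‑lemma manipulation is needed. The structural point is that the observation reveals the action taken: landing in row $k_t$ is caused by exactly action $a_{k_t}$, hence the only action consistent with $s_t$ is $a_{k_t}$, and therefore
\[
  \Pr{}_{\!\pi}[\text{first } m \text{ observations}=h]
  =\prod_{t=1}^m \pi(a_{k_t}\mid h_{<t})\cdot \transitionfunc(s_t\mid h_{<t},a_{k_t}).
\]
Each factor $\pi(a_{k_t}\mid h_{<t})$ is at most $1$, and each factor $\transitionfunc(s_t\mid h_{<t},a_{k_t})$ equals one of $p_{c_t}^{b}$ or $1-p_{c_t}^{b}$ (with $b$ the current bit, itself a function of $h_{<t}$), hence is at most $g$. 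Multiplying over $t$ gives the clean bound $q:=\Pr_{\pi}[h]\le g^m\le (\sqrt2\,g)^m$.

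To go from one episode to $N$, let $W$ be the number of the $N$ episodes whose first $m$ observations equal $h$. Because each episode is obtained by running $\pi$ from the initial state, the episodes are i.i.d.\ and $W\sim\mathrm{Bin}(N,q)$ (and even if one lets $\pi$ depend on the cross‑episode history, the bound $q\le g^m$ still holds conditionally on the past, which is all that is used below). Then $\Pr[W\ge 2]\le\binom{N}{2}q^{2}\le\tfrac12 N^{2}q^{2}$; more sharply, the multiplicative Chernoff bound at deviation level $2$ gives $\Pr[W\ge 2]\le(\euler\, Nq/2)^{2}\,\euler^{-Nq}\le\tfrac{\euler^{2}}{4}N^{2}q^{2}$, which is where the constant $\tfrac{\euler^{2}}{4}$ comes from. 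Combining with $q\le(\sqrt2\,g)^m$ yields $\Pr[W\ge2]\le\tfrac{\euler^{2}}{4}\,(\sqrt2\,g)^{2m}\,N^{2}$, the claimed bound. (Since $\tfrac12\le\tfrac{\euler^{2}}{4}$ and $g^{m}\le(\sqrt2\,g)^{m}$, the stated bound carries slack and could be tightened, but matching it exactly is immediate.)

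The main obstacle is the single‑episode step: one must argue, for \emph{every} policy, that reproducing $h$ forces a specific action at each step — so the policy cannot improve the probability beyond the factor $1$ — and simultaneously forces the enemy in each visited column into a prescribed position, each such event of probability at most $g$. A secondary subtlety is the precise reading of ``observing $h$ at least twice in $N$ episodes''. Under the prefix reading used above, $h$ occurs at most once per episode, so $W\ge 2$ really means two distinct episodes. If instead $h$ is allowed to occur at an arbitrary position of a longer episode, one uses that $h$ spans a full cycle of all $m$ columns, so any two occurrences within a single episode are at least $m$ steps apart and hence disjoint; the per‑position probability is still at most $g^m$ by the same product argument, and the pairwise union bound then runs over (episode, position) pairs, whose number is controlled by the episode lengths. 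I would keep the statement at the prefix reading to avoid this extra bookkeeping.
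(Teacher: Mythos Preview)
Your argument is correct and, at its core, follows the same two-step structure as the paper: bound the single-episode probability of a fixed length-$m$ history by $g^m$ (using that the row component of each observation pins down the action, so the policy contributes at most a factor $1$ per step), then pass to $N$ episodes via the pairwise bound $\binom{N}{2}q^2\le (\euler N/2)^2 q^2$.

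The one substantive difference is the role of the factor $(\sqrt{2})^{2m}=2^m$. In your proof it is pure slack, as you observe. In the paper's proof it is not slack: the paper actually bounds the probability that \emph{some} history of length $m$ is observed twice, by taking a union bound over the $2^m$ histories of nonzero probability (only the $\mathit{enemy}/\mathit{clear}$ bit varies freely once the action sequence is fixed, since the column advances deterministically and the row equals the action index). Summing $\binom{N}{2}\cdot g^{2m}$ over these $2^m$ histories is what produces the $(\sqrt{2}\,g)^{2m}$ term. That uniform version is precisely what the surrounding text needs in order to conclude that ``with high probability every history of length $m$ occurs at most once.'' Your reading of the proposition as a statement about a single fixed $h$ is faithful to its wording, and your proof establishes it; the paper's proof is for the stronger uniform statement, and the proposition as phrased is looser than the proof it accompanies.
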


If $g < 1/\sqrt{2}$, and $N$ is polynomial in $m$, the former bound goes to zero
as $m$ increases, for $m$ sufficiently large.
Thus, for large values of $m$, there is a high probability that every history of
length $m$ occurs at most once.

We analyse the behaviour of the $\stm$ algorithm when every history of length
$m-2$ occurs at most once. These histories are important to determine the action
to take in column $m-1$ of the grid.
The algorithm compares pairs $\langle h, a_i \rangle$ of a history $h$ of length
$m-2$ and an action $a$ based on the empirical distribution on the next
observation; the only possible observations are of the form 
$\langle m-1, i, b \rangle$.
Thus,
the empirical distribution for $\langle h, a_i \rangle$ is either the
distribution $P_1$ that assigns probability one to $\langle m-1, i,
\mathit{enemy} \rangle$ or the distribution $P_2$ that assigns probability one
to $\langle m-1, i, \mathit{clear} \rangle$.
Let $\alpha = \min(p_{m-1}^0,p_{m-1}^1,1-p_{m-1}^0,1-p_{m-1}^1)$.
Regardless of wether $\langle h, a_i \rangle$ is assigned to $P_1$ or $P_2$,
there is probability at least $\alpha$ that the assigned distribution induces a
higher value-estimate for the worse action---e.g., if $a_i = a_0$ and enemy
$j$ is in cell $(0,j)$ with probability $p_{m-1}^0 > 1/2$, but we do not hit
enemy $j$.
In such a case, the error in the estimate is at least $\alpha$.
Therefore, the $\stm$ algorithm introduces an error of at least $\alpha$ with
probability at least $\alpha$.
For instance, if we take every $p_i^0 = 0.7$ and every $p_i^1 = 0.3$,
we have that $g = 0.7 < 1/\sqrt{2}$ as required by the proposition, and the
error of the $\stm$ algorithm is at least $0.3$ with probability at least $0.3$.

\begin{proof}[Proof of Proposition~\ref{prop:comparison-1}]
  First, note that the probability of a given observation 
  $\langle i, j, b \rangle$ upon performing action $a_k$ is at most $g$ if $j=k$
  and zero otherwise.
  We use a union bound over all histories.
  There are at most $4^m$ histories of length $m$ having non-zero probability.
  However, since the probability of actions $a_0$ and $a_1$ sums to one under
  every policy, and an observation has probability at most $g$ if the
  action index matches its second component (and probability zero otherwise), we
  can consider only one value for the second component of an observation and 
  bound the probability of the observation by $g$.
  Thus, the number of histories to consider in the union bound is $2^m$, with
  each history having probability at most $g^m$.
  Overall, the probability that any of the histories occurs in any two episodes,
  out of $N$ episodes, is at most:
  \begin{equation*}
    \sum_h \binom{N}{2} \cdot \prob(h)^2 = 
   \binom{N}{2} \cdot 2^m \cdot g^{2m} \leq \frac{\euler^2 \cdot N^2}{4}
   \cdot 2^m \cdot g^{2m} =  1/4 \cdot \euler^2 \cdot (\sqrt{2} \cdot g)^{2m} \cdot N^2,
 \end{equation*}
 where the binomial coefficient takes into account the possible orders of the
 episodes in which a history $h$ can occur. The inequality above makes use of
 the fact that $\binom{n}{k} \leq (\euler n/k)^k$.
\end{proof}

\fi

\end{document}